
\documentclass[nohyperref]{article}

\usepackage{microtype}
\usepackage{graphicx}
\usepackage{subfigure}
\usepackage{booktabs} 

\usepackage{hyperref}


\usepackage[accepted]{icml2022}

\usepackage{amsmath}
\usepackage{amssymb}
\usepackage{mathtools}
\usepackage{amsthm}

\usepackage[capitalize,noabbrev]{cleveref}

\input{macros}


\usepackage[textsize=tiny]{todonotes}

\icmltitlerunning{Distribution Regression with Sliced Wasserstein Kernels}

\begin{document}

\twocolumn[
\icmltitle{Distribution Regression with Sliced Wasserstein Kernels}




\begin{icmlauthorlist}
\icmlauthor{Dimitri Meunier}{gatsby}
\icmlauthor{Massimiliano Pontil}{iit,ucl}
\icmlauthor{Carlo Ciliberto}{ucl}
\end{icmlauthorlist}

\icmlaffiliation{gatsby}{Gatsby Computational Neuroscience Unit, University College London, London, United Kingdom}
\icmlaffiliation{ucl}{Department of Computer Science, University College London, London, United Kingdom}
\icmlaffiliation{iit}{Italian Institute of Technology, Genoa, Italy}

\icmlcorrespondingauthor{Dimitri Meunier}{dimitri.meunier.21@ucl.ac.uk}

\icmlkeywords{Machine Learning, ICML, Kernel Methods}

\vskip 0.3in
]



\printAffiliationsAndNotice{}  

\begin{abstract}
The problem of learning functions over spaces of probabilities -- or distribution regression -- is gaining significant interest in the machine learning community. A key challenge behind this problem is to identify a suitable representation capturing all relevant properties of the underlying functional mapping. A principled approach to distribution regression is provided by kernel mean embeddings, which lifts kernel-induced similarity on the input domain at the probability level. This strategy effectively tackles the two-stage sampling nature of the problem, enabling one to derive estimators with strong statistical guarantees, such as universal consistency and excess risk bounds. However, kernel mean embeddings implicitly hinge on the maximum mean discrepancy (MMD), a metric on probabilities, which may fail to capture key geometrical relations between distributions. In contrast, optimal transport (OT) metrics, are potentially more appealing. In this work, we propose an OT-based estimator for distribution regression. We build on the Sliced Wasserstein distance to obtain an OT-based representation. We study the theoretical properties of a kernel ridge regression estimator based on such representation, for which we prove universal consistency and excess risk bounds. Preliminary experiments complement our theoretical findings by showing the effectiveness of the proposed approach and compare it with MMD-based estimators.
\end{abstract}

\vfill\null

\section{Introduction}
\label{intro}
Distribution regression studies the problem of learning a real-valued function defined on the space of probability distributions from a sequence of input/output observations.
A peculiar feature of this setting is that, typically, the input distributions are not observed directly but only accessed through finite samples, adding complexity to the problem \citep{poczos2013distribution}. A typical example in the literature \citep[see][]{szabo2016learning, fang2020optimal} is the task of predicting some health indicator of patients, labeled by $t$, based on their distribution $\P_t$ over results of clinical investigations, e.g. blood measurements. The underlying distribution of test results for each patient is observed only through repeated medical tests $x_{t,i} \sim \P_t$, $i=1,\ldots,n_t$, forming an approximation of $\P_t$ through $\hat\P_{t,n_t} = \frac{1}{n_t}\sum_{i=1}^{n_t} \delta_{x_i}$. Given labeled examples $(\hat\P_{t,n_t}, y_t)_{t=1}^T$, where $y_t$ is the health indicator, we seek to learn an accurate mapping $f: \hat\P_{n} \to y$ for a new patient identified by $\P$. Other examples of interest for the machine learning community include the task of predicting summary statistics of distributions such as the entropy or the number of modes \citep{oliva2014fast} or for conditional meta-learning \cite{denevi2020advantage}.

Historically, distribution regression has been tackled by means of a two-stage procedure, that build upon the kernel mean embedding machinery \citep{smola2007hilbert, muandet2017kernel}. A first positive definite (p.d.) kernel \citep{smola1998learning} defined on the underlying data space is used to embed the distribution $\hat\P_{t,n_t}$ into an Hilbert space, and then a second kernel is used as a similarity measure between the embedded distributions to perform Kernel Ridge Regression (KRR). This approach has been theoretically investigated in \citet{szabo2016learning} and further analyzed in \citet{fang2020optimal}. More recently, a SGD variant \cite{mucke2021stochastic} and a robust variant \cite{yu2021robust} have been suggested. 

The two stages above allow to build a p.d. kernel on the space of probability distributions and provide an example of distance substitution kernel, a class of kernels that can be built from distances on metric spaces \citep{haasdonk2004learning}. In that particular setting the chosen distance is the Maximum Mean Discrepancy (MMD), a metric on probability distributions \cite{gretton2012kernel}. A main theme of this paper is that distance substitution kernels based on optimal transport (with the Wasserstein distance) might be preferable over MMD-based ones, as they may better capture the geometry of the space of distributions \cite{peyre2019computational}.

The above observations lead us to introduce a Wasserstein-based p.d. kernel for distribution regression. As the original Wasserstein distance cannot be used to build a p.d. kernel on the space of distributions our construction exploits the Sliced-Wasserstein (SW) distance \cite{bonneel2015sliced}, a computationally cheapest alternative to the standard Wasserstein distance. SW is raising significant interest in the machine learning community due to its attractive topological and statistical properties \citep{nadjahi2019asymptotic, nadjahi2020statistical, nadjahi2021sliced} and
its strong numerical performances \cite{kolouri2018sliced, kolouri2019generalized, deshpande2018generative, deshpande2019max, lee2019sliced} and references therein.

A SW-based kernel was first introduced in \citet{kolouri2016sliced}, but their construction was
restricted to the subset of absolutely continuous distributions. Hence, this approach is not 
suitable for the distribution regression problem where we need to handle empirical distributions of the form $\hat\P_{t,n}$. In parallel, a SW-based kernel has been developed for persistence diagrams in topological data analysis \citet{carriere2017sliced}. \citet{bachoc2017gaussian, bui2018distribution} defined a p.d. kernel based on the Wasserstein distance valid for any probability measure defined on $\R$; while \citet{bachoc2020gaussian} used a similar extension to ours to $\R^r$, $r \geq 1$ focusing on Gaussian process distribution regression. In this paper however, we focus on statistical properties of the estimator within a kernel method setting.

Finally, there are deep learning heuristic methods to tackle the problem of learning from distributions such as deep sets \citet{zaheer2017deep} and references therein. While they might perform well in practice, they are harder to analyse theoretically and fall outside the scope of kernel methods which is the focus of this present work.

\noindent {\bf Contributions.}
~
This paper provides four main contributions: 1) we present a Wasserstein-based p.d. kernel valid for any probability distribution on $\R^r$; 2) we show that such kernel is universal in the sense of \citet{christmann2010universal}, allowing us to asymptotically learn any distribution regression function; 3) we provide excess risk bounds for the KRR estimator based on such kernels, which implies its consistency; 4) finally, we provide empirical evidences that the SW kernels behave better than MMD kernels on a number of distribution regression tasks. 

\noindent {\bf Paper organisation.}
~
The rest of the paper is organized as follows: \cref{sec:dist_regr} introduces the distribution regression setting and \cref{sec:dist_subst_kernels} reviews distance substitution kernels as a way to approach such problems. \cref{section:SW} shows that the SW distance can be used to obtain universal distance substitution kernels. \cref{sec:excess_risk} characterizes the generalization properties of distribution regression estimators based on distance substitution kernels. \cref{sec:exp} reports empirical evidences supporting the adoption of SW kernels in practical applications. 

\section{Distribution regression} \label{sec:dist_regr}
\noindent {\bf Tools and notations.}~
By convention, vectors $v\in\R^r$ are seen as $r\times 1$ matrices and $\|v\|$ denotes their Euclidean norm. $A^{\top}$ denotes the transpose of any $r \times k$ matrix $A$, $I_r \in \R^{r \times r}$ the identity matrix and $I$ the identity operator. For a measurable space $(\Omega,\cA)$ we denote by $\cM_+^1(\Omega)$ the space of probability measures on $(\Omega,\cA)$. Throughout most of the paper we use $\cM_+^1(\Omega_r)$ with $\Omega_r$ a compact subset of $\R^r$. For two measurable spaces $(\Omega,\cA), (\cX,\cB)$ and a measurable map $T: \Omega \to \cX$, the push-forward operator $T_{\#}: \cM_+^1(\Omega) \to \cM_+^1(\cX)$ is defined such that for all $\alpha \in \cM_+^1(\Omega)$ and for all measurable set $B \in \cB$, $T_{\#}(\alpha)(B) = \alpha(T^{-1}(B))$. $S^{r-1} = \{x \in \R^r \mid \|x\|_2 = 1\}$ denotes the unit sphere in $\R^r$. $\lambda^r$ is the Lebesgue measure on $\R^r$ and $\sigma^r$ is the uniform probability measure on the unit sphere, we use $dx$ instead of $\lambda^r(dx)$ and $d\theta$ instead of $\sigma^r(d\theta)$ when there is no ambiguity. For a vector $\theta \in S^{r-1}$, $\theta^*: \R^r \to \R, x \mapsto \langle x, \theta \rangle$. For a univariate probability measure $\al \in \cM_+^1(\R)$, we denote by $F_{\alpha}$ its cumulative distribution function, $F_{\alpha}(x) = \alpha((-\infty,x]) \in [0,1]$, by $F_{\alpha}^{[-1]}$ its generalised inverse cumulative distribution function, $F_{\alpha}^{[-1]}(x) = \inf\{t \in \R : F_{\alpha}(t) \geq x\}$,  and by $F_{\alpha}^{-1}$ its inverse cumulative distribution function, if it exists. On a measure space $(\Omega, \cA, \mu)$ we denote by $L^{2}(\Omega,\mu)$, abbreviated $L^{2}_{\mu}$ if the context is clear, the Hilbert space of square-integrable functions with respect to $\mu$ from $\Omega$ to $\R$ with norm $\|\cdot\|_{L^{2}_{\mu}}$ induced by the inner product $\langle f, g\rangle_{L^{2}_{\mu}}=\int_{\Omega} f(w) g(w) d\mu(w)$. Finally, for a compact set $\Omega$, $\cU(\Omega)$ denotes the uniform distribution on $\Omega$.

\noindent {\bf Problem set-up.}~
Linear regression considers Euclidean vectors as covariate while kernel regression extends the framework to any covariate space $X$ on which we can define a p.d. kernel $K: X \times X \to \R$ \citep{hofmann2008kernel}. In distribution regression, $X$ is a space of probability measures with the additional difficulty that the covariates $\P \in X$ cannot be directly observed \citep{poczos2013distribution}. Each $\P \in X$ is only accessed through i.i.d. samples $\{x_i\}_{i=1}^n \sim \P$, which forms a noisy measurement of $\P$. The formal problem set up is as follows. $X:=\cM_+^1(\Omega_r)$ is a space of probability distributions, we refer to $X$ as the input space and $\Omega_r$ as the (compact) data space. $Y \subseteq \R$ is a real subset and $\rho$ is an unknown distribution on $Z:= X \times Y$. $\rho_X$ is its marginal distribution on $X$ and $\rho_{Y \mid X}$ is its conditional distribution: $\rho(\P,y) = \rho_{Y \mid X}(y \mid \P)\rho_{X}(\P)$. The goal is to find a measurable map $f:X \to Y$ with low expected risk
\begin{equation}
    \cE(f) = \int_Z (f(\P) - y)^2 d\rho(\P,y).
\end{equation}
The minimizer of this functional (see \cref{prop:a_bayes_f}) is the regression function
\begin{equation}
    f_{\rho}(\P) = \int_Y y d\rho_{Y \mid X}(y \mid \P), \qquad \forall ~\P \in X,
\end{equation}
which is not accessible since $\rho$ is unknown. To approximate $f_{\rho}$ we draw a set of input-label pairs $\cD = (\P_t,y_t)_{t=1}^T \sim^{i.i.d} \rho$. We call this the \textbf{first stage sampling}. If we were in a standard regression setting, i.e. the inputs $\P_t$ were directly observable, given a p.d. kernel $K$ on $X$, one approach would be to apply KRR by minimizing the regularized empirical risk:
\begin{equation} \label{eq:risk1}
    \cE_{T,\lambda}(f) = \frac{1}{T}\sum_{t=1}^T \left(f\left(\P_t\right) - y_t\right)^2 + \lambda \|f\|_{\cH_K}^2
\end{equation}
where $\cH_K$ is te Reproducing Kernel Hilbert Space (RKHS) induced by $K$. We refer to this quantity as the \emph{first stage empirical risk} and denote its minimizer $f_{\cD,\lambda}$. The difference with standard regression comes from the \textbf{second stage of sampling} were instead of directly observing the covariates we receive for each input a bag of samples: $\hat{\cD} = \{((x_{t,i})_{i=1}^n,y_t)\}_{t=1}^T$, $(x_{t,i})_{i=1}^n \sim^{i.i.d} \P_t$. For ease of notations, we assume without loss of generality that each bag has the same number of points $n$. The strategy is then to plug the empirical distributions $\hat{\P}_{t,n} = \frac{1}{n}\sum_{i=1}^n \delta_{x_{t,i}}$ in the first stage risk leading to the \emph{second stage risk}: 
\begin{equation} \label{eq:risk2}
    \cE_{T,n,\lambda}(f) = \frac{1}{T}\sum_{t=1}^T \left(f\left(\hat{\P}_{t,n}\right) - y_t\right)^2 + \lambda \|f\|_{\cH_K}^2.
\end{equation}
We denote by $f_{\hat{\cD},\lambda}$ its minimizer.

\noindent {\bf Kernel distribution regression.}~
Equipped with a p.d. kernel $K: X \times X \to \R$, minimizing $\cE_{T,n,\lambda}$ over $\cH_{K}$ leads to a practical algorithm. For a new test point $\P \in X$,
\begin{equation} \label{eq:sol_KRR}
    f_{\hat{\cD},\lambda}(\P) = (y_1, \ldots, y_T)(K_T + \lambda T I_T)^{-1}k_{\P}
\end{equation}
where $[K_T]_{t,l} = K(\hat{\P}_{t,n},\hat{\P}_{l,n}) \in \R^{T \times T}$ and \\ $k_{\P} = (K(\P, \hat{\P}_{t,n}), \ldots, K(\P, \hat{\P}_{T,n}))^{\top} \in \R^T$ (see \cref{prop:a_KRR}). Let us note that in practice the new test points will also have the form of empirical distributions $\hat\P_n = \frac{1}{n}\sum_{i=1}^n \delta_{x_{i}}$. The algorithm will differ by which kernel is used on $X$.

\section{Distance substitution kernels} \label{sec:dist_subst_kernels}
In this section, we derive two types of kernels that can be defined on $X$. We approach this through the prism of distance substitution kernels. Let us assume that we have a set $M$ on which we want to define a p.d. kernel and we are given a pseudo-distance $d$ on $M$. Pseudo means that $d$ might not satisfy $d(x,y)=0 \implies x=y$ for all $(x,y) \in M^2$. One could think of using $d$ to define a p.d kernel on $M$ by substituting $d$ to the Euclidean distance for usual Euclidean kernels \citep{haasdonk2004learning}. For example,
\begin{equation} \label{eq:gauss_like}
    K_{\rm Gauss}(x,y) = e^{-\gamma d(x,y)^2},
\end{equation}
(with $x,y\in M$) generalizes the well-known Gaussian kernel $K_{\rm Gauss}(x,y) = e^{-\gamma \|x-y\|^2}$ (with $x,y\in\R^r$). Analogously, given an origin point $x_0 \in M$, the kernel
\begin{equation}
    \langle x, y \rangle_d^{x_0} := \frac{1}{2} \Big( d(x,x_0)^2 + d(y,x_0)^2 - d(x,y)^2  \Big),
\end{equation}
generalizes the linear kernel (with origin in $0$)
\begin{equation}
    \langle x, y \rangle = \frac{1}{2} \Big( \|x\|^2 + \|y\|^2 - \|x-y\|^2 \Big).
\end{equation}
However, this type of substitution does not always lead to a valid kernel. In fact, the class of distances such that $K_{\rm Gauss}$ and $\langle \cdot, \cdot \rangle_d^{x_0}$ are p.d. is completely characterised by the notion of Hilbertian (pseudo-)distance \citep{hein2005hilbertian}. A pseudo-distance $d$ is said to be Hilbertian if there exists a Hilbert space $\cF$ and a feature map $\Phi: \cX \to \cF$ such that $d(x,y) = ||\Phi(x)-\Phi(y)||_{\cF}$ for all $x,y \in M$. The functions $K_{\rm Gauss}$ and $\langle \cdot,\cdot \rangle_d^{x_0}$ are p.d. kernels if and only if the corresponding $d$ is Hilbertian. We refer to such kernels as \emph{distance substitution kernels} and we denote them as $K(d)$ (Proposition \ref{prop:app_d_subst} in the Appendix provides more details on distance substitution kernels).
\begin{remark}
    An Hilbertian pseudo-distance $d$ is a proper distance if and only if $\Phi$ is injective. 
\end{remark}
\begin{remark}
    For an Hilbertian distance $d$, $\langle x, y \rangle_d^{x_0} = \langle \Phi(x) - \Phi(x_0), \Phi(y) - \Phi(x_0) \rangle_{\cF}$.
\end{remark}
Hence defining p.d. kernels on $M=X$ boils down to find Hilbertian distances on $X$. Below we give two examples of such distances. 

\noindent {\bf MMD-based kernel.}~
For a bounded kernel $K_{\Omega}$ defined on the data space $\Omega_r$ and its RKHS $\cH_\Omega$, the Maximum Mean Discrepancy (MMD) pseudo-distance is for all $(\P,\Q) \in X^2$
\begin{equation}
    d_{\rm MMD}(\P, \Q) = \|\mu_{K_{\Omega}}(\P) - \mu_{K_{\Omega}}(\Q))\|_{\cH_{\Omega}},
\end{equation}
where $\mu_{K_{\Omega}}: \P \mapsto \int_{\Omega_r} K_{\Omega}(x,\cdot)d\P(x) \in \cH_{\Omega}$ is the kernel mean embedding operator \citep{berlinet2011reproducing}. $d_{\rm MMD}$ is by construction Hilbertian with $\cF = \cH_{\Omega} $ and $\Phi = \mu_{K_{\Omega}}$. Note that using MMD to build a kernel on $X$ requires two kernels, a kernel on $\Omega_r$ to build $d_{\rm MMD}$ and the outer kernel $K(d_{\rm MMD})$. Previous theoretical studies of distribution regression focused exclusively on MMD-based kernels \citep{szabo2016learning,fang2020optimal}. We will show that the established theory for MMD-based distribution regression can be extended to a larger class of distance substitution kernels.

\begin{remark}
    $d_{\rm MMD}$ is a proper distance if and only if $K_{\Omega}$ is characteristic, i.e. $\mu_{K_{\Omega}}$ is injective.
\end{remark}
\noindent {\bf Fourier Kernel.}~
Let $\P \in X$, its Fourier transform $F\left(\P\right)$ is defined by $F\left(\P\right)(t) = \int e^{i\langle x,t \rangle}d\P(x)$, $t \in \R^r$. In \citet{christmann2010universal} the distance
\begin{equation}
    d_F(\P,\Q) = \| F\left(\P\right) - F\left(\Q\right) \|_{L^2_{\mu}}
\end{equation}
was considered, where $\mu$ is a finite Borel measure with support over $\R^r$. $d_F$ is clearly Hilbertian and is a proper distance since the Fourier transform is injective. 

\noindent {\bf Universality of Gaussian-like kernels.}~
Kernels of the form \cref{eq:gauss_like} with both the MMD and the Fourier distances can be shown to be universal for $X$ equipped with the topology of the weak convergence in probability\footnote{$\left(\P_{n}\right)_{n \in \mathbb{N}} \in X^{\N}$ is said to converge weakly to $\P \in X$ if for any continuous function $f: \Omega_r \rightarrow \mathbb{R}$ (recall that $\Omega_r$ is compact), $\lim _{n \rightarrow+\infty} \int f \mathrm{~d} \P_{k}=\int f \mathrm{~d} \P.$}. 
\begin{definition}
  A continuous kernel $K$ on a compact metric space $(\cX,d_{\cX})$ is called universal if the RKHS $\cH_K$ of $K$ is dense in $C(\cX,d_{\cX})$, i.e. for every function $g: \cX \to \R$ continuous with respect to $d_{\cX}$ and all $\varepsilon>0$ there exists an $f \in \cH_K$ such that $\|f-g\|_{\infty} := \sup_{x \in X}|f(x) - g(x)| \leq \varepsilon$.
\end{definition}
Universality is an important property that indicates when a RKHS is powerful enough to approximate any continuous function with arbitrarily precision. We recall here a fundamental result regarding universality that we will use in the next section. 
\begin{theorem}[\citet{christmann2010universal}, Theorem 2.2] \label{th:steinwart}
    On a compact metric space $(\cX,d_{\cX})$ and for a continuous and injective map $\Phi: \cX \to \cF$ where $\cF$ is a separable Hilbert space, $K(x,y) = e^{-\gamma\|\Phi(x) - \Phi(x')\|_{\cF}^2}$ is universal.
\end{theorem}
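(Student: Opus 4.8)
The plan is to reduce the statement to the universality of the ordinary Gaussian kernel on a compact subset of $\cF$, and then to establish the latter by a Stone--Weierstrass argument applied not to the RKHS itself but to an auxiliary \emph{algebra} of exponentials. First I would observe that a continuous injective map out of a compact space is a homeomorphism onto its image, so $S := \Phi(\cX)$ is a compact subset of $\cF$ and $\Phi : \cX \to S$ is a homeomorphism. Writing $\tilde K(u,v) = e^{-\gamma\|u-v\|_{\cF}^2}$ for the Gaussian kernel on $S$, one has $K = \tilde K \circ (\Phi \times \Phi)$, and the standard pull-back identity for reproducing kernels gives $\cH_K = \{ f \circ \Phi : f \in \cH_{\tilde K}\}$ with $f \mapsto f\circ\Phi$ an isometry. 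Since $\Phi$ is a homeomorphism, $g \mapsto g \circ \Phi$ is also an isometric isomorphism of $C(S)$ onto $C(\cX, d_\cX)$; hence $\cH_K$ is dense in $C(\cX,d_\cX)$ if and only if $\cH_{\tilde K}$ is dense in $C(S)$. So it suffices to prove universality of $\tilde K$ on the compact set $S$.

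The key difficulty is that $\cH_{\tilde K}$ is only a linear space, not an algebra, so Stone--Weierstrass cannot be applied to it directly. I would circumvent this using the factorisation $\tilde K(u,v) = e^{-\gamma\|u\|^2} e^{-\gamma\|v\|^2} e^{2\gamma\langle u,v\rangle_{\cF}}$. Define $\cA := \operatorname{span}\{\, u \mapsto e^{2\gamma\langle u,v\rangle_{\cF}} : v \in \cF \,\}$ as a subspace of $C(S)$. This $\cA$ \emph{is} an algebra: products of generators satisfy $e^{2\gamma\langle u,v_1\rangle}e^{2\gamma\langle u,v_2\rangle} = e^{2\gamma\langle u,v_1+v_2\rangle}$, it contains the constants (take $v=0$), and it separates points of $S$ because for $u_1 \ne u_2$ the choice $v = u_1 - u_2$ gives $\langle u_1, v\rangle - \langle u_2, v\rangle = \|u_1-u_2\|^2 > 0$. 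The Stone--Weierstrass theorem then shows $\cA$ is dense in $C(S)$.

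It remains to transfer this density to $\cH_{\tilde K}$. From the factorisation, each generator satisfies $e^{2\gamma\langle u, v\rangle_{\cF}} = e^{\gamma\|v\|^2} e^{\gamma\|u\|^2}\, \tilde K(u,v)$, so the dense algebra $\cA$ equals $e^{\gamma\|\cdot\|^2}\,\cH_0$, where $\cH_0 := \operatorname{span}\{\tilde K(\cdot,v) : v \in \cF\}$ is a dense subspace of $\cH_{\tilde K}$. Now fix $g \in C(S)$ and $\varepsilon>0$. Since $e^{\gamma\|\cdot\|^2} g \in C(S)$ and $\cA$ is dense, choose $b \in \cA$ with $\|b - e^{\gamma\|\cdot\|^2} g\|_\infty \le \varepsilon$. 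Then $e^{-\gamma\|\cdot\|^2} b \in \cH_0 \subseteq \cH_{\tilde K}$ and, using $e^{-\gamma\|u\|^2}\le 1$, we get $\|e^{-\gamma\|\cdot\|^2}b - g\|_\infty \le \varepsilon$. Hence $\cH_{\tilde K}$ is dense in $C(S)$, which completes the argument. The step I expect to require the most care is the reduction in the first paragraph: verifying that $\Phi$ is a homeomorphism onto its image (using compactness of $\cX$) and that the reproducing-kernel pull-back identity holds, so that density can be transported between $C(\cX,d_\cX)$ and $C(S)$; the compactness of $S$ in $\cF$ -- needed for Stone--Weierstrass and to ensure $e^{\pm\gamma\|\cdot\|^2}$ are bounded continuous functions on $S$ -- follows as the continuous image of the compact space $\cX$.
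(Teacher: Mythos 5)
You should first note that the paper itself does not prove this statement: it is imported verbatim as Theorem 2.2 of \citet{christmann2010universal}, so there is no internal proof to compare against, and the fair comparison is with the original reference. Your argument is essentially correct and shares the original's skeleton---reduce to the compact image $S=\Phi(\cX)$ via the homeomorphism/pull-back step (which is standard and you handle correctly: a continuous injection from a compact space is a homeomorphism onto its image, the pull-back $f\mapsto f\circ\Phi$ is an isometric isomorphism $\cH_{\tilde K}\to\cH_K$, and composition with $\Phi$ is an isometric isomorphism $C(S)\to C(\cX,d_\cX)$), factor the Gaussian as $e^{-\gamma\|u\|^2}e^{-\gamma\|v\|^2}e^{2\gamma\langle u,v\rangle_{\cF}}$, and invoke Stone--Weierstrass---but the implementation differs. \citet{christmann2010universal} prove a general theorem for Taylor-type kernels $\sum_n a_n\langle u,v\rangle^n$ with all $a_n>0$, apply Stone--Weierstrass to an algebra of polynomials in the coordinates $\langle\Phi(\cdot),w\rangle$, and dispose of the prefactors via a lemma that multiplying a universal kernel by $g(x)g(x')$ with $g$ continuous and strictly positive preserves universality. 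Your variant applies Stone--Weierstrass directly to the algebra of exponentials $e^{2\gamma\langle\cdot,v\rangle}$ (correctly verified: products of generators are generators, $v=0$ gives constants, $v=u_1-u_2$ separates) and transfers density by multiplying by $e^{-\gamma\|\cdot\|^2}\le 1$; this is shorter, avoids the Taylor-kernel machinery, and as a bonus never uses separability of $\cF$ (harmless, since one may always replace $\cF$ by $\overline{\operatorname{span}}\,\Phi(\cX)$).

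One step needs a line of justification. Your approximants are finite combinations of $\tilde K(\cdot,v_i)$ with centers $v_i\in\cF$ that in general lie \emph{outside} $S$ (your separating vector $v=u_1-u_2$ already does, and Stone--Weierstrass gives you no control over the $v_i$), yet you assert without proof that $\cH_0=\operatorname{span}\{\tilde K(\cdot,v):v\in\cF\}$ sits inside $\cH_{\tilde K}$, the RKHS of the kernel restricted to $S\times S$. This is true but not automatic. First, $e^{-\gamma\|u-v\|^2}$ is positive definite on all of $\cF$, since $\|u-v\|$ is trivially Hilbertian on $\cF$ and one applies \cref{prop:app_d_subst} with $\beta=1$ (equivalently, Schoenberg's theorem as in \cref{sec:a_B}); second, the RKHS restriction theorem of Aronszajn gives $\cH_{\tilde K}(S)=\{F|_S : F\in\cH_{\tilde K}(\cF)\}$, so $\tilde K(\cdot,v)|_S\in\cH_{\tilde K}(S)$ for every $v\in\cF$. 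With that inserted the proof is complete---note also that you only need \emph{membership} of $\cH_0$ in $\cH_{\tilde K}$ for the final $\varepsilon$-argument, not the density of $\cH_0$ that you claim in passing.
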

For $\cX=X$ and $d_{P}$ the Prokhorov distance, which induced the topology describing the weak convergence of probability measures, \citet{christmann2010universal} uses this result to show that MMD (when the inner kernel is characteristic) and Fourier based Gaussian-like kernels are universal.

\section{Sliced Wasserstein Kernels} \label{section:SW}
We introduce the tools needed to define the Sliced Wasserstein kernels and give the first proof that SW Gaussian-like kernels are universal. Arising from the field of optimal transport \cite{villani2009optimal}, the Wasserstein metric is a natural distance to compare probability distributions.

\noindent {\bf Wasserstein distance.}~ For $p \geq 1$, the $p-$Wasserstein distance on $X=\cM_+^1(\Omega_r)$ is
\begin{equation}
  d_{W_{p}}(\P, \Q)^p := \inf_{\pi \in \mathcal{U}(\P, \Q)} \int_{\R^r \times \R^r} \|x - y \|^p_2 \mathrm{d} \pi(x, y),
\end{equation}
where $\mathcal{U}(\P, \Q) := \{\pi \in X \otimes X \mid P^1_{\#} \pi=\P, P^2_{\#} \pi=\Q \}$ and
$P^{i}: \R^r \times \R^r \to \R^r, (x_1,x_2) \mapsto x_i$, $i=1,2$. 

\noindent {\bf One dimensional optimal transport.}~ 
The Wasserstein distance usually cannot be computed in closed form. For empirical distributions, it requires to solve a linear program that becomes significantly expensive in high-dimension. However, when $r=1$, the following closed form expression holds \citep[][Proposition 2.17]{santambrogio2015optimal}
\begin{equation}
    d_{W_{p}}(\P, \Q) = \|F_{\P}^{[-1]}-F_{\Q}^{[-1]} \|_{L^p(0,1)},
\end{equation}
where $L^p(0,1)$ if the set of $p-$integrable functions w.r.t the Lebesgue measure on $(0,1)$. If $\P$ and $\Q$ are empirical distributions, this can be handily computed by sorting the points and computing the average distance between the sorted samples; see \cref{sec:a_prac_sw} for more details. In what follows, we will either consider $p=1$ or $p=2$.

{\bf Sliced Wasserstein distance.}~ The closed-form expression of the one-dimensional Wasserstein distance can be lifted to $r>1$ using the mechanism of sliced divergences \citet{nadjahi2020statistical}. The Sliced Wasserstein (SW) distance is defined as,
\begin{equation} \label{eq:sw_p}
  d_{{\rm SW}_{p}}(\P, \Q)^p =\int_{\mathbb{S}^{r-1}}\hspace{-.15truecm} d_{W_{p}}{\left(\theta^*_{\#}\P, \theta^*_{\#}\Q \right)}^p d\theta.
\end{equation}
The distance averages the one-dimensional distances by projecting the data in every direction $\theta\in\mathbb{S}^{r-1}$. In practice, the integral over $S^{r-1}$ is approximated by sampling directions uniformly, see \cref{sec:a_prac_sw}. $d_{{\rm SW}_p}$ was introduced in \citet{bonneel2015sliced} in the context of Wasserstein barycenters and has been heavily used since as a computationally cheaper alternative to the Wasserstein distance. In addition to keeping a closed form expression, $d_{{\rm SW}_p}$ inherits useful topological properties from $d_{{\rm W}_p}$: it defines a distance on $\cM_+^1(\Omega_r)$ that implies the weak convergence of probability measures \citep{nadjahi2019asymptotic,nadjahi2020statistical}. More  importantly, $d_{{\rm SW}_1}$ and $d_{{\rm SW}_2}$ can be used to build a p.d. kernel on $\cM_+^1(\Omega_r)$, while it would not be possible with the original Wasserstein distance as the induced Wasserstein space has curvature \citet{feragen2015geodesic}. Indeed, we now show that $\sqrt{d_{{\rm SW}_1}}$ and $d_{{\rm SW}_2}$ are Hilbertian distances. Leveraging on the observations in \cref{sec:dist_subst_kernels} we conclude that $e^{-\gamma {\rm SW}_2^2(\cdot,\cdot)}$ and $e^{-\gamma {\rm SW}_1(\cdot,\cdot)}$ are valid kernels on probability measures.

{\bf ${\rm SW}_2$-kernel.}~ To show that $d_{{\rm SW}_2}$ is Hilbertian we build an explicit feature map and feature space. 

\begin{proposition}\label{prop:dsw2-is-hilbertian}
$d_{{\rm SW}_2}$ is Hilbertian. 
\end{proposition}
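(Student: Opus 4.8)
The plan is to realise $d_{{\rm SW}_2}$ directly as a distance in an explicit Hilbert space by stacking the one-dimensional quantile representation over all slicing directions; the Hilbertian property then holds by definition. I would take the feature space
\[
  \cF = L^2\big(S^{r-1}\times(0,1),\ \sigma^r\otimes\la^1\big),
\]
the square-integrable functions on the product of the sphere with the unit interval ($\la^1$ denoting the Lebesgue measure on $(0,1)$), and define the feature map $\Phi\colon X \to \cF$ by
\[
  \Phi(\P)(\theta,t) = F^{[-1]}_{\theta^*_{\#}\P}(t),
\]
the generalised quantile function of the projection $\theta^*_{\#}\P$. Granting that $\Phi$ is well defined, the one-dimensional closed-form expression recalled above, applied slice-by-slice, together with Fubini's theorem gives
\begin{align*}
  \|\Phi(\P)-\Phi(\Q)\|_{\cF}^2
  &= \int_{S^{r-1}}\!\!\int_0^1 \big(F^{[-1]}_{\theta^*_{\#}\P}(t)-F^{[-1]}_{\theta^*_{\#}\Q}(t)\big)^2\,dt\,d\theta \\
  &= \int_{S^{r-1}} d_{W_2}(\theta^*_{\#}\P,\theta^*_{\#}\Q)^2\,d\theta
   = d_{{\rm SW}_2}(\P,\Q)^2,
\end{align*}
where the last equality is exactly \cref{eq:sw_p}. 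Taking square roots yields $d_{{\rm SW}_2}(\P,\Q)=\|\Phi(\P)-\Phi(\Q)\|_{\cF}$, so $(\cF,\Phi)$ witnesses the claim.

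The step requiring care is checking that $\Phi$ actually takes values in $\cF$, which has two parts: square integrability and joint measurability of $(\theta,t)\mapsto F^{[-1]}_{\theta^*_{\#}\P}(t)$. Integrability is free from compactness of $\Omega_r$: if $\|x\|\le R$ on $\Omega_r$, then $|\theta^*(x)|=|\langle x,\theta\rangle|\le R$, so every projected measure $\theta^*_{\#}\P$ is supported in $[-R,R]$ and its quantile function is bounded by $R$; since $\sigma^r\otimes\la^1$ is a probability measure this gives $\|\Phi(\P)\|_{\cF}\le R<\infty$, and the same uniform bound legitimises the interchange of integrals above. For measurability I would invoke the Galois identity $F^{[-1]}_{\al}(t)>s \iff F_{\al}(s)<t$ to rewrite the superlevel sets of $\Phi(\P)$ in terms of $(\theta,t)\mapsto F_{\theta^*_{\#}\P}(s)$; the latter equals $\int \mathbf{1}[\langle x,\theta\rangle\le s]\,d\P(x)$, which is jointly measurable in $(\theta,x)$ and hence, by Fubini, measurable in $\theta$, delivering joint measurability of $\Phi(\P)$.

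I expect this well-definedness bookkeeping, rather than the isometry identity, to be the bulk of the argument and its only genuine subtlety: once joint measurability and the interchange of integrals are dispatched, the statement is a one-line consequence of plugging the one-dimensional closed form into the definition of the sliced distance, with no new analytic estimate needed. A minor additional remark worth recording is that $\Phi$ is in fact injective (since a measure on a compact subset of $\R^r$ is determined by the quantile functions of all its projections), which by the first remark in \cref{sec:dist_subst_kernels} shows $d_{{\rm SW}_2}$ is a proper distance, not merely a pseudo-distance.
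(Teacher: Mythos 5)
Your proof is correct and follows essentially the same route as the paper: the identical feature space $L^2((0,1)\times S^{r-1},\la^1\otimes\sigma^r)$ and quantile-function feature map $\Phi(\P)(t,\theta)=F^{[-1]}_{\theta^*_{\#}\P}(t)$, with the isometry identity obtained by substituting the one-dimensional closed form into the definition of $d_{{\rm SW}_2}$. Your additional verification of square integrability (via compactness of $\Omega_r$) and joint measurability (via the Galois identity) fills in bookkeeping the paper leaves implicit, and your injectivity remark matches the paper's observation following its proof.
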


\begin{proof}
Taking $\cF = L^2\left((0,1) \times S^{r-1}, \lambda^1 \otimes \sigma^r\right)$ and 
\begin{equation} \label{eq:feature_sw2}
      \Phi: X \to \cF, \P \mapsto \left((t,\theta) \mapsto F_{\theta^*_{\#} \P}^{[-1]}(t) \right),
\end{equation}
we have,
\begin{equation*}
  \begin{aligned}
    d_{{\rm SW}_2}(\P,\Q)^2 &= \int_{S^{r-1}} \left\|F_{\theta^*_{\#} \P}^{[-1]}-F_{\theta^*_{\#} \Q}^{[-1]} \right\|_{L^2([0,1])}^2 d\theta \\ &=  \int_{S^{r-1}}\int_0^1 \left(F_{\theta^*_{\#}\P}^{[-1]}(t) -  F_{\theta^*_{\#}\Q}^{[-1]}(t) \right)^2 dt d\theta \\ 
    &= \|\Phi(\P) - \Phi(\Q) \|_{\cF}^2,
  \end{aligned}
\end{equation*}
as desired.
\end{proof}

Note that since we already know that $d_{{\rm SW}_2}$ is a proper distance, it implies that $\Phi$ is injective. 

{\bf Comparison with \citet{kolouri2016sliced}.}~ Distance substitution kernels with $d_{{\rm SW}_2}$ were originally proposed in \citet{kolouri2016sliced} where it was first shown that such distance is Hilbertian. However, these results were limited to probability distributions admitting a density with respect to the Lebesgue measure. This was due to a different choice of feature map $\Phi$ in the proof of \cref{prop:dsw2-is-hilbertian} based on the Radon transform. In dimension one, when a probability $\P \in \cM_+^1(\Omega_1)$ admits a density $f = \frac{d\P}{d\lambda^1}$, the $2-$Wasserstein distance is
\begin{equation*}
  d_{W_{2}}(\P, \Q)^2 =  \int_{\R} (F_{\Q}^{[-1]}\circ F_{\P}(t) - t )^2 f(t)dt.
\end{equation*}
If $\P\in\cM_+^1(\Omega_r)$ admits a density, then for any direction $\theta\in\mathbb{S}^{r-1}$, $\theta^*_{\#}\P$ admits as density $\mathcal{R}f(.,\theta)$ on $\R$, with $\mathcal{R}$ the Radon transform. Then the SW distance is,
\begin{equation*}
  d_{{\rm SW}_{2}}(\P, \Q)^2 = \int (F_{\theta^*_{\#}\Q}^{[-1]}\circ F_{\theta^*_{\#}\P}(t) - t )^2 \mathcal{R}f(t,\theta)dtd\theta,
\end{equation*}
Showing that $d_{{\rm SW}_2}$ is Hilbertian using the above characterization is less direct than our approach since it requires introducing a reference measure $\P_0$ admitting a density. By defining 
\begin{equation*}
\Phi_{\P_0}(\P): (t,\theta) \mapsto \left(F_{\theta_{\#}^* \P}^{[-1]}\circ F_{\theta_{\#}^* \P_0}(t) - t \right)\sqrt{\mathcal{R}f(t, \theta)} 
\end{equation*}
it can be shown that $d_{{\rm SW}_{2}}$ is Hilbertian, namely ${ d_{{\rm SW}_{2}}(\P, \Q) = \|\Phi_{\P_0}(\P) - \Phi_{\P_0}(\Q) \|_{L^2(\R \times S^{r-1})}}$. In practice, this raises the question of how to choose the reference measure (and how it will impact the downstream learning algorithm). More importantly, the result in \cite{kolouri2016sliced} is only valid for absolutely continuous distributions and is not applicable when dealing with empirical distributions $\hat\P_n$ as in distribution regression (this would require a density estimation step that might be expensive and introduce additional errors). Our strategy avoids this. 

{\bf ${\rm SW}_1$-kernel.}~ For the ${\rm SW}_1$ distance we have the following. 

\begin{proposition}
$\sqrt{d_{{\rm SW}_1}}$ is Hilbertian.
\end{proposition}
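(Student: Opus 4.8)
The plan is to mirror the strategy used for $d_{{\rm SW}_2}$ in \cref{prop:dsw2-is-hilbertian}, namely to exhibit an explicit Hilbert space $\cF$ and feature map $\Phi$ realizing $\sqrt{d_{{\rm SW}_1}}$ as an $\cF$-norm distance. The key observation is that the one-dimensional $1$-Wasserstein distance has a very convenient linear representation: for univariate $\al,\be\in\cM_+^1(\R)$ one has $d_{W_1}(\al,\be)=\|F_\al-F_\be\|_{L^1(\R)}=\int_{\R}|F_\al(s)-F_\be(s)|\,ds$, i.e. the $L^1$ distance between cumulative distribution functions rather than between quantile functions. Because the integrand is an absolute value of a \emph{difference of CDFs} (which is linear in the measure), $d_{W_1}$ on the line is itself induced by an $L^1$-type structure, and $L^1$ distances are not Hilbertian directly. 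The route to a Hilbertian distance is therefore to pass to the square root and invoke the classical fact that $\sqrt{d}$ is Hilbertian whenever $d$ is a distance of \emph{negative type} (equivalently, $e^{-\gamma d}$ is p.d. for all $\gamma>0$), which is precisely the structure available for $L^1$ / total-variation-like metrics.

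Concretely, first I would reduce to the one-dimensional case via the slicing identity \cref{eq:sw_p}: since $d_{{\rm SW}_1}(\P,\Q)=\int_{S^{r-1}} d_{W_1}(\theta^*_{\#}\P,\theta^*_{\#}\Q)\,d\theta$ is an average (hence a nonnegative integral) of one-dimensional $W_1$ distances, it suffices to show that each sliced term contributes an element of negative type and that negative type is preserved under integration over $\theta$. So the crux is to prove that $d_{W_1}$ on the real line is a distance of negative type. For this I would use the CDF representation above and the standard fact that the $L^1$ metric is of negative type: writing $d_{W_1}(\al,\be)=\int_{\R}|F_\al(s)-F_\be(s)|\,ds$ and using that $|a-b|$ is itself a negative-type kernel on $\R$ (realizable via the Brownian-motion / half-line feature map $s\mapsto \mathbf{1}_{[s,\infty)}$ giving $|a-b|=\int |\mathbf{1}_{a\le u}-\mathbf{1}_{b\le u}|\,du$), one expresses $|F_\al(s)-F_\be(s)|$ as an $L^2$-type discrepancy integrated in an auxiliary variable. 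Concatenating the auxiliary variable $s$, the level variable, and the direction $\theta$ into a single product measure yields a feature map $\Phi$ into an $L^2$ space such that $d_{{\rm SW}_1}(\P,\Q)=\|\Phi(\P)-\Phi(\Q)\|^2$, which is exactly the statement that $\sqrt{d_{{\rm SW}_1}}$ is Hilbertian.

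The cleanest packaging is to build $\Phi$ directly: take $\cF=L^2\big(\R\times\R\times S^{r-1},\,\nu\otimes\lambda^1\otimes\sigma^r\big)$ for a suitable finite reference measure $\nu$ on the slicing axis (a truncation is available because $\Omega_r$ is compact, so all projected CDFs are supported in a fixed bounded interval) and let $\Phi(\P)(u,s,\theta)=\mathbf{1}\{F_{\theta^*_{\#}\P}(s)\le u\}$ suitably centered, so that $\|\Phi(\P)-\Phi(\Q)\|_{\cF}^2$ telescopes to $\int_{S^{r-1}}\int_{\R}|F_{\theta^*_{\#}\P}(s)-F_{\theta^*_{\#}\Q}(s)|\,ds\,d\theta=d_{{\rm SW}_1}(\P,\Q)$. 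I expect the main obstacle to be purely technical rather than conceptual: one must verify measurability and square-integrability of $\Phi$ (using compactness of $\Omega_r$ to bound the support of the projected measures and Fubini/Tonelli to justify interchanging the three integrals), and confirm that the level-set feature map for $|a-b|$ assembles correctly so that the cross terms cancel to leave exactly the absolute value of the CDF difference. Injectivity of $\Phi$ — equivalently, that $\sqrt{d_{{\rm SW}_1}}$ is a proper distance — follows from the already-noted fact that $d_{{\rm SW}_1}$ itself is a proper distance on $\cM_+^1(\Omega_r)$, so no separate argument is needed there.
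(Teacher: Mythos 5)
Your proof is correct, but it takes a genuinely different --- and in one respect stronger --- route than the paper's. The paper (\cref{sec:a_B}) never constructs a feature map: it shows $d_{{\rm SW}_1}$ is conditionally negative definite via a general lemma stating that $(x,y)\mapsto\int_\Omega|\phi(x,\omega)-\phi(y,\omega)|\,d\mu(\omega)$ is c.n.d.\ (using $|u-v|=u+v-2\min(u,v)$ and a level-set representation of $\min$), applies it to the \emph{quantile} representation $d_{W_1}(\P,\Q)=\int_0^1|F^{[-1]}_{\P}-F^{[-1]}_{\Q}|$, integrates over $\theta$ by linearity, and then invokes the Schoenberg-type converse (\cref{prop:a_shoen}) to obtain an abstract Hilbertian embedding. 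You instead use the CDF representation $d_{W_1}(\al,\be)=\int_{\R}|F_\al(s)-F_\be(s)|\,ds$ together with the identity, valid for $a,b\in[0,1]$,
\[
|a-b|=\int_0^1\bigl(\mathbf{1}\{u\le a\}-\mathbf{1}\{u\le b\}\bigr)^2\,du,
\]
to assemble an \emph{explicit} isometry $d_{{\rm SW}_1}(\P,\Q)=\|\Phi(\P)-\Phi(\Q)\|_{\cF}^2$ with $\Phi(\P)(u,s,\theta)=\mathbf{1}\{u\le F_{\theta^*_{\#}\P}(s)\}$ on $\cF=L^2\bigl((0,1)\times[-R,R]\times S^{r-1}\bigr)$, where truncating $s$ to $[-R,R]$ is harmless since compactness of $\Omega_r$ forces all projected CDFs to coincide off a fixed interval (this, or centering by a reference measure, is also what makes $\Phi$ square-integrable). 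Both arguments run on the same one-dimensional engine --- indicators of level sets converting an $L^1$ discrepancy into an $L^2$ one --- but your packaging buys exactly what the paper explicitly forgoes: the paper remarks that its proof ``does not lead to an explicit feature map and feature space,'' and consequently its universality proof for $K_1$ (\cref{prop:a_uni_sw1}) needs a detour through the kernel $\tilde k$ and Lemma 4.33 of \citet{steinwart2008support} to establish separability of $\cF$; with your explicit $\cF$ separability is immediate and the argument for $K_1$ collapses to the one already given for $K_2$. In exchange, the paper's lemma is more general (any family $\phi(x,\cdot)\in L^1(\mu)$), and your first-paragraph framing via ``negative type'' is really the same Schoenberg mechanism --- but your final construction bypasses it, so the proof stands self-contained. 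The details you flag as technical are indeed routine: joint measurability of $(u,s,\theta)\mapsto\mathbf{1}\{u\le F_{\theta^*_{\#}\P}(s)\}$ follows from right-continuity of CDFs, and your worry about cross terms evaporates because a difference of $\{0,1\}$-valued indicators squares to its own absolute value, so the inner integral is exactly $|F_{\theta^*_{\#}\P}(s)-F_{\theta^*_{\#}\Q}(s)|$ with nothing left to cancel.
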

$d_{{\rm SW}_1}$ is not Hilbertian while its squared root $\sqrt{d_{{\rm SW}_1}}$ is. This was originally observed in \citet{carriere2017sliced}, as the authors used it to build a kernel between persistence diagrams, but without a complete proof. For the sake of completeness, we provide a proof in \cref{sec:a_B}. The result hinges on Schoenberg’s theorem. Up to our knowledge, it is the first time that $K(d_{{\rm SW}_1})$ is used in the context of distribution regression. Note that the Gaussian-like kernel $K(\P,\Q) = \exp\left(-\gamma d_{{\rm SW}_1}(\P,\Q)\right)$ has no square in the exponential since it is the square root of the distance that is Hilbertian and not the distance itself. The result implies that there is some Hilbert space $\cF$ and feature map $\Phi: X \to \cF$ such that $\sqrt{d_{{\rm SW}_1}}(\P,\Q) = \|\Phi(\P) - \Phi(\Q)\|_{\cF}$. Interestingly, it implies that $\sqrt{d_{{\rm SW}_1}}$ is also a distance.  

{\bf Alternative Optimal Transport embeddings.}~ Linear Optimal Transportation (LOT) \citet{wang2013linear, merigot2020quantitative, moosmuller2020linear} defines an alternative transport-based Hilbertian distance. It requires the choice of a reference measure and a discretization scheme to be used in practice that add further complexity. Nonetheless, a separate study for LOT estimators would be an interesting future direction.

{\bf Algorithm.}~ We refer to kernels built on $X$ by substitution with either $d_{{\rm SW}_2}$ or $\sqrt{d_{{\rm SW}_1}}$ as \emph{Sliced-Wasserstein (SW) kernels}. Evaluating such kernels requires to evaluate \cref{eq:sw_p} for $p=1,2$. Instead of approximating the outer integral by sampling $\cU(S^{r-1})$ and evaluating exactly the inner integral we suggest to approximate both integrals as follows: 
\begin{equation*}
    \begin{aligned}
        \hat d_{p}(\P, \Q)^p := \frac{1}{MN}\sum_{\substack{m=1 \\ \ell=1}}^{M,N} \left|F_{(\theta_m^*)_{\#}\P}^{[-1]}(t_{\ell}) -  F_{(\theta_m^*)_{\#}\Q}^{[-1]}(t_{\ell}) \right|^p,
    \end{aligned}
\end{equation*}
$\theta_1, \ldots, \theta_M \sim^{i.i.d} \cU(S^{r-1})$, $(t_1, \ldots, t_N) \sim^{i.i.d} \cU(0,1)$. This is akin to build a finite dimensional representation $\hat\Phi_{M,N}(\P) = \left(MN\right)^{-1/p}(F_{(\theta_m^*)_{\#}\P}^{[-1]}(t_{\ell}))_{m,\ell=1}^{M,N} \in \R^{MN}$ and  
\begin{equation} \label{eq:approx_dist}
    \hat d_{p}(\P,\Q) = \| \hat\Phi_{M,N}(\P) -  \hat\Phi_{M,N}(\Q) \|_{\ell_p(MN)}.
\end{equation}
For empirical distributions $\hat\P_n = \frac{1}{n}\sum_{i=1}^n \delta_{x_i}$, we can make use of the closed form evaluation of $F_{\theta^*_{\#}\hat\P_n}^{[-1]}$, and we emphasize that the empirical distributions we evaluate are not required to have the same number of points (see \cref{sec:a_prac_sw}). The full procedure to evaluate $\hat\Phi_{M,N}(\hat\P_n)$ is summarized in \cref{alg:sw}. It can then be plugged in \cref{eq:approx_dist} and the approximated distance itself can be plugged in the distance substitution kernel of choice. 
\begin{algorithm}[tb]
   \caption{Evaluation of $\hat\Phi_{M,N}(\hat\P_n)$}
   \label{alg:sw}
\begin{algorithmic}
   \STATE {\bfseries Input:} data $\hat\P_n = \frac{1}{n}\sum_{i=1}^n \delta_{x_i} \in \cM_+^1(\Omega_r)$, $M, N \geq 1$
   \STATE {\bfseries Init:} $\Phi=0$ matrix of size $M$ by $N$ 
   \FOR{$m=1$ {\bfseries to} $M$}
   \STATE Sample $\theta_m \sim \cU(S^{d-1})$ 
   \STATE Sort sequence $x^{\theta_m}_i:= \langle \theta_m, x_i \rangle, i=1,\ldots,n$
   \STATE Label ordered sequence $x^{\theta_m}_{(1)} \leq \ldots \leq x^{\theta_m}_{(n)}$
   \FOR{$\ell=1$ {\bfseries to} $N$}
   \STATE Sample $t_{\ell} \sim \cU(0,1)$
   \STATE $\Phi_{m,\ell} = x_{(k)}^{\theta_m} \quad \text{ if } \quad t_{\ell} \in [\frac{k-1}{n},\frac{k}{n}), \quad 1 \leq k \leq n$
   \ENDFOR
   \ENDFOR
   \STATE {\bfseries Output:} Flatten$(\Phi) \in \R^{MN}$
\end{algorithmic}
\end{algorithm}

{\bf Universality of Gaussian SW kernels.}~
We exploit \cref{th:steinwart} to show that SW kernels are universal. 
\begin{proposition} \label{prop:univ_sw}
    $K_1:X^2 \to \R, (\P,\Q) \mapsto e^{-\gamma {\rm SW}_1(\P,\Q)}$ and $K_2:X^2 \to \R, (\P,\Q) \mapsto e^{-\gamma {\rm SW}_2^2(\P,\Q)}$ are universal for the topology induced by the weak convergence of probability measures.
\end{proposition}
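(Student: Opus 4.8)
The plan is to invoke the Christmann--Steinwart universality criterion (\cref{th:steinwart}) for each kernel, so for both $K_1$ and $K_2$ I must exhibit a \emph{separable} Hilbert space together with a map that is \emph{continuous} and \emph{injective} for the weak-convergence topology on $X$. First I would fix the metric: since $\Omega_r$ is compact, the Prokhorov distance $d_P$ metrizes weak convergence and, by Prokhorov's theorem, $(X,d_P)$ is a compact metric space, so the requirement in \cref{th:steinwart} on the base space is satisfied.

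For $K_2$ I would reuse the explicit construction of \cref{prop:dsw2-is-hilbertian}: the feature space is $\cF = L^2\big((0,1)\times S^{r-1},\,\lambda^1\otimes\sigma^r\big)$, which is separable, and the feature map is $\Phi(\P) = \big((t,\theta)\mapsto F_{\theta^*_{\#}\P}^{[-1]}(t)\big)$ with $\|\Phi(\P)-\Phi(\Q)\|_{\cF} = d_{{\rm SW}_2}(\P,\Q)$. Injectivity of $\Phi$ is already recorded, since $d_{{\rm SW}_2}$ is a proper distance. Consequently $K_2(\P,\Q)=e^{-\gamma\|\Phi(\P)-\Phi(\Q)\|_{\cF}^2}=e^{-\gamma\,{\rm SW}_2^2(\P,\Q)}$ has exactly the form demanded by \cref{th:steinwart}. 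For $K_1$ I would proceed identically, taking the feature map $\Psi$ furnished by the Schoenberg construction behind the proposition that $\sqrt{d_{{\rm SW}_1}}$ is Hilbertian, for which $\|\Psi(\P)-\Psi(\Q)\|^2 = d_{{\rm SW}_1}(\P,\Q)$, so that $K_1(\P,\Q)=e^{-\gamma\,d_{{\rm SW}_1}(\P,\Q)}=e^{-\gamma\|\Psi(\P)-\Psi(\Q)\|^2}$. Injectivity again follows from $\sqrt{d_{{\rm SW}_1}}$ being a proper distance, and the feature space may be taken to be the closed linear span of $\Psi(X)$, which is separable once $\Psi$ is continuous and $X$ is separable.

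The one substantive point, and the step I expect to be the main obstacle, is the \emph{continuity} of $\Phi$ and $\Psi$ for the weak topology, i.e. that $\P_n\to\P$ weakly implies $d_{{\rm SW}_2}(\P_n,\P)\to 0$ and $d_{{\rm SW}_1}(\P_n,\P)\to 0$. The paper has already recorded the reverse implication (SW convergence implies weak convergence); here I need the forward one. I would derive it from compactness of $\Omega_r$: on a compact domain $d_{W_p}$ metrizes weak convergence, and since the projection $\theta^*$ is $1$-Lipschitz one has $d_{{\rm SW}_p}(\P,\Q)\le d_{W_p}(\P,\Q)$; hence weak convergence forces $d_{W_p}(\P_n,\P)\to 0$ and therefore $d_{{\rm SW}_p}(\P_n,\P)\to 0$. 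This makes $\P\mapsto\Phi(\P)$ and $\P\mapsto\Psi(\P)$ continuous, because their norm increments are $d_{{\rm SW}_2}(\P_n,\P)$ and $\sqrt{d_{{\rm SW}_1}(\P_n,\P)}$ respectively.

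Having verified that $\Phi$ and $\Psi$ are continuous injections into separable Hilbert spaces over the compact metric space $(X,d_P)$, I would finally apply \cref{th:steinwart} to each, concluding that $K_2$ and $K_1$ are universal for the topology induced by weak convergence of probability measures, which is the claim.
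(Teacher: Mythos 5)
Your proof is correct, and it deviates from the paper's argument in two substantive places, both to good effect. For continuity, the paper simply cites \citet{nadjahi2019asymptotic,nadjahi2020statistical} for the fact that $d_{{\rm SW}_p}$ \emph{metrizes} weak convergence and then transfers compactness and continuity between the weak topology and the $d_{{\rm SW}_2}$-topology; you instead prove the one direction actually needed (weak convergence $\Rightarrow$ ${\rm SW}_p$-convergence) from scratch, via the domination $d_{{\rm SW}_p}\leq d_{W_p}$ (the projections $\theta^*$ are $1$-Lipschitz, so each sliced term contracts) together with the standard fact that $d_{W_p}$ metrizes weak convergence on a compact domain. This is sound and makes the argument self-contained and uniform in $p\in\{1,2\}$, at the modest cost of not recording the converse implication the cited metrization result also provides. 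For the separability of the $K_1$ feature space -- the genuinely delicate point, since the Schoenberg construction behind \cref{prop:a_shoen} gives no explicit $\cF$ -- the paper (\cref{prop:a_uni_sw1}) takes a detour: it realizes $\cF$ as the RKHS of the distance-induced kernel $\tilde k(\P,\Q)=d_{{\rm SW}_1}(\P_0,\P)+d_{{\rm SW}_1}(\P_0,\Q)-d_{{\rm SW}_1}(\P,\Q)$ via Proposition 2.2 of \citet{smola1998learning}, and then invokes Lemmas 4.29 and 4.33 of \citet{steinwart2008support} (separable input space plus continuous kernel implies separable RKHS). Your alternative -- restrict the codomain to the closed linear span of $\Psi(X)$, which is separable because $X$ is compact metric (hence separable) and $\Psi$ is continuous, and note that this restriction changes no norms -- is more elementary and reaches the same conclusion; note that it leans on the continuity of $\Psi$, which you have already established, just as the paper's Lemma 4.29 step does. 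The treatment of $K_2$ (explicit feature map from \cref{prop:dsw2-is-hilbertian} into the manifestly separable $L^2\left((0,1)\times S^{r-1},\lambda^1\otimes\sigma^r\right)$, injectivity from the identity of indiscernibles, then \cref{th:steinwart}) matches the paper's proof essentially verbatim.
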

\begin{proof}
    We prove the result for $K_2$, the proof for $K_1$ requires a slightly longer argument that is deferred to \cref{prop:a_uni_sw1}. Firstly, the compactness of $\Omega_r$ implies that $X$ is weakly compact \citep[Thm 6.4][]{parthasarathy2005probability}. Secondly, $d_{{\rm SW}_2}$ is Hilbertian with feature space given by $\cF = L^2\left((0,1) \times S^{r-1}, \lambda^1 \otimes \sigma^r\right)$, which is separable, and feature map given by \cref{eq:feature_sw2}. Since $d_{{\rm SW}_2}(\P,\Q) = \|\Phi(\P) - \Phi(\Q)\|_{\cF}$, $\Phi$ is continuous w.r.t $d_{{\rm SW}_2}$ and injective. Indeed $d_{{\rm SW}_2}$ satisfying the identity of indiscernibles implies that $\Phi$ is injective and $\Phi$ as a map from $(X,d_{{\rm SW}_2})$ to $(\cF, \|.\|_{\cF})$ is 1-Lipschitz, hence continuous. Finally, it has been shown in \citet{nadjahi2020statistical} that $d_{{\rm SW}_2}$ metrizes the weak convergence, hence continuity and compactness w.r.t $d_{{\rm SW}_2}$ is equivalent to weak continuity and weak compactness. By \cref{th:steinwart}, the universality of $K_2$ follows.  
\end{proof}

\section{Consistency in distribution regression with distance substitution kernels} \label{sec:excess_risk}
Up to our knowledge, every theoretical work on distribution regression has been focusing on MMD-based kernels. Here, we provide 
the first analysis of distribution regression for any distance substitution kernel $K(d)$ on $X$ with known sample complexity for the chosen distance $d$. We follow the approach of \citet{fang2020optimal} which slightly improved the results in \citet{szabo2016learning}.  Using recent results on the sample complexity of the Sliced Wasserstein distance \citet{nadjahi2020statistical} we apply our excess risk bound to show that the KRR estimator with the SW kernel for distribution regression is consistent.

Given a Hilbertian distance $d$ (e.g. MMD or SW) on $X$ with feature map $\Phi: X \to \cF$, we consider a radial distance substitution kernel $K: X \times X \to \R, (\P,\Q) \mapsto q(d(\P,\Q)) = q(\|\Phi(\P) - \Phi(\Q)\|_{\cF})$ with $q: \R \to \R$. Valid functions $q$ are the one for which $K$ is a valid substitution kernel, for example $q:x \mapsto e^{-\gamma x^{2\beta}}$, $\gamma > 0, \beta \in [0,1]$ \cref{prop:app_d_subst}. The fact that $d$ is assumed to be a distance implies that $d(\P,\Q) = 0$ if and only if $\P = \Q$ and therefore $K(\P,\Q) = q(0)$ when $\P = \Q$. We denote by $\cH_K$ the induced RKHS and by $K_{\P} \in \cH_K$ the canonical feature map of $K$: $K(\P,\Q) = \langle K_{\P}, K_{\Q} \rangle_{\cH_K}$ for all $(\P,\Q) \in X^2$. Equipped with this kernel we consider the minimizer of the second stage risk \cref{eq:sol_KRR} and search for a bound on the excess risk $\cE(f_{\hat{\cD},\la}) - \cE(f_{\rho}) = \|f_{\hat{\cD},\la} - f_{\rho}\|_{L^2_{\rho_X}}^2$. 

In the context of MMD-based distribution regression \citet{szabo2016learning} provided high probability bounds on the excess risk while \citet{fang2020optimal} provided bounds on the expected excess risk with a slightly different analysis. The key assumption behind both approaches is that $K$ must be Hölder continuous with respect to the MMD. More precisely, that  there exists constants $L>0$ and $h \in (0,1]$ such that for all $(\P,\Q) \in X^2$ we have $\|K_{\P} - K_{\Q} \|_{\cH_K} \leq Ld_{\rm MMD}(\P,\Q)^h$. 
This assumption is instrumental to bound quantities that depend on the discrepancy between $\P$ and $\hat\P_n$ by the sample complexity of the MMD distance. Here, we advocate extending this assumption to a generic Hilbertian distance $d$, 
\begin{equation} \label{asm:holder_d_subst}
    \|K_{\P} - K_{\Q} \|_{\cH_K} \leq Ld(\P,\Q)^h.
\end{equation}
Let us note that \cref{asm:holder_d_subst} does not depend on $d$ but rather on the regularity of $q$ (see discussion in  \cref{subsec:a_holder} for more details). We now provide the main result of this section by first introducing the integral operator as a central tool in learning theory for kernel algorithms. 

{\bf Integral operator.}~ Since $X$ is compact, under the assumptions that $K$ is p.d., continuous and $\sup_{\P \in X} K(\P,\P) \leq \kappa^2$ (i.e. $K$ is a Mercer kernel \citet{aronszajn1950theory}), the integral operator $L_K: L_{\rho_X}^2 \to L_{\rho_X}^2$ defined by $$L_{K}(f)=\int_{X} K_{\P} f\left(\P\right) d\rho_{X}(\P), \quad f \in L_{\rho_{X}}^{2}$$
is compact, positive and self-adjoint. Furthermore $\cH_K$ is composed of continuous functions from $X$ to $\R$ and $L_K^{1/2}$ forms an isometry between $L^2_{\rho_X}$ and $\cH_K$: every function $f \in \cH_K$ can be written as $f = L_K^{1/2}g$ for some $g \in L^2_{\rho_X}$ and $\|f\|_{\cH_K} = \|g\|_{L^2_{\rho_X}}$ (section 1,2,3, chapter 3 \citet{Cucker02onthe}). 

\begin{theorem} \label{th:main}
Under the assumption that $\exists A > 0, |y| \leq A$, $\rho$ a.s., $f_{\rho} \in \cH_K$, $q$ is such that $K$ is a valid kernel, $K$ satisfies the Hölder condition \cref{asm:holder_d_subst}, $K$ is continuous and $\sup_{\P \in X} K(\P,\P) \leq \kappa^2$, if the expected sample complexity of $d$ is controlled such that $\E[d(\P,\hat\P_n)^2] \leq \alpha(n)$, $\alpha: \N \to \R_+$, $\P \in X$, then
\begin{equation*}
\begin{split}
  \E\|&f_{\hat{\cD},\lambda} - f_{\rho}\|_{L^2_{\rho_X}} \hspace{-0.15em}\leq  C\Bigg\{\left(\kappa\left\|f_{\rho}\right\|_{{\cH}_K}\hspace{-0.3em} +\hspace{-0.3em} \frac{A\mathcal{B}_{T, \lambda}^2}{\lambda}\hspace{-0.3em}+\hspace{-0.3em}\frac{A\mathcal{B}_{T, \lambda}}{\sqrt{\lambda}} \right) \\
  &\times \frac{L\alpha(n)^{h/2}}{\sqrt{\lambda}} \left(\sqrt{\kappa^2\frac{\mathcal{N}(\lambda)}{T\lambda}} + \frac{L\kappa\alpha(n)^{h/2}}{\lambda} +  1 \right)\\ &+\left(\frac{\mathcal{B}_{T, \lambda}}{\sqrt{\lambda}}+1\right)^{2} \frac{A}{\kappa} \mathcal{B}_{T, \lambda} + \left(\mathcal{B}_{T, \lambda}+\sqrt{\lambda}\right)\left\|f_{\rho}\right\|_{\cH_K}\Bigg\}
\end{split}
\end{equation*}
where $\mathcal{N}(\lambda):=\operatorname{Tr}\left(L_{K}\left(L_{K}+\lambda I\right)^{-1}\right)$, $\lambda > 0$ is the effective dimension, $\mathcal{B}_{T, \lambda}:=\frac{2 \kappa}{\sqrt{T}}\left(\frac{\kappa}{\sqrt{T \lambda}}+\sqrt{\mathcal{N}(\lambda)}\right)$ and $C$ is a constant the does not depend on any other quantity.
\end{theorem}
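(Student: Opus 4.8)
The plan is to split the $L^2_{\rho_X}$ error into a \emph{first-stage} contribution, which is the error of ordinary kernel ridge regression had the true inputs $\P_t$ been observed, and a \emph{second-stage} contribution accounting for the replacement of each $\P_t$ by its empirical counterpart $\hat\P_{t,n}$. I would introduce the first-stage minimizer $f_{\cD,\lambda}$ of \cref{eq:risk1} and use the triangle inequality
$$\|f_{\hat\cD,\lambda} - f_\rho\|_{L^2_{\rho_X}} \le \|f_{\hat\cD,\lambda} - f_{\cD,\lambda}\|_{L^2_{\rho_X}} + \|f_{\cD,\lambda} - f_\rho\|_{L^2_{\rho_X}}.$$
The second summand is classical: following the integral-operator analysis of \citet{fang2020optimal} (refining \citet{szabo2016learning}) it decomposes into a bias term of order $\sqrt\lambda\,\|f_\rho\|_{\cH_K}$ and a sample/variance term governed by $\mathcal{N}(\lambda)$ and $\mathcal{B}_{T,\lambda}$, yielding exactly the last two groups $\big(\tfrac{\mathcal{B}_{T,\lambda}}{\sqrt\lambda}+1\big)^2\tfrac{A}{\kappa}\mathcal{B}_{T,\lambda}$ and $(\mathcal{B}_{T,\lambda}+\sqrt\lambda)\|f_\rho\|_{\cH_K}$ of the stated bound. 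This part uses only that $K$ is a bounded Mercer kernel with $f_\rho\in\cH_K$, and not the Hölder condition.

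For the first summand — the genuinely new part — I would pass to the operator formulation. Writing $\hat{C}_\cD=\frac1T\sum_t K_{\P_t}\otimes K_{\P_t}$ and $\hat{C}_{\hat\cD}=\frac1T\sum_t K_{\hat\P_{t,n}}\otimes K_{\hat\P_{t,n}}$ for the empirical covariances on $\cH_K$, and $\hat{g}_\cD=\frac1T\sum_t y_t K_{\P_t}$, $\hat{g}_{\hat\cD}=\frac1T\sum_t y_t K_{\hat\P_{t,n}}$ for the empirical mean elements, we have $f_{\cD,\lambda}=(\hat{C}_\cD+\lambda I)^{-1}\hat{g}_\cD$ and $f_{\hat\cD,\lambda}=(\hat{C}_{\hat\cD}+\lambda I)^{-1}\hat{g}_{\hat\cD}$. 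The resolvent identity then gives
$$f_{\hat\cD,\lambda} - f_{\cD,\lambda} = (\hat{C}_{\hat\cD}+\lambda I)^{-1}\Big[(\hat{g}_{\hat\cD}-\hat{g}_\cD) + (\hat{C}_\cD - \hat{C}_{\hat\cD})\,f_{\cD,\lambda}\Big],$$
reducing the task to bounding the two perturbations, the norm of $f_{\cD,\lambda}$, and the action of $(\hat{C}_{\hat\cD}+\lambda I)^{-1}$ in $L^2_{\rho_X}$-geometry.

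This is where \cref{asm:holder_d_subst} enters. Using $|y_t|\le A$ and $\|K_{\P}-K_{\Q}\|_{\cH_K}\le L\,d(\P,\Q)^h$ gives $\|\hat{g}_{\hat\cD}-\hat{g}_\cD\|_{\cH_K}\le \tfrac{AL}{T}\sum_t d(\P_t,\hat\P_{t,n})^h$, while the factorization $a\otimes a-b\otimes b=(a-b)\otimes a+b\otimes(a-b)$ together with $\|K_\P\|_{\cH_K}\le\kappa$ yields $\|\hat{C}_\cD-\hat{C}_{\hat\cD}\|\le \tfrac{2\kappa L}{T}\sum_t d(\P_t,\hat\P_{t,n})^h$. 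Taking expectations and applying Jensen's inequality to the concave map $x\mapsto x^{h/2}$ (recall $h\in(0,1]$) turns the hypothesis $\E[d(\P,\hat\P_n)^2]\le\alpha(n)$ into $\E[d(\P,\hat\P_n)^h]\le\alpha(n)^{h/2}$ — precisely the factor appearing throughout the first group of the bound. The prefactor $\tfrac{1}{\sqrt\lambda}$ (rather than $\tfrac1\lambda$) arises from converting the $\cH_K$-perturbations into the $L^2_{\rho_X}$ error via the integral-operator isometry $L_K^{1/2}$, bounding $\|L_K^{1/2}(\hat{C}_{\hat\cD}+\lambda I)^{-1}\|\lesssim\lambda^{-1/2}$; the leading parenthesis $\big(\kappa\|f_\rho\|_{\cH_K}+\tfrac{A\mathcal{B}_{T,\lambda}^2}{\lambda}+\tfrac{A\mathcal{B}_{T,\lambda}}{\sqrt\lambda}\big)$ is a bound on the relevant norm of the first-stage estimator $f_{\cD,\lambda}$ (its deterministic part $\kappa\|f_\rho\|_{\cH_K}$ plus fluctuations), and the trailing factor $\big(\sqrt{\kappa^2\mathcal{N}(\lambda)/(T\lambda)}+L\kappa\alpha(n)^{h/2}/\lambda+1\big)$ collects the extra error incurred when replacing the ideal inverse by the doubly-perturbed one.

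The main obstacle I anticipate is controlling the perturbed inverse $(\hat{C}_{\hat\cD}+\lambda I)^{-1}$, since $\hat{C}_{\hat\cD}$ differs from both $\hat{C}_\cD$ (first-stage sampling) and the population $L_K$ (both stages): one must show these two perturbation levels do not blow up the resolvent bound. The cleanest route is to always factor through $(L_K+\lambda I)^{\pm1/2}$, reducing each operator gap to a quantity either concentrated (producing $\mathcal{B}_{T,\lambda}$ and $\mathcal{N}(\lambda)$) or Hölder-controlled (producing $\alpha(n)^{h/2}$), and then invoke the standard estimate $\|(L_K+\lambda I)^{1/2}(\hat{C}+\lambda I)^{-1}(L_K+\lambda I)^{1/2}\|\lesssim 1$, valid once the perturbation is small relative to $\lambda$. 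Tracking which powers of $\lambda$ and which of $\mathcal{B}_{T,\lambda}$, $\mathcal{N}(\lambda)$, $\alpha(n)^{h/2}$ accompany each term, so that everything assembles into the stated product structure, is the most delicate bookkeeping; the probabilistic content is otherwise entirely standard given the Hölder hypothesis.
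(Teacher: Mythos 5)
Your proposal follows the paper's proof essentially step for step: the same triangle-inequality split into the first-stage KRR error (delegated to the integral-operator results of \citet{fang2020optimal}, which produce exactly the last two groups of the bound) and a second-stage term, the same resolvent-identity decomposition of $f_{\hat\cD,\lambda}-f_{\cD,\lambda}$ into a mean-element perturbation plus a covariance perturbation acting on $f_{\cD,\lambda}$ (the paper's terms $C$ and $D$), the same H\"older-plus-Jensen conversion of $\E[d(\P,\hat\P_n)^2]\leq\alpha(n)$ into the factor $\alpha(n)^{h/2}$, and the same factoring through $(L_K+\lambda I)^{\pm1/2}$ with $\E[\|f_{\cD,\lambda}\|_{\cH_K}^2]^{1/2}$ supplying the leading parenthesis. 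The only divergence is a technical detail you flag yourself: where you would invoke a resolvent estimate valid ``once the perturbation is small relative to $\lambda$'' (which would force an event-splitting argument awkward for an expectation bound), the paper instead uses the unconditional additive estimate $\|L_K^{1/2}(L_{K,\hat\cD}+\lambda I)^{-1}\|_{op}^2 \leq 3\Sigma_{\cD}^2/\lambda^2 + 3\|L_{K,\cD}-L_{K,\hat\cD}\|_{op}^2/\lambda^3 + 3/\lambda$ (Lemma 8 of \citet{fang2020optimal}) together with $\E[\Sigma_{\cD}^2]\leq\kappa^2\mathcal{N}(\lambda)/T$, which is precisely what makes the expectation bound close without any smallness condition.
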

\begin{proof}
    We sketch the proof strategy here while \cref{subsec:a_th_proof} provides a complete proof. We start by decomposing
    \begin{equation*}
      \|f_{\hat{\cD},\la} - f_{\rho}\|_{L^2_{\rho_X}} \leq \|f_{\hat{\cD},\la} - f_{\cD,\la}\|_{L^2_{\rho_X}} + \|f_{\cD,\la} - f_{\rho}\|_{L^2_{\rho_X}}.
    \end{equation*}
    The term $\|f_{\cD,\la} - f_{\rho}\|_{L^2_{\rho_X}}$ corresponds to the excess risk of a KRR estimator that has access to the true input distributions (namely $\P$ rather than a sample $\hat\P_n$, see discussion in \cref{sec:dist_regr}). Bounds for such quantity have been thoroughly analysed in the literature \citep[see e.g.][]{caponnetto2007optimal}. This term is responsible for the last row in the bound of \cref{th:main}. The term $\|f_{\hat{\cD},\la} - f_{\cD,\la}\|_{L^2_{\rho_X}}$ is specific to the two-stage sampling nature of distribution regression and measures how much we loose by accessing each distribution through bag of samples. A leading term controlling this quantity  is $\|K_{\hat\P_{n}} - K_{\P_n} \|_{\cH_K}$, which is further bounded by the sample complexity of $d$ using the Hölder assumption in \cref{asm:holder_d_subst}.
\end{proof}
$\left\|f_{\rho}\right\|_{\cH_K}$ is an indicator of the regularity of $f_{\rho}$. The bound indicates that the excess risk goes down with the easiness of the learning problem that is reflected in the RKHS norm of $f_{\rho}$. Bounds as in \cref{th:main} are often refined using the so called \emph{source condition}: $\exists g_{\rho} \in L^2_{\rho_X}$ such that $f_{\rho} = L_K^{\epsilon}g_{\rho}$, $\epsilon > 0$. The larger $\epsilon$ the smoother is $f_{\rho}$ and for $\epsilon \geq 1/2$ we are in the well-specified setting, $f_{\rho} \in \cH_K$.

The effective dimension $\mathcal{N}(\lambda)$ is the other parameter controlling the bound and is used to describe the complexity of $\cH_K$. A common assumption is that $\mathcal{N}(\lambda) \leq c\lambda^{-\nu}$, $\nu \in (0,1]$ \citet{caponnetto2007optimal}. Note that, in the worst case scenario, this is always true with $\nu=1$.
\begin{equation*}
    \mathcal{N}(\lambda) = \sum_{i \geq 0} \frac{\lambda_i}{\lambda_i + \lambda} \leq \frac{Tr(L_K)}{\lambda} \leq \frac{\kappa^2}{\lambda}
\end{equation*} 
Plugging this bound and an explicit control over $\alpha(n)$ allows to get more explicit rates.  
\begin{corollary} \label{cor:simplified_bound}
    Under the assumptions of \cref{th:main}, for $\alpha(n) = c n^{-\beta}$, $\beta, c > 0$, taking $\lambda = \max(\frac{1}{\sqrt{T}}, \frac{1}{n^{h\beta/2}})$ the bound can be simplified as 
    \begin{equation*}
        \E\|f_{\hat{\cD},\lambda} - f_{\rho}\|_{L^2_{\rho_X}}
        \leq C\left(\frac{1}{\sqrt[4]{T}} + \frac{1}{n^{h\beta/4}} \right)\left(\left\|f_{\rho}\right\|_{\cH_K} \hspace{-0.15em}+\hspace{-0.15em} 1\right)
    \end{equation*}
    where $C$ is a constant that depends only on $A$, $\kappa$ and $L$.
\end{corollary}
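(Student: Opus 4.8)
The plan is to treat the corollary as a simplification of the bound in \cref{th:main}: I substitute the worst-case effective dimension $\mathcal{N}(\lambda) \leq \kappa^2/\lambda$, the polynomial sample complexity $\alpha(n) = c\,n^{-\beta}$, and the prescribed regularizer $\lambda = \max(T^{-1/2}, n^{-h\beta/2})$, then collapse every factor into either a $T^{-1/4}$ rate, an $n^{-h\beta/4}$ rate, or an absolute constant. The whole argument hinges on three consequences of the choice of $\lambda$: the inequality $1/\lambda \leq \sqrt{T}$ (to tame every $T$-dependent factor), the inequality $1/\lambda \leq n^{h\beta/2}$ (to tame every factor carrying the sample complexity $\alpha(n)$), and $\sqrt{\lambda} \leq \max(T^{-1/4}, n^{-h\beta/4}) \leq T^{-1/4} + n^{-h\beta/4}$. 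Taking the maximum of the two candidate regularizers (rather than either one alone) is precisely what makes all three hold simultaneously.

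First I would control $\mathcal{B}_{T,\lambda}$. Using $\sqrt{\mathcal{N}(\lambda)} \leq \kappa/\sqrt{\lambda}$ in its definition gives $\mathcal{B}_{T,\lambda} \leq 4\kappa^2/\sqrt{T\lambda}$, and since $\lambda \geq T^{-1/2}$ forces $\sqrt{T\lambda} \geq T^{1/4}$, I extract the two facts used repeatedly: $\mathcal{B}_{T,\lambda} \lesssim T^{-1/4}$ and $\mathcal{B}_{T,\lambda}/\sqrt{\lambda} \lesssim 1$ (the latter because $1/\lambda \leq \sqrt{T}$). Consequently the prefactors $A\mathcal{B}_{T,\lambda}^2/\lambda$ and $A\mathcal{B}_{T,\lambda}/\sqrt{\lambda}$ inside the first parenthesis are bounded by constants, so that parenthesis is $\lesssim \|f_{\rho}\|_{\cH_K} + 1$. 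Next I would handle the sample-complexity factors: writing $\alpha(n)^{h/2} = c^{h/2} n^{-h\beta/2}$ and using $1/\sqrt{\lambda} \leq n^{h\beta/4}$ yields $\alpha(n)^{h/2}/\sqrt{\lambda} \lesssim n^{-h\beta/4}$, which is the source of the $n^{-h\beta/4}$ rate in the first row, while $\alpha(n)^{h/2}/\lambda \lesssim 1$ follows from $1/\lambda \leq n^{h\beta/2}$. Similarly $\kappa^2\mathcal{N}(\lambda)/(T\lambda) \leq \kappa^4/(T\lambda^2) \leq \kappa^4$ since $1/\lambda^2 \leq T$, so the middle parenthesis $\sqrt{\kappa^2\mathcal{N}(\lambda)/(T\lambda)} + L\kappa\,\alpha(n)^{h/2}/\lambda + 1$ is an absolute constant. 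Multiplying the three pieces of the first row then gives $\lesssim (\|f_{\rho}\|_{\cH_K}+1)\,n^{-h\beta/4}$.

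Finally I would dispatch the last two terms. The bounds already established give $(\mathcal{B}_{T,\lambda}/\sqrt{\lambda}+1)^2 (A/\kappa)\mathcal{B}_{T,\lambda} \lesssim T^{-1/4}$, and $(\mathcal{B}_{T,\lambda}+\sqrt{\lambda})\|f_{\rho}\|_{\cH_K} \lesssim (T^{-1/4}+n^{-h\beta/4})\|f_{\rho}\|_{\cH_K}$ using $\sqrt{\lambda} \leq T^{-1/4}+n^{-h\beta/4}$. Summing the three contributions and absorbing every numerical factor (together with the fixed problem parameters $c$, $\beta$, $h$) into a single constant $C$ produces the claimed bound $C(T^{-1/4}+n^{-h\beta/4})(\|f_{\rho}\|_{\cH_K}+1)$. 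The step demanding genuine care, as opposed to routine arithmetic, is the bookkeeping that assigns each factor to the correct rate and verifies that no residual $T$- or $n$-dependence leaks into $C$; concretely, one must check that every negative power of $\lambda$ is dominated by the matching positive power of $T$ or $n^{h\beta}$ supplied by $\lambda = \max(T^{-1/2}, n^{-h\beta/2})$, which is exactly where that choice of $\lambda$ is used.
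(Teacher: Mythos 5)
Your proposal is correct and follows essentially the same route as the paper's proof: the worst-case bound $\mathcal{N}(\lambda)\leq\kappa^2/\lambda$ giving $\mathcal{B}_{T,\lambda}\leq 4\kappa^2/\sqrt{T\lambda}$, followed by the three consequences of $\lambda=\max(T^{-1/2},n^{-h\beta/2})$ — namely $1/\lambda\leq\sqrt{T}$, $1/\lambda\leq n^{h\beta/2}$, and $\sqrt{\lambda}\leq T^{-1/4}+n^{-h\beta/4}$ — used to assign each factor to a constant, a $T^{-1/4}$ rate, or an $n^{-h\beta/4}$ rate. The only (harmless) difference is organizational — you bound each factor in place rather than substituting everything first and simplifying, and you are slightly more explicit than the paper that $c^{h/2}$ is absorbed into $C$.
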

We defer the proof to \cref{subsec:a_simplified_bound}. For $d_{\rm MMD}$ it is known that $\beta=1$, for completeness the proof is given in \cref{subsec:a_sample_mmd}. Plugging this rate in \cref{cor:simplified_bound}, for the Gaussian-like kernel for example (h=1), leads to the same result as presented in \citet{fang2020optimal}, the excess risk is in $\frac{1}{\sqrt[4]{T}} + \frac{1}{\sqrt[4]{n}}$. For SW, it is shown in \citet{nadjahi2020statistical} that both $\E[d_{{\rm SW}_2}(\P, \hat\P_n)^{2}]$ and $\E[d_{{\rm SW}_1}(\P, \hat\P_n)]$ are in $O(n^{-1/2})$, hence $\beta=1/2$. Leading for $h=1$ to an expected excess risk in the order of $\frac{1}{\sqrt[4]{T}} + \frac{1}{\sqrt[8]{n}}$. Due to its slower sample complexity, the bound appears worse for {\rm SW} than for MMD, when we neglect $\left\|f_{\rho}\right\|_{\cH_K}$. However, such term captures the regularity of the solution $f_\rho$ in terms of the underlying metric used to compare probability distribution and can potentially differ significantly between MMD and SW-based kernels. In particular, we argue that when $f_\rho$ needs to be sensitive to geometric properties of distributions in $\cM_+^1(\Omega_r)$, then $\left\|f_{\rho}\right\|_{\cH_K}$ might be significantly smaller for $\mathcal{H}_K$ the space associated to a SW kernel rather than the MMD based one. In the following, we provide empirical experiments supporting this perspective, showing that SW kernels are often superior to MMD ones in real and synthetic applications. We postpone the theoretical investigation of this important question to future work. 

\section{Experiments}
\label{sec:exp}
In this section we compare the numerical performance of SW kernels versus MMD kernels on two tasks. First, we look at the synthetic task that consists of counting the number of components of Gaussian mixture distributions. Then, we present results for distribution regression on the MNIST and Fashion MNIST datasets.

\begin{remark}
    In both tasks, considered as classification as the labels are discrete, we use KRR hence the mean squared error loss. According to the theory of surrogate methods \citet{bartlett2006convexity}, the squared loss is ``classification calibrated'' (by using 1-hot encoding and sign or argmax decoding), namely it is a valid surrogate loss for classification problems, similarly to hinge and logistic losses (see e.g. \citet{mroueh2012multiclass} and \citet{meanti2020kernel} for theoretical and empirical analyses, respectively).  In particular, excess risk bounds for KRR (e.g. our \cref{th:main}) automatically yield bounds for classification (see [2]).
\end{remark}

{\bf Synthetic Mode classification.}~ Taken from  \citet{oliva2014fast}, we consider the problem of counting the number of modes in a Gaussian Mixture Model (GMM). This can be seen as a model selection process where given a set of points we want to fit a GMM but first we learn a mapping from the set of points to the number of components $p$. The data are simulated as follows. Let $T$ be the number of GMM distributions, $n$ the number of points drawn from each GMM, $C$ the maximum number of components and $r$ the input dimension. For each $t=1,\ldots,T$ we sample independently $p \sim \cU(\{1,\ldots,C\})$, $\mu_1,\ldots,\mu_p \sim^{i.i.d} \cU([-5,5]^r)$, and $\Sigma_j = a_jA_jA_j^{\top} + B_j, j=1,\ldots,p$ where $a_j \sim \cU([1,4])$, $A_j$ is a $r\times r$ matrix with entries sampled from $\cU([-1,1])$ and $B_j$ is a diagonal matrix with entries sampled from  $\cU([0,1])$. \cref{fig:gmm} presents examples of GMM distributions sampled as above for $r=2$, $C=3$ and $n=100$. We compare the Gaussian ${\rm SW}_1$ and ${\rm SW}_2$ kernels (with $M=100$, $N=100$ \cref{eq:approx_dist}) against the doubly Gaussian MMD kernel. In \cref{table-gmm} we report the RMSE for different configurations of $T, n, C$ and $r$. A validation set of size $50$ is used to select the bandwidth of the Gaussian kernels and the regularisation parameter $\lambda$ (see \cref{sec:exp_details} for details) and a test set of size 100 is used for the results. Each experiment is repeated 5 times.

\begin{table}[t]
\caption{Test RMSE and standard deviation for the mode classification task with different configuration for $T$, $n$, $C$ and $r$ (tested $5$ times each). Comparison of a doubly Gaussian MMD kernel and Gaussian ${\rm SW}_2$ and ${\rm SW}_1$ kernels.}
\label{table-gmm}
\vskip 0.15in
\begin{center}
\begin{scriptsize}
\begin{sc}
\begin{tabular}{cccc|ccc}
  \toprule
  $T$ & $n$ & $C$ & $r$ & ${\rm MMD}$& ${\rm SW}_2$ & ${\rm SW}_1$  \\
  \midrule
  $100$ & $50$ & $2$ & $2$ & $0.64 ~(0.09)$ & $0.48 ~(0.03)$ & $\mathbf{0.47 ~(0.06)}$  \\
  $100$ & $50$ & $10$ & $2$ & $3.66 ~(0.88)$ & $\mathbf{2.92 ~(0.32)}$ & $3.01 ~(0.24)$  \\
  $100$ & $50$ & $2$ & $10$ & $0.68 ~(0.02)$ & $0.54 ~(0.05)$ & $\mathbf{0.53 ~(0.05)}$  \\
  $100$ & $50$ & $10$ & $10$ & $3.48 ~(0.95)$ & $\mathbf{3.17 ~(0.32)}$ & $3.19 ~(0.14)$  \\
  $500$ & $50$ & $2$ & $2$ & $0.66 ~(0.08)$ & $0.44 ~(0.07)$ & $\mathbf{0.42 ~(0.07)}$  \\
  $500$ & $50$ & $10$ & $2$ & $4.1 ~(1.03)$ & $\mathbf{2.63 ~(0.27)}$ & $2.68 ~(0.24)$  \\
  $500$ & $50$ & $2$ & $10$ & $0.64 ~(0.13)$ & $0.44 ~(0.08)$ & $\mathbf{0.42 ~(0.04)}$  \\
  $500$ & $50$ & $10$ & $10$ & $3.72 ~(1.14)$ & $\mathbf{3.07 ~(0.42)}$ & $3.08 ~(0.35)$  \\
  $500$ & $250$ & $2$ & $2$ & $0.64 ~(0.1)$ & $\mathbf{0.39 ~(0.03)}$ & $0.42 ~(0.04)$  \\
  $500$ & $250$ & $10$ & $2$ & $3.8 ~(0.1)$ & $\mathbf{2.4 ~(0.15)}$ & $2.41 ~(0.16)$  \\
  $500$ & $250$ & $2$ & $10$ & $0.65 ~(0.15)$ & $\mathbf{0.39 ~(0.01)}$ & $0.41 ~(0.03)$  \\
  $500$ & $250$ & $10$ & $10$ & $3.64 ~(0.83)$ & $2.84 ~(0.17)$ & $\mathbf{2.8 ~(0.31)}$  \\
  \bottomrule
\end{tabular}
\end{sc}
\end{scriptsize}
\end{center}
\vskip -0.1in
\end{table}

{\bf MNIST Classification.}~  
MNIST \cite{lecun2010mnist} and Fashion MNIST \cite{xiao2017fashion} are constituted of gray images of size
28 by 28. Classes in MNIST are the digits from 0 to 9 and classes in Fashion MNIST are 10 categories of clothes. SW and MMD kernels take as inputs probability distributions, we therefore convert each image to an histogram. For an image $t$, let $(x^t_i,
y^t_i)_{i=1}^{n_t}$ denotes the positions of the active pixels (i.e. with an
intensity greater than $0$), $n_t$ is the number of active pixels. The positions
are renormalised to be in the grid $[-1,1]^2$. We therefore have $\Omega_2 =
[-1,1]^2$. Let $(I^t_i)_{i=1}^{n_t} \in \{0, \ldots 255\}$ be the associated pixel
intensities. We consider the weighted histograms
$\hat\P_t = \sum_{i=1}^{n_t}c^t_i \delta_{\{x^t_i, y^t_i\}} \in \cM_+^1([-1,1]^2)$ where $c^t_i = I^t_i/\sum_{j=1}^{n_t}I^t_j$. We consider Gaussian distance substitution kernels \cref{eq:gauss_like} for $d_{\rm MMD}$, $d_{{\rm SW}_2}$ and $\sqrt{d_{{\rm SW}_1}}$ (with $M=100$, $N=100$ \cref{eq:approx_dist}). The kernel chosen to build $d_{\rm MMD}$ is also Gaussian. For comparison we also apply an Euclidean Gaussian kernel on the flatten images. We consider the mean squared error loss and build KRR estimators \cref{eq:sol_KRR}. We use a train set of 1000 images, a validation set of 300 images and a test set of 500 images to evaluate the estimators. Each set is balanced. We use the train-validation split to select the bandwidth of the Gaussian kernels and the regularisation parameter $\lambda$ (see \cref{sec:exp_details} for details). In order to make the task more challenging we test the kernels in three configurations: on the raw images and on images randomly rotated (with a maximum angle of either $\pm 15$ or $\pm 30$ degrees) and translated, \cref{fig:mnist-var}. We run the experiment 5 times on different part of the MNIST and Fashion MNIST datasets. Results are given in \cref{table-mnist}. 

{\bf Comparison.}~  
SW kernels consistently beat MMD kernels in both experiments, which indicates that the SW distance is a better measure of similarity between histograms than the MMD distance. On the Fashion MNIST dataset for which the images are very well centered both MMD and SW are beaten by the standard Gaussian kernel on the flatten images. However, this kernel is less robust and its performance deteriorates far quicker than SW kernels when we perturb the images, the MMD kernel is even less robust. While the focus of this work is to suggest an alternative to MMD-based distribution regression, in \cref{sec:additional_exp} we also make a comparison to KRR with the square root total variation and Hellinger distances, which are both c.n.d.

\begin{figure}[t]
\vskip 0.2in
\begin{center}
\centerline{\includegraphics[width=0.80\columnwidth]{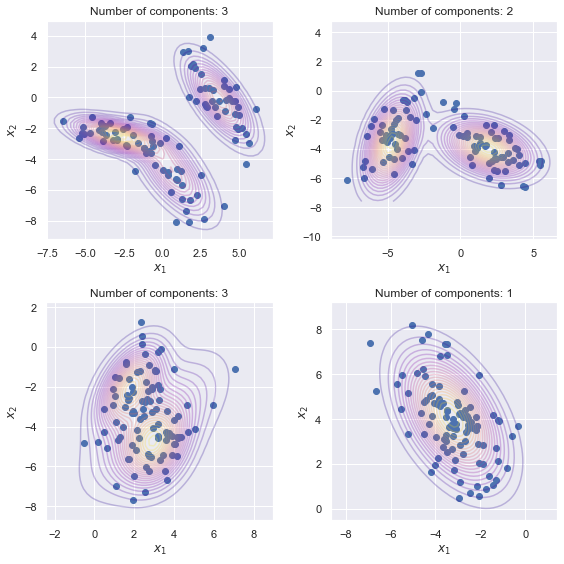}}
\caption{Samples and level curves for sample GMMs in the mode classification experiment, with $r=2$, $C=3$ and $n=100$.}
\label{fig:gmm}
\end{center}
\vskip -0.2in
\end{figure}

\begin{figure}[t]
\vskip 0.2in
\begin{center}
\centerline{\includegraphics[width=0.80\columnwidth]{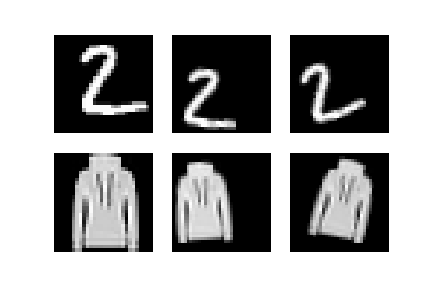}}
\caption{Sample transformations applied to MNIST (Top row) and Fashion MNIST (Bottom row) images. (Left column) raw $28 \times 28$ pixel images. Images are padded to $34 \times 34$ pixels and randomly roto-translated by degree drawn uniformly (for each image) on $[-15,15]$ (Middle column) and $[-30,30]$ (Right column.}
\label{fig:mnist-var}
\end{center}
\vskip -0.2in
\end{figure}

\begin{table}[t]
\caption{Mean accuracy and standard deviation on rotated MNIST and Fashion MNIST with maximum rotation $\theta_\vee = 0, \pi/12, \pi/6$ respectively. Comparison between KRR estimators with standard Gaussian kernel (RBF), doubly Gaussian MMD kernel (MMD), a Gaussian ${\rm SW}_2$ kernel and a Gaussian ${\rm SW}_1$. 
}
\label{table-mnist}
\vskip 0.15in
\begin{center}
\begin{scriptsize}
\begin{sc}
\begin{tabular}{lcccc}
  \toprule
   MNIST & RBF & MMD  & ${\rm SW}_2$ & ${\rm SW}_1$  \\
  \midrule
  Raw & $0.9 ~(0.01) $ & $0.79 ~(0.03)$ & $\mathbf{0.93 ~(0.01)}$ & $0.91 ~(0.03)$  \\
  $\theta_{\vee}=\frac{\pi}{12}$ & $0.51 ~(0.03)$ & $0.40 ~(0.05)$ & $\mathbf{0.85 ~(0.02)}$ & $0.66 ~(0.02)$   \\
  $\theta_{\vee}=\frac{\pi}{6}$ & $0.47 ~(0.03)$ & $0.34 ~(0.04)$ & $\mathbf{0.82 ~(0.01)}$ & $0.61 ~(0.01)$ \\
  \bottomrule
\end{tabular}
\end{sc}
\end{scriptsize}
\end{center}
\begin{center}
\begin{scriptsize}
\begin{sc}
\begin{tabular}{lcccc}
  \toprule
  Fashion & RBF & MMD & ${\rm SW}_2$ & ${\rm SW}_1$ \\
  \midrule
  Raw & $\mathbf{0.82~( 0.01)}$ & $0.75~(0.01)$ & $0.81~(0.01)$ & $0.77~(0.01)$   \\
  $\theta_{\vee}=\frac{\pi}{12}$ & $0.66~(0.01)$ & $0.48~(0.03)$ & $\mathbf{0.74~(0.01)}$ & $0.60~(0.02)$\\
  $\theta_{\vee}=\frac{\pi}{6}$ & $0.64~(0.01)$ & $0.45~ (0.01$) & $\mathbf{0.70~(0.01)}$ & $0.58~(0.01$) \\
  \bottomrule
\end{tabular}
\end{sc}
\end{scriptsize}
\end{center}
\vskip -0.1in
\end{table}

\section{Conclusion}

The main goal of this work was to address distribution regression from the perspective of Optimal Transport (OT), in contrast to the more common and well-established MMD-based strategies. For this purpose we proposed the first kernel ridge regression estimator for distribution regression, using Sliced Wasserstain distances to define a distance substitution kernel. We proved finite learning bounds for these estimators, extending previous work from \citet{fang2020optimal}. We empirically evaluated our estimator on a number of real and simulations experiments, showing that tackling distribution regression with OT-based methods outperforms MMD-based approaches in several settings.

We identify two main directions for future research: firstly, we plan to better identify which regimes are more favorable for OT-based and MMD-based distribution regression. This is an open question in the literature and requires delving deeper in the analysis of the approximation properties of the distance substitution kernels induced by the two distances respectively. Secondly, our theoretical results, do not account for the fact that the Sliced Wasserstein distance needs to be approximated via Monte-Carlo methods. This analysis was beyond the scope of this work, which focuses on comparing OT and MMD based distribution regression. However, we plan to account for this further (third) sampling stage in future work. 

\section*{Acknowledgements}
We would like to thank Saverio Salzo for helpful discussions. Most of this work was done when D.M. was a
research assistant at the Italian Institute of Technology. D.M. is supported by the PhD fellowship from the Gatsby Charitable Foundation and the Wellcome Trust. D.M. is also an ELLIS PhD student. C.C. acknowledges the support of the Royal Society (grant SPREM RGS\textbackslash{}R1\textbackslash{}201149) and Amazon.com Inc. (Amazon
Research Award – ARA).

\section*{Software and Data}
Code is available at \url{https://github.com/DimSum2k/DRSWK}.


\bibliography{DRSWK}

\begin{thebibliography}{52}
\providecommand{\natexlab}[1]{#1}
\providecommand{\url}[1]{\texttt{#1}}
\expandafter\ifx\csname urlstyle\endcsname\relax
  \providecommand{\doi}[1]{doi: #1}\else
  \providecommand{\doi}{doi: \begingroup \urlstyle{rm}\Url}\fi

\bibitem[Aronszajn(1950)]{aronszajn1950theory}
Aronszajn, N.
\newblock Theory of reproducing kernels.
\newblock \emph{Transactions of the American mathematical society}, 68\penalty0
  (3):\penalty0 337--404, 1950.

\bibitem[Bachoc et~al.(2017)Bachoc, Gamboa, Loubes, and
  Venet]{bachoc2017gaussian}
Bachoc, F., Gamboa, F., Loubes, J.-M., and Venet, N.
\newblock A gaussian process regression model for distribution inputs.
\newblock \emph{IEEE Transactions on Information Theory}, 64\penalty0
  (10):\penalty0 6620--6637, 2017.

\bibitem[Bachoc et~al.(2020)Bachoc, Suvorikova, Ginsbourger, Loubes, and
  Spokoiny]{bachoc2020gaussian}
Bachoc, F., Suvorikova, A., Ginsbourger, D., Loubes, J.-M., and Spokoiny, V.
\newblock Gaussian processes with multidimensional distribution inputs via
  optimal transport and hilbertian embedding.
\newblock \emph{Electronic journal of statistics}, 14\penalty0 (2):\penalty0
  2742--2772, 2020.

\bibitem[Bartlett et~al.(2006)Bartlett, Jordan, and
  McAuliffe]{bartlett2006convexity}
Bartlett, P.~L., Jordan, M.~I., and McAuliffe, J.~D.
\newblock Convexity, classification, and risk bounds.
\newblock \emph{Journal of the American Statistical Association}, 101\penalty0
  (473):\penalty0 138--156, 2006.

\bibitem[Berg et~al.(1984)Berg, Christensen, and Ressel]{berg84_harmon}
Berg, C., Christensen, J. P.~R., and Ressel, P.
\newblock \emph{Harmonic analysis on semigroups: theory of positive definite
  and related functions}, volume 100.
\newblock Springer, 1984.

\bibitem[Berlinet \& Thomas-Agnan(2011)Berlinet and
  Thomas-Agnan]{berlinet2011reproducing}
Berlinet, A. and Thomas-Agnan, C.
\newblock \emph{Reproducing kernel Hilbert spaces in probability and
  statistics}.
\newblock Springer Science \& Business Media, 2011.

\bibitem[Bonneel et~al.(2015)Bonneel, Rabin, Peyr{\'e}, and
  Pfister]{bonneel2015sliced}
Bonneel, N., Rabin, J., Peyr{\'e}, G., and Pfister, H.
\newblock Sliced and radon wasserstein barycenters of measures.
\newblock \emph{Journal of Mathematical Imaging and Vision}, 51\penalty0
  (1):\penalty0 22--45, 2015.

\bibitem[Caponnetto \& De~Vito(2007)Caponnetto and
  De~Vito]{caponnetto2007optimal}
Caponnetto, A. and De~Vito, E.
\newblock Optimal rates for the regularized least-squares algorithm.
\newblock \emph{Foundations of Computational Mathematics}, 7\penalty0
  (3):\penalty0 331--368, 2007.

\bibitem[Carriere et~al.(2017)Carriere, Cuturi, and Oudot]{carriere2017sliced}
Carriere, M., Cuturi, M., and Oudot, S.
\newblock Sliced wasserstein kernel for persistence diagrams.
\newblock In \emph{International conference on machine learning}, pp.\
  664--673. PMLR, 2017.

\bibitem[Christmann \& Steinwart(2010)Christmann and
  Steinwart]{christmann2010universal}
Christmann, A. and Steinwart, I.
\newblock Universal kernels on non-standard input spaces.
\newblock \emph{Advances in Neural Information Processing Systems},
  23:\penalty0 406--414, 2010.

\bibitem[Cucker \& Smale(2002)Cucker and Smale]{Cucker02onthe}
Cucker, F. and Smale, S.
\newblock On the mathematical foundations of learning.
\newblock \emph{Bulletin of the American Mathematical Society}, 39:\penalty0
  1--49, 2002.

\bibitem[Denevi et~al.(2020)Denevi, Pontil, and Ciliberto]{denevi2020advantage}
Denevi, G., Pontil, M., and Ciliberto, C.
\newblock The advantage of conditional meta-learning for biased regularization
  and fine tuning.
\newblock \emph{Advances in Neural Information Processing Systems},
  33:\penalty0 964--974, 2020.

\bibitem[Deshpande et~al.(2018)Deshpande, Zhang, and
  Schwing]{deshpande2018generative}
Deshpande, I., Zhang, Z., and Schwing, A.~G.
\newblock Generative modeling using the sliced wasserstein distance.
\newblock In \emph{Proceedings of the IEEE/CVF Conference on Computer Vision
  and Pattern Recognition}, pp.\  3483--3491, 2018.

\bibitem[Deshpande et~al.(2019)Deshpande, Hu, Sun, Pyrros, Siddiqui, Koyejo,
  Zhao, Forsyth, and Schwing]{deshpande2019max}
Deshpande, I., Hu, Y.-T., Sun, R., Pyrros, A., Siddiqui, N., Koyejo, S., Zhao,
  Z., Forsyth, D., and Schwing, A.~G.
\newblock Max-sliced wasserstein distance and its use for gans.
\newblock In \emph{Proceedings of the IEEE/CVF Conference on Computer Vision
  and Pattern Recognition}, pp.\  10648--10656, 2019.

\bibitem[Fang et~al.(2020)Fang, Guo, and Zhou]{fang2020optimal}
Fang, Z., Guo, Z.-C., and Zhou, D.-X.
\newblock Optimal learning rates for distribution regression.
\newblock \emph{Journal of Complexity}, 56:\penalty0 101426, 2020.

\bibitem[Feragen et~al.(2015)Feragen, Lauze, and Hauberg]{feragen2015geodesic}
Feragen, A., Lauze, F., and Hauberg, S.
\newblock Geodesic exponential kernels: When curvature and linearity conflict.
\newblock In \emph{Proceedings of the IEEE/CVF Conference on Computer Vision
  and Pattern Recognition}, pp.\  3032--3042, 2015.

\bibitem[Gretton et~al.(2012)Gretton, Borgwardt, Rasch, Sch{\"o}lkopf, and
  Smola]{gretton2012kernel}
Gretton, A., Borgwardt, K.~M., Rasch, M.~J., Sch{\"o}lkopf, B., and Smola, A.
\newblock A kernel two-sample test.
\newblock \emph{The Journal of Machine Learning Research}, 13\penalty0
  (1):\penalty0 723--773, 2012.

\bibitem[Haasdonk \& Bahlmann(2004)Haasdonk and Bahlmann]{haasdonk2004learning}
Haasdonk, B. and Bahlmann, C.
\newblock Learning with distance substitution kernels.
\newblock In \emph{Joint pattern recognition symposium}, pp.\  220--227.
  Springer, 2004.

\bibitem[Hein \& Bousquet(2005)Hein and Bousquet]{hein2005hilbertian}
Hein, M. and Bousquet, O.
\newblock Hilbertian metrics and positive definite kernels on probability
  measures.
\newblock In \emph{International Workshop on Artificial Intelligence and
  Statistics}, pp.\  136--143. PMLR, 2005.

\bibitem[Hofmann et~al.(2008)Hofmann, Sch{\"o}lkopf, and
  Smola]{hofmann2008kernel}
Hofmann, T., Sch{\"o}lkopf, B., and Smola, A.~J.
\newblock Kernel methods in machine learning.
\newblock \emph{The annals of statistics}, 36\penalty0 (3):\penalty0
  1171--1220, 2008.

\bibitem[Kolouri et~al.(2016)Kolouri, Zou, and Rohde]{kolouri2016sliced}
Kolouri, S., Zou, Y., and Rohde, G.~K.
\newblock Sliced wasserstein kernels for probability distributions.
\newblock In \emph{Proceedings of the IEEE/CVF Conference on Computer Vision
  and Pattern Recognition}, pp.\  5258--5267, 2016.

\bibitem[Kolouri et~al.(2018)Kolouri, Pope, Martin, and
  Rohde]{kolouri2018sliced}
Kolouri, S., Pope, P.~E., Martin, C.~E., and Rohde, G.~K.
\newblock Sliced wasserstein auto-encoders.
\newblock In \emph{International Conference on Learning Representations}, 2018.

\bibitem[Kolouri et~al.(2019)Kolouri, Nadjahi, Simsekli, Badeau, and
  Rohde]{kolouri2019generalized}
Kolouri, S., Nadjahi, K., Simsekli, U., Badeau, R., and Rohde, G.
\newblock Generalized sliced wasserstein distances.
\newblock \emph{Advances in Neural Information Processing Systems}, 32, 2019.

\bibitem[LeCun et~al.(2010)LeCun, Cortes, and Burges]{lecun2010mnist}
LeCun, Y., Cortes, C., and Burges, C.
\newblock Mnist handwritten digit database.
\newblock \emph{ATT Labs [Online]. Available:
  http://yann.lecun.com/exdb/mnist}, 2, 2010.

\bibitem[Lee et~al.(2019)Lee, Batra, Baig, and Ulbricht]{lee2019sliced}
Lee, C.-Y., Batra, T., Baig, M.~H., and Ulbricht, D.
\newblock Sliced wasserstein discrepancy for unsupervised domain adaptation.
\newblock In \emph{Proceedings of the IEEE/CVF Conference on Computer Vision
  and Pattern Recognition}, pp.\  10285--10295, 2019.

\bibitem[Lin et~al.(2017)Lin, Guo, and Zhou]{lin2017distributed}
Lin, S.-B., Guo, X., and Zhou, D.-X.
\newblock Distributed learning with regularized least squares.
\newblock \emph{The Journal of Machine Learning Research}, 18\penalty0
  (1):\penalty0 3202--3232, 2017.

\bibitem[Meanti et~al.(2020)Meanti, Carratino, Rosasco, and
  Rudi]{meanti2020kernel}
Meanti, G., Carratino, L., Rosasco, L., and Rudi, A.
\newblock Kernel methods through the roof: handling billions of points
  efficiently.
\newblock \emph{Advances in Neural Information Processing Systems},
  33:\penalty0 14410--14422, 2020.

\bibitem[M{\'e}rigot et~al.(2020)M{\'e}rigot, Delalande, and
  Chazal]{merigot2020quantitative}
M{\'e}rigot, Q., Delalande, A., and Chazal, F.
\newblock Quantitative stability of optimal transport maps and linearization of
  the 2-wasserstein space.
\newblock In \emph{International Conference on Artificial Intelligence and
  Statistics}, pp.\  3186--3196. PMLR, 2020.

\bibitem[Moosm{\"u}ller \& Cloninger(2020)Moosm{\"u}ller and
  Cloninger]{moosmuller2020linear}
Moosm{\"u}ller, C. and Cloninger, A.
\newblock Linear optimal transport embedding: Provable wasserstein
  classification for certain rigid transformations and perturbations.
\newblock \emph{arXiv preprint arXiv:2008.09165}, 2020.

\bibitem[Mroueh et~al.(2012)Mroueh, Poggio, Rosasco, and
  Slotine]{mroueh2012multiclass}
Mroueh, Y., Poggio, T., Rosasco, L., and Slotine, J.-J.
\newblock Multiclass learning with simplex coding.
\newblock \emph{Advances in Neural Information Processing Systems}, 25, 2012.

\bibitem[Muandet et~al.(2017)Muandet, Fukumizu, Sriperumbudur, Sch{\"o}lkopf,
  et~al.]{muandet2017kernel}
Muandet, K., Fukumizu, K., Sriperumbudur, B., Sch{\"o}lkopf, B., et~al.
\newblock Kernel mean embedding of distributions: A review and beyond.
\newblock \emph{Foundations and Trends{\textregistered} in Machine Learning},
  10\penalty0 (1-2):\penalty0 1--141, 2017.

\bibitem[M{\"u}cke(2021)]{mucke2021stochastic}
M{\"u}cke, N.
\newblock Stochastic gradient descent meets distribution regression.
\newblock In \emph{International Conference on Artificial Intelligence and
  Statistics}, pp.\  2143--2151. PMLR, 2021.

\bibitem[Nadjahi(2021)]{nadjahi2021sliced}
Nadjahi, K.
\newblock \emph{Sliced-Wasserstein distance for large-scale machine learning:
  theory, methodology and extensions}.
\newblock PhD thesis, Institut Polytechnique de Paris, 2021.

\bibitem[Nadjahi et~al.(2019)Nadjahi, Durmus, Simsekli, and
  Badeau]{nadjahi2019asymptotic}
Nadjahi, K., Durmus, A., Simsekli, U., and Badeau, R.
\newblock Asymptotic guarantees for learning generative models with the
  sliced-wasserstein distance.
\newblock \emph{Advances in Neural Information Processing Systems}, 32, 2019.

\bibitem[Nadjahi et~al.(2020)Nadjahi, Durmus, Chizat, Kolouri, Shahrampour, and
  Simsekli]{nadjahi2020statistical}
Nadjahi, K., Durmus, A., Chizat, L., Kolouri, S., Shahrampour, S., and
  Simsekli, U.
\newblock Statistical and topological properties of sliced probability
  divergences.
\newblock \emph{Advances in Neural Information Processing Systems},
  33:\penalty0 20802--20812, 2020.

\bibitem[Oliva et~al.(2014)Oliva, Neiswanger, P{\'o}czos, Schneider, and
  Xing]{oliva2014fast}
Oliva, J., Neiswanger, W., P{\'o}czos, B., Schneider, J., and Xing, E.
\newblock Fast distribution to real regression.
\newblock In \emph{International Conference on Artificial Intelligence and
  Statistics}, pp.\  706--714. PMLR, 2014.

\bibitem[Parthasarathy(2005)]{parthasarathy2005probability}
Parthasarathy, K.~R.
\newblock \emph{Probability measures on metric spaces}, volume 352.
\newblock American Mathematical Soc., 2005.

\bibitem[Peyr{\'e} et~al.(2019)Peyr{\'e}, Cuturi,
  et~al.]{peyre2019computational}
Peyr{\'e}, G., Cuturi, M., et~al.
\newblock Computational optimal transport: With applications to data science.
\newblock \emph{Foundations and Trends{\textregistered} in Machine Learning},
  11\penalty0 (5-6):\penalty0 355--607, 2019.

\bibitem[P{\'o}czos et~al.(2013)P{\'o}czos, Singh, Rinaldo, and
  Wasserman]{poczos2013distribution}
P{\'o}czos, B., Singh, A., Rinaldo, A., and Wasserman, L.
\newblock Distribution-free distribution regression.
\newblock In \emph{International Conference on Artificial Intelligence and
  Statistics}, pp.\  507--515. PMLR, 2013.

\bibitem[Santambrogio(2015)]{santambrogio2015optimal}
Santambrogio, F.
\newblock Optimal transport for applied mathematicians.
\newblock \emph{Birk{\"a}user, NY}, 55\penalty0 (58-63):\penalty0 94, 2015.

\bibitem[Sch{\"o}lkopf et~al.(2001)Sch{\"o}lkopf, Herbrich, and
  Smola]{scholkopf2001generalized}
Sch{\"o}lkopf, B., Herbrich, R., and Smola, A.~J.
\newblock A generalized representer theorem.
\newblock In \emph{International Conference on Computational Learning Theory},
  pp.\  416--426. Springer, 2001.

\bibitem[Smola et~al.(2007)Smola, Gretton, Song, and
  Sch{\"o}lkopf]{smola2007hilbert}
Smola, A., Gretton, A., Song, L., and Sch{\"o}lkopf, B.
\newblock A hilbert space embedding for distributions.
\newblock In \emph{International Conference on Algorithmic Learning Theory},
  pp.\  13--31. Springer, 2007.

\bibitem[Smola \& Sch{\"o}lkopf(1998)Smola and
  Sch{\"o}lkopf]{smola1998learning}
Smola, A.~J. and Sch{\"o}lkopf, B.
\newblock \emph{Learning with kernels}.
\newblock Citeseer, 1998.

\bibitem[Steinwart \& Christmann(2008)Steinwart and
  Christmann]{steinwart2008support}
Steinwart, I. and Christmann, A.
\newblock \emph{Support vector machines}.
\newblock Springer Science \& Business Media, 2008.

\bibitem[Szab{\'o} et~al.(2015)Szab{\'o}, Gretton, P{\'o}czos, and
  Sriperumbudur]{szabo2015two}
Szab{\'o}, Z., Gretton, A., P{\'o}czos, B., and Sriperumbudur, B.
\newblock Two-stage sampled learning theory on distributions.
\newblock In \emph{International Conference on Artificial Intelligence and
  Statistics}, pp.\  948--957. PMLR, 2015.

\bibitem[Szab{\'o} et~al.(2016)Szab{\'o}, Sriperumbudur, P{\'o}czos, and
  Gretton]{szabo2016learning}
Szab{\'o}, Z., Sriperumbudur, B.~K., P{\'o}czos, B., and Gretton, A.
\newblock Learning theory for distribution regression.
\newblock \emph{The Journal of Machine Learning Research}, 17\penalty0
  (1):\penalty0 5272--5311, 2016.

\bibitem[Trang et~al.(2021)Trang, Loubes, Risser, and
  Balaresque]{bui2018distribution}
Trang, B. T.~T., Loubes, J.-M., Risser, L., and Balaresque, P.
\newblock Distribution regression model with a reproducing kernel hilbert space
  approach.
\newblock \emph{Communications in Statistics - Theory and Methods}, 50\penalty0
  (9):\penalty0 1955--1977, 2021.

\bibitem[Villani(2009)]{villani2009optimal}
Villani, C.
\newblock \emph{Optimal transport: old and new}, volume 338.
\newblock Springer, 2009.

\bibitem[Wang et~al.(2013)Wang, Slep{\v{c}}ev, Basu, Ozolek, and
  Rohde]{wang2013linear}
Wang, W., Slep{\v{c}}ev, D., Basu, S., Ozolek, J.~A., and Rohde, G.~K.
\newblock A linear optimal transportation framework for quantifying and
  visualizing variations in sets of images.
\newblock \emph{International journal of computer vision}, 101\penalty0
  (2):\penalty0 254--269, 2013.

\bibitem[Xiao et~al.(2017)Xiao, Rasul, and Vollgraf]{xiao2017fashion}
Xiao, H., Rasul, K., and Vollgraf, R.
\newblock Fashion-mnist: a novel image dataset for benchmarking machine
  learning algorithms.
\newblock \emph{arXiv preprint arXiv:1708.07747}, 2017.

\bibitem[Yu et~al.(2021)Yu, Ho, Shi, and Zhou]{yu2021robust}
Yu, Z., Ho, D.~W., Shi, Z., and Zhou, D.-X.
\newblock Robust kernel-based distribution regression.
\newblock \emph{Inverse Problems}, 37\penalty0 (10):\penalty0 105014, 2021.

\bibitem[Zaheer et~al.(2017)Zaheer, Kottur, Ravanbakhsh, Poczos, Salakhutdinov,
  and Smola]{zaheer2017deep}
Zaheer, M., Kottur, S., Ravanbakhsh, S., Poczos, B., Salakhutdinov, R.~R., and
  Smola, A.~J.
\newblock Deep sets.
\newblock \emph{Advances in Neural Information Processing Systems}, 30, 2017.

\end{thebibliography}
\bibliographystyle{icml2022}

\newpage
\appendix
\onecolumn
{\huge \textbf{Supplementary Material}}

\vspace{.3truecm}
Below we give an overview of the structure of the supplementary material.

\begin{itemize}
    \item  Appendix \ref{sec:a_A} contains two derivations related to the distribution regression set up: the decomposition of the excess risk and the closed-form expression of the minimizer of the second-stage empirical risk.  
    \item In Appendix \ref{sec:a_B} 
    we present more examples of distance substitution kernels and the proof that $\sqrt{d_{{\rm SW}_1}}$ is Hilbertian. 
    \item Appendix \ref{sec:a_C} contains the proof of \cref{prop:univ_sw} for ${\rm SW}_1$.
    \item In Appendix \ref{sec:a_prac_sw}  we show how the Sliced Wasserstein distance can be computed and approximated in practice. 
    \item Appendix \ref{sec:a_main} contains a discussion around the Hölder assumption and shows that it holds for $q: x \mapsto e^{-\gamma x^{2\beta}}$. Specifically,  \cref{subsec:a_th_proof} contains the proof of \cref{th:main} and \cref{subsec:a_sample_mmd} contains the proof of \cref{cor:simplified_bound}.
    \item Appendix \ref{sec:exp_details} contains the details of the choice of hyperparameters for the experimental section. 
    \item Appendix \ref{sec:additional_exp} contains additional experiments with the Hellinger and total variation distances.
\end{itemize}

\section{Distribution Kernel Ridge Regression} \label{sec:a_A}
\begin{proposition} \label{prop:a_bayes_f}
    For all $f: X \rightarrow \R$ measurable, $$\cE(f) := \int_{X \times Y}(f(\P) - y)^2 d\rho(\P,y) = \|f - f_{\rho}\|_{L^2_{\rho_X}}^2 + \cE(f_{\rho}).$$
    The second term is independent of $f$ and the first term is minimised for $f = f_{\rho}$ a.e. which implies
    $$f_{\rho}(\P) = \argmin_f \cE(f) = \argmin_f \|f - f_{\rho}\|_{L^2_{\rho_X}}.$$
\end{proposition}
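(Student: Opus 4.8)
The plan is to carry out the standard bias--variance decomposition by inserting the regression function $f_{\rho}$ inside the square and exploiting the fact that $f_{\rho}(\P)$ is precisely the conditional mean of $y$ given $\P$. First I would write, for every measurable $f$,
\[
(f(\P)-y)^2 = \big(f(\P)-f_{\rho}(\P)\big)^2 + 2\big(f(\P)-f_{\rho}(\P)\big)\big(f_{\rho}(\P)-y\big) + \big(f_{\rho}(\P)-y\big)^2,
\]
and then integrate each of the three terms against the factorisation $d\rho(\P,y) = d\rho_{Y\mid X}(y\mid\P)\,d\rho_X(\P)$, using Fubini to integrate first over $y$ and then over $\P$.

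Next I would treat the three terms in turn. The first term does not depend on $y$, so integrating out $y$ (which contributes a factor $1$ since $\rho_{Y\mid X}(\cdot\mid\P)$ is a probability measure) yields exactly $\int_X \big(f(\P)-f_{\rho}(\P)\big)^2 \, d\rho_X(\P) = \|f-f_{\rho}\|_{L^2_{\rho_X}}^2$. The third term integrates, by definition of $\cE$, to $\cE(f_{\rho})$, which is manifestly independent of $f$.

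The crux of the argument, and the only genuinely substantive step, is that the cross term vanishes. Here I would fix $\P$ and compute the inner integral over $y$: since $f(\P)-f_{\rho}(\P)$ is constant in $y$, it factors out, leaving
\[
\int_Y \big(f_{\rho}(\P)-y\big)\, d\rho_{Y\mid X}(y\mid\P) = f_{\rho}(\P) - \int_Y y\, d\rho_{Y\mid X}(y\mid\P) = 0,
\]
by the very definition $f_{\rho}(\P)=\int_Y y\,d\rho_{Y\mid X}(y\mid\P)$. Hence the cross term contributes nothing, and the desired decomposition $\cE(f) = \|f-f_{\rho}\|_{L^2_{\rho_X}}^2 + \cE(f_{\rho})$ follows.

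Finally, since $\cE(f_{\rho})$ is a constant and $\|f-f_{\rho}\|_{L^2_{\rho_X}}^2 \geq 0$ with equality if and only if $f = f_{\rho}$ in $L^2_{\rho_X}$ (i.e. $\rho_X$-a.e.), the minimiser of $\cE$ over measurable $f$ is $f_{\rho}$, which simultaneously gives $\argmin_f \cE(f) = \argmin_f \|f-f_{\rho}\|_{L^2_{\rho_X}}$. The main technical care needed is to ensure $f_{\rho}$ is well defined and the integrals finite so that the interchange of integration order is legitimate; this is guaranteed by the integrability of $y$ (in particular under the standing boundedness assumption $|y|\le A$), which I would note explicitly before applying Fubini.
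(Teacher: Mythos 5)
Your proof is correct and follows essentially the same route as the paper: inserting $f_{\rho}$ into the square, observing that the cross term vanishes because $f_{\rho}(\P)=\int_Y y\,d\rho_{Y\mid X}(y\mid\P)$, and concluding that the constant term $\cE(f_{\rho})$ is independent of $f$. Your added remarks on Fubini and integrability (justified by $|y|\le A$) are sound housekeeping that the paper leaves implicit.
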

\begin{proof}
\begin{equation*}
\begin{aligned}
     \int_{X \times Y} (f(\P) - y)^2 d\rho(\P,y) &= \int_{X \times Y} (f(\P) - f_{\rho}(\P) + f_{\rho}(\P) - y)^2 d\rho(\P,y) \\ &= \int_X (f(\P) - f_{\rho}(\P))^2 d\rho_X(\P) + 2\int_{X \times Y} (f(\P) - f_{\rho}(\P))(f_{\rho}(\P) - y) d\rho(\P,y) + \cE(f_{\rho})  \\ &= \|f - f_{\rho}\|_{L^2_{\rho_X}}^2 + 2\int_X (f(\P) - f_{\rho}(\P))(f_{\rho}(\P) - \underbrace{\int_Y y d\rho_{Y \mid X}(y \mid \P)}_{=f_{\rho(\P)}})d\rho_X(\P) + \cE(f_{\rho})  \\ &= \|f - f_{\rho}\|_{L^2_{\rho_X}}^2 + \cE(f_{\rho})
\end{aligned}
\end{equation*}
\end{proof}

The following proposition shows how to derive the explicit form of the minimizer of the second-stage empirical risk \cref{eq:risk2}.  
\begin{proposition} \label{prop:a_KRR}
Let $K: X \times X \to \R$ be a p.d. kernel with RKHS denoted as $\cH_K$, then
\begin{equation} 
    f_{\hat{\cD},\lambda} = \argmin_{f \in \cH_K}\frac{1}{T}\sum_{t=1}^T \left(f\left(\hat{\P}_{t,n_{t}}\right) - y_t\right)^2 + \lambda \|f\|_{\cH_K}^2
\end{equation}
takes the form
\begin{equation*} 
    f_{\hat{\cD},\lambda}(\P) = (y_1, \ldots, y_T)(K_T + \lambda T I_T)^{-1}k_{\P}, \quad \forall \P \in X
\end{equation*}
where $[K_T]_{t,l} = K\left(\hat{\P}_{t,n_{t}},\hat{\P}_{l,n_{l}}\right) \in \R^{T \times T}$ and $k_{\P} = \left(K\left(\P, \hat{\P}_{1,n_{1}}\right), \ldots, K\left(\P, \hat{\P}_{T,n_{T}}\right)\right)^{\top} \in \R^T$.
\end{proposition}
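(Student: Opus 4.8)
The plan is to prove this by the standard two-step route for kernel ridge regression: first reduce the infinite-dimensional problem to a finite-dimensional one via a representer-theorem argument, then solve the resulting quadratic program explicitly.

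First I would show that the minimizer lies in the span of the training feature maps. Decompose any $f \in \cH_K$ orthogonally as $f = f_{\parallel} + f_{\perp}$, where $f_{\parallel}$ is the projection onto $V := \mathrm{span}\{K_{\hat\P_{t,n_t}}\}_{t=1}^T$ and $f_\perp \perp V$ in $\cH_K$. By the reproducing property, $f(\hat\P_{t,n_t}) = \langle f, K_{\hat\P_{t,n_t}} \rangle_{\cH_K} = \langle f_{\parallel}, K_{\hat\P_{t,n_t}} \rangle_{\cH_K} = f_{\parallel}(\hat\P_{t,n_t})$, so the data-fitting term depends only on $f_{\parallel}$, whereas $\|f\|_{\cH_K}^2 = \|f_{\parallel}\|_{\cH_K}^2 + \|f_{\perp}\|_{\cH_K}^2 \geq \|f_{\parallel}\|_{\cH_K}^2$. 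Hence setting $f_{\perp} = 0$ can only decrease the objective, and the minimizer admits the representation $f = \sum_{t=1}^T c_t\, K_{\hat\P_{t,n_t}}$ for some coefficient vector $c = (c_1,\ldots,c_T)^{\top} \in \R^T$.

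Next I would substitute this form into the objective. Writing $y := (y_1,\ldots,y_T)^{\top}$ and using $f(\hat\P_{l,n_l}) = (K_T c)_l$ together with $\|f\|_{\cH_K}^2 = c^{\top} K_T c$, the objective becomes $\tfrac{1}{T}\|K_T c - y\|^2 + \lambda\, c^{\top} K_T c$. This is convex in $c$ because $K_T \succeq 0$, so any stationary point is a global minimizer. Setting the gradient to zero and clearing the factor $1/T$ gives $K_T\big((K_T + \lambda T I_T)c - y\big) = 0$, for which it suffices to solve $(K_T + \lambda T I_T)c = y$. Since $K_T \succeq 0$ and $\lambda T > 0$, the matrix $K_T + \lambda T I_T$ is invertible, so $c = (K_T + \lambda T I_T)^{-1} y$ is a valid solution. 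Evaluating at a test point $\P \in X$ then yields $f_{\hat{\cD},\lambda}(\P) = \sum_{t=1}^T c_t\, K(\hat\P_{t,n_t},\P) = c^{\top} k_{\P} = y^{\top}(K_T + \lambda T I_T)^{-1} k_{\P}$, using the symmetry of $(K_T + \lambda T I_T)^{-1}$ to match the claimed order $(y_1,\ldots,y_T)(K_T + \lambda T I_T)^{-1} k_{\P}$.

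The only point requiring care — and the one I would flag as the main (if minor) obstacle — is that the coefficient vector $c$ need not be unique when $K_T$ is singular, so a priori the formula might look ill-defined. This is resolved by observing that the minimizer $f_{\hat{\cD},\lambda} \in \cH_K$ itself is unique, since the objective is strongly convex in $f$ thanks to the term $\lambda\|f\|_{\cH_K}^2$ with $\lambda > 0$. Consequently the function produced by the particular choice $c = (K_T + \lambda T I_T)^{-1} y$ — and therefore its evaluation $f_{\hat{\cD},\lambda}(\P)$ at any $\P$ — is independent of which stationary $c$ we select, so the closed form is well defined.
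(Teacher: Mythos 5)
Your proposal is correct and follows essentially the same route as the paper: a representer reduction to $f=\sum_t c_t K_{\hat\P_{t,n_t}}$, the same quadratic objective and first-order condition $K_T\big((K_T+\lambda T I_T)c-y\big)=0$, and the same resolution of the non-uniqueness of $c$ when $K_T$ is singular. The only cosmetic differences are that you prove the representer step inline via the orthogonal-projection argument where the paper cites the representer theorem, and you deduce well-definedness of the closed form from strong convexity of the objective in $f$, whereas the paper verifies it directly by computing $\left\|f_{\alpha'}-f_{\alpha^*}\right\|_{\cH_K}^2=(\alpha'-\alpha^*)^{\top}K_T(\alpha'-\alpha^*)=0$ for any two stationary coefficient vectors.
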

\begin{proof}
By the representer theorem \citet{scholkopf2001generalized} a minimizer of the empirical risk takes the form $f_{\alpha}(\cdot) = \sum_{t=1}^T \alpha_t K(\hat\P_{t,n_{t}}, \cdot)$. The minimisation of the empirical risk is therefore equivalent to 
\begin{equation*}
    \begin{aligned}
         \alpha^* &= \argmin_{\alpha \in \R^T} \|K_T\alpha - y\|_{\R^T}^2 +  \lambda T \alpha^{\top} K_T \alpha
         \\ &= \argmin_{\alpha \in \R^T} \alpha^T K_T\left(K_T + \lambda T I_T \right) \alpha - 2y^{\top}K_T\alpha.
    \end{aligned}
\end{equation*}
Taking the gradient with respect to $\alpha$, we obtain the first order condition $K_T\left[(K_T + \lambda T I_T)\alpha - y \right] = 0$. Hence any solution takes the form
$$
\alpha'=\underbrace{(K_T+\lambda T I_T)^{-1}y}_{=: \alpha^*}+\epsilon, \text { with } K_T\epsilon=0.
$$
When $K_T$ is singular we have an infinity of solutions, however, they all lead to the same solution in $\cH_K$. Indeed, if $\alpha'=\alpha^*+\epsilon$ with $K_T \epsilon=0$, then:
$$
\left\|f_{\alpha'}-f_{\alpha^*}\right\|_{\mathcal{H}_K}^{2}=\left(\alpha'-\alpha^*\right)^{\top} K_T\left(\alpha'-\alpha^*\right)=0
$$
therefore $f_{\alpha'}=f_{\alpha^*}$. It shows that Kernel Ridge Regression has a unique solution $f^* \in \cH_K$ defined by
\begin{equation*} 
    f^*(\P) = (y_1, \ldots, y_T)(K_T + \lambda T I_T)^{-1}k_{\P}, \quad \left(\P \in X\right)
\end{equation*}
which can potentially be expressed by several $\alpha$'s if $K_T$ is singular.
\end{proof}

\section{Distance substitution kernels} \label{sec:a_B}
\begin{proposition}[Proposition 1 \cite{haasdonk2004learning}] \label{prop:app_d_subst} For a set $M$ and a pseudo-distance distance $d$ on $M$, the following statements are equivalent:
    \begin{itemize}
    \item $d$ is a Hilbertian pseudo-distance;
    \item $K_{lin}(x,y) = \langle x, y \rangle_d^{x_0}$, for all $x_0 \in M$, $(x,y) \in M^2$ is p.d.;
    \item $K_{poly}(x,y) = (c +  \langle x, y \rangle_d^{x_0})^{\ell}$ for all $c
      \geq 0$ and $\ell \in \N$, $(x,y) \in M^2$ is p.d.;
    \item $K(x,y) = \exp(-\gamma d^{2\beta}(x,y))$ for all $\gamma \geq 0,
      \beta \in [0,1]$, $(x,y) \in M^2$ is p.d.;
    \end{itemize}
\end{proposition}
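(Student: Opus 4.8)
The plan is to route all four statements through a single central object, the conditional negative definiteness (c.n.d.) of the squared pseudo-distance $d^2$, with Schoenberg's theorem as the main engine. Recall that a symmetric $\psi: M \times M \to \R$ vanishing on the diagonal is c.n.d. when $\sum_{i,j} c_i c_j \psi(x_i,x_j) \le 0$ for every finite family with $\sum_i c_i = 0$. I would first prove the key link \emph{$d$ is Hilbertian $\iff$ $d^2$ is c.n.d.} The forward direction is immediate: if $d(x,y) = \|\Phi(x) - \Phi(y)\|_{\cF}$ then expanding the inner product and using $\sum_i c_i = 0$ kills the diagonal terms and leaves $-\|\sum_i c_i \Phi(x_i)\|_{\cF}^2 \le 0$. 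For the converse I would use the polarization (Gelfand--Naimark--Segal) construction: when $d^2$ is c.n.d. and diagonal-vanishing, the function $\langle x,y\rangle_d^{x_0} = \tfrac{1}{2}\big(d(x,x_0)^2 + d(y,x_0)^2 - d(x,y)^2\big)$ is p.d. for every fixed origin $x_0$, and the feature map $\phi$ furnished by its RKHS recovers $d$ through $\|\phi(x)-\phi(y)\|^2 = \langle x,x\rangle_d^{x_0} - 2\langle x,y\rangle_d^{x_0} + \langle y,y\rangle_d^{x_0} = d(x,y)^2$. This already yields (1) $\iff$ (2), since (2) is exactly the assertion that $K_{lin} = \langle\cdot,\cdot\rangle_d^{x_0}$ is p.d.

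For (2) $\iff$ (3), one direction is trivial: (3) with $c=0$, $\ell=1$ is (2). The reverse uses closure properties of p.d. kernels: a nonnegative constant is p.d., so $c + K_{lin}$ is p.d., and by the Schur product theorem finite products---hence the integer powers $(c+K_{lin})^{\ell}$---of p.d. kernels remain p.d.

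Finally, for (4) I would invoke Schoenberg's theorem in the form: a diagonal-vanishing symmetric $\psi$ is c.n.d. iff $e^{-\gamma\psi}$ is p.d. for all $\gamma > 0$ (the case $\gamma = 0$ being the trivial constant kernel). Applied with $\psi = d^{2\beta}$ this reduces (4) to showing that $d^{2\beta}$ is c.n.d. for every $\beta \in [0,1]$. The endpoint $\beta = 1$ is the equivalence established above, and $\beta = 0$ is trivial; the intermediate case is where the real work lies, and I expect it to be the main obstacle. The plan is to use the subordination identity $u^{\beta} = c_{\beta}\int_0^{\infty}(1 - e^{-tu})\,t^{-1-\beta}\,dt$, valid for $u \ge 0$ and $\beta \in (0,1)$: since $d^2$ is c.n.d., each $e^{-t d^2}$ is p.d. by Schoenberg, hence each integrand $1 - e^{-t d^2}$ is c.n.d. (a sum of the c.n.d. constant $1$ and the c.n.d. function $-e^{-t d^2}$), and because the cone of c.n.d. functions is closed under nonnegative integration one concludes that $d^{2\beta}$ is c.n.d. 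Applying Schoenberg once more gives that $\exp(-\gamma d^{2\beta})$ is p.d. The only delicate points are verifying convergence of the subordination integral and justifying that c.n.d.-ness passes through it; the remaining implications are formal manipulations with the polarization identity and the Schur product theorem.
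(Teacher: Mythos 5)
Your proposal is correct, but there is nothing in the paper to compare it against: the paper imports \cref{prop:app_d_subst} verbatim as Proposition~1 of \citet{haasdonk2004learning} and gives no proof. Judged on its own, your argument is complete and is essentially the standard proof behind the cited result; notably, it runs on exactly the machinery the paper itself deploys elsewhere, namely the c.n.d./Schoenberg toolkit of \cref{sec:a_B} (your ``Hilbertian $\iff d^2$ c.n.d.'' link is the combination of the paper's two propositions surrounding \cref{prop:a_shoen}, stated there for squared feature-map distances). The individual steps check out: the polarization converse correctly recovers $\|\phi(x)-\phi(y)\|^2 = d(x,y)^2$ from the RKHS of $\langle\cdot,\cdot\rangle_d^{x_0}$ (using that a pseudo-distance vanishes on the diagonal, so the $\psi(x_0,x_0)$ term drops); the Schur-product argument for $(c+K_{lin})^{\ell}$ is routine; and the subordination identity $u^{\beta} = \frac{\beta}{\Gamma(1-\beta)}\int_0^{\infty}\bigl(1-e^{-tu}\bigr)t^{-1-\beta}\,dt$ converges at both endpoints for $\beta\in(0,1)$ (integrand $O(t^{-\beta})$ at $0$, $O(t^{-1-\beta})$ at $\infty$), while c.n.d.-ness passes through the integral without any interchange subtlety because for each fixed admissible family $\sum_i c_i = 0$ the quantity $\sum_{i,j}c_ic_j\bigl(1-e^{-t\,d^2(x_i,x_j)}\bigr)$ is pointwise nonpositive and the sum is finite. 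One point worth a more explicit sentence: the proposition asserts equivalence, so you need $(4)\Rightarrow(1)$, not just $(1)\Rightarrow(4)$; since you invoke Schoenberg as a biconditional, specializing $(4)$ to $\beta=1$, $\gamma>0$ gives that $d^2$ is c.n.d.\ and hence that $d$ is Hilbertian. Your remark that ``the endpoint $\beta=1$ is the equivalence established above'' gestures at this, but the closing of the cycle deserves to be spelled out rather than left implicit in the reduction.
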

We can therefore embed Hilbertian pseudo-distances in polynomial-like, Laplace-like ($\beta=1/2$) and Gaussian-like ($\beta=1$) kernels. Furthermore \citet{szabo2016learning} provides examples of Cauchy-like, t-student-like and inverse multiquadratic-like kernels. 

We now prove that $\sqrt{d_{{\rm SW}_1}}$ is Hilbertian by showing that $d_{{\rm SW}_1}$ is conditionally negative definite (c.n.d). We first recall the definition of a c.n.d function and how it relates to Hilbertian pseudo-distances. 
\begin{definition}
   Let $M$ be a set. A map $f:M \times M \rightarrow \R$ is called {\em conditionally negative definite} (c.n.d.) if and only if it is symmetric and satisfies:
   $$
   \sum_{i,j=1}^{n} a_{i} a_{j} f(x_{i},x_{j}) \leq 0
   $$
   for any $n \in \N$, $(x_1, \ldots, x_n) \in M^{n}$ and $(a_1, \ldots, a_n) \in \R^{n}$ with $\sum_{i=1}^{n} a_{i} = 0$.
\end{definition}
Square of Hilbertian pseudo-distances are c.n.d.
\begin{proposition}
    Let $\Phi:M \to \cF$ be a mapping to a Hilbert space $\cF$ and $f(x,y)=\|\Phi(x) - \Phi(y)\|^{2}_{\cF}$, then $f$ is c.n.d.
\end{proposition}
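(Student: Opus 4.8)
The plan is to verify the defining inequality directly by expanding the squared norm and exploiting the constraint $\sum_i a_i = 0$. Symmetry of $f$ is immediate, since $f(x,y) = \|\Phi(x) - \Phi(y)\|_{\cF}^2 = \|\Phi(y) - \Phi(x)\|_{\cF}^2 = f(y,x)$, so the only substantive point is the sign condition.

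First I would fix $n \in \N$, points $x_1, \ldots, x_n \in M$ and reals $a_1, \ldots, a_n$ with $\sum_{i=1}^n a_i = 0$, and expand each term using bilinearity of the inner product:
\[
f(x_i, x_j) = \|\Phi(x_i)\|_{\cF}^2 - 2\langle \Phi(x_i), \Phi(x_j)\rangle_{\cF} + \|\Phi(x_j)\|_{\cF}^2.
\]
Next I would substitute this into the double sum $\sum_{i,j} a_i a_j f(x_i,x_j)$ and split it into three pieces. The first piece, $\sum_{i,j} a_i a_j \|\Phi(x_i)\|_{\cF}^2$, factors as $\left(\sum_i a_i \|\Phi(x_i)\|_{\cF}^2\right)\left(\sum_j a_j\right)$, which vanishes because $\sum_j a_j = 0$; by symmetry the third piece vanishes as well. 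This is precisely where the conditional constraint does all the work.

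What remains is the cross term, and the final step is to recognize it as a single squared norm:
\[
\sum_{i,j} a_i a_j f(x_i, x_j) = -2 \sum_{i,j} a_i a_j \langle \Phi(x_i), \Phi(x_j)\rangle_{\cF} = -2 \left\| \sum_{i} a_i \Phi(x_i)\right\|_{\cF}^2 \leq 0,
\]
where the middle equality again uses bilinearity of the inner product. This establishes the c.n.d. inequality.

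There is no genuine obstacle here; the computation is elementary. The only point requiring a moment of care is the bookkeeping that collapses the first and third sums via $\sum_i a_i = 0$ — this is exactly the reason the definition of c.n.d. restricts to coefficient vectors summing to zero, since without that restriction the pure-norm terms would not cancel and the displayed inequality would fail in general.
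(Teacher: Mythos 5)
Your proof is correct and follows essentially the same route as the paper's: expand $\|\Phi(x_i)-\Phi(x_j)\|_{\cF}^2$ by bilinearity, use $\sum_i a_i = 0$ to kill the pure-norm terms, and identify the remaining cross term as $-2\bigl\|\sum_i a_i \Phi(x_i)\bigr\|_{\cF}^2 \leq 0$. No gaps; nothing further is needed.
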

\begin{proof}
    $f$ is symmetric and for $n \in \N$, $(a_1, \dots, a_n) \in \R^n$, $(x_1, \dots, x_n) \in M^n$, we have:
    \begin{equation*}
        \begin{aligned}
        \sum_{1 \leq i, j \leq n} a_i a_j f(x_i, x_j) &= \sum_{1 \leq i, j \leq n} a_i a_j\left( \|\Phi(x_i)\|^2 + \|\Phi(x_j)\|^2 - 2 \langle \Phi(x_i), \Phi(x_j)\rangle \right) \\
        &= 2\left(\sum_{i=1}^n a_i \|\Phi(x_i)\|^2\right) \left(\sum_{j=1}^n a_j\right) - 2 \left\|\sum_{i=1}^n a_i \Phi(x_i)\right\|^2
        \end{aligned}
    \end{equation*}
    Under the constraint $\sum_{i=1}^n a_i = 0$, the
    first term disappears and only a nonpositive term remains. Hence, $f: (x, y) \mapsto \|\Phi(x) - \Phi(y)\|^2$ is c.n.d.
\end{proof}
A corollary of Schoenberg’s theorem (see chapter 3 in \cite{berg84_harmon}) shows that the converse holds. 
\begin{proposition} \label{prop:a_shoen}
  Let $f$ be a c.n.d. function on $M$ such that $f(x,x)=0$ for any $x \in M$. Then, there exists a Hilbert space $\cF$ and a mapping $\Phi:M \rightarrow \cF$ such that for any $(x,y) \in M^2$,
  $$
  f(x,y) = ||\Phi(x)-\Phi(y)||^{2}_{\cF}.
  $$
\end{proposition}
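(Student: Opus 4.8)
The plan is to reduce the statement to the Moore--Aronszajn construction by manufacturing, out of the c.n.d. function $f$, an auxiliary \emph{positive definite} kernel whose associated squared Hilbert-space distance recovers $f$. Concretely, I would fix an arbitrary reference point $x_0 \in M$ and define
\begin{equation*}
  \tilde k(x,y) := \tfrac{1}{2}\bigl(f(x,x_0) + f(y,x_0) - f(x,y)\bigr),
\end{equation*}
the natural ``polarization'' of $f$ around $x_0$ (mirroring the construction of $\langle \cdot,\cdot\rangle_d^{x_0}$ in \cref{sec:dist_subst_kernels}). Symmetry of $\tilde k$ is immediate from that of $f$, and the hypothesis $f(x,x)=0$ gives the convenient identity $\tilde k(x,x)=f(x,x_0)$, which will be used at the very end.

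The heart of the argument, and the step I expect to be the main obstacle, is showing that $\tilde k$ is positive definite, since the c.n.d. hypothesis only controls quadratic forms whose coefficients sum to zero. The trick I would use is an \emph{augmentation}: given arbitrary points $x_1,\dots,x_n$ and coefficients $c_1,\dots,c_n$, set $S := \sum_{i=1}^n c_i$ and adjoin the reference point $x_0$ with coefficient $c_0 := -S$, so that $\sum_{i=0}^n c_i = 0$. Applying the c.n.d. inequality to the enlarged family $\{x_0,\dots,x_n\}$ with these coefficients, and using $f(x_0,x_0)=0$, yields
\begin{equation*}
  \sum_{i,j=1}^n c_i c_j f(x_i,x_j) \le 2S \sum_{i=1}^n c_i f(x_i,x_0).
\end{equation*}
Substituting the definition of $\tilde k$ and expanding the double sum, the terms involving $f(\cdot,x_0)$ collect into exactly $S\sum_i c_i f(x_i,x_0)$, leaving $\sum_{i,j} c_i c_j \tilde k(x_i,x_j) = S\sum_i c_i f(x_i,x_0) - \tfrac{1}{2}\sum_{i,j} c_i c_j f(x_i,x_j)$; the displayed inequality then forces this quantity to be $\ge 0$, establishing positive definiteness.

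With $\tilde k$ in hand as a bona fide p.d. kernel, I would invoke the Moore--Aronszajn theorem to obtain a Hilbert space $\cF$ (the RKHS of $\tilde k$) together with the canonical feature map $\Phi: M \to \cF$, $x \mapsto \tilde k(x,\cdot)$, satisfying $\langle \Phi(x),\Phi(y)\rangle_{\cF} = \tilde k(x,y)$. It then remains only to verify the squared-distance identity by a direct expansion:
\begin{equation*}
  \|\Phi(x)-\Phi(y)\|_{\cF}^2 = \tilde k(x,x) + \tilde k(y,y) - 2\tilde k(x,y) = f(x,x_0) + f(y,x_0) - \bigl(f(x,x_0)+f(y,x_0)-f(x,y)\bigr) = f(x,y),
\end{equation*}
where the middle equality uses $\tilde k(x,x)=f(x,x_0)$ and the definition of $\tilde k$. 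This closes the argument; the only genuine difficulty is the positive-definiteness verification, everything else being bookkeeping with the reference point $x_0$.
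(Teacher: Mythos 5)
Your proof is correct, and it is essentially the same argument the paper relies on: the paper does not prove this proposition itself but cites Schoenberg's theorem via Berg--Christensen--Ressel (chapter 3), whose proof is exactly your construction --- polarize $f$ around a reference point $x_0$ to get $\tilde k$, verify positive definiteness by augmenting an arbitrary coefficient family with $c_0 = -\sum_i c_i$ at $x_0$, and apply Moore--Aronszajn. Your augmentation step and the final expansion (using $f(x,x)=0$, and the symmetry of $f$ built into the paper's definition of c.n.d.) are all sound, so you have in effect filled in the citation with the standard proof.
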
 
As a consequence showing that $\sqrt{d_{{\rm SW}_1}}$ is Hilbertian is equivalent to showing that $d_{{\rm SW}_1}$ is c.n.d. We start with the following lemma. 
\begin{lemma}
    Let $M$ be a set, $(\Omega,\mathcal{A},\mu)$ a measurable space and $\phi: M
    \times \Omega \to \R$ be such that $\phi(x,\cdot) \in L_1(\mu)$ for all $x
    \in M$. Then, $f: M \times M \to \R, (x,y) \mapsto
    \int_{\Omega}|\phi(x,\omega)-\phi(y,\omega)|d\mu(\omega)$ is c.n.d.
\end{lemma}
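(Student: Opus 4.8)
The plan is to reduce the statement to the one-dimensional fact that the scalar kernel $(s,t)\mapsto|s-t|$ is c.n.d. on $\R$, and then to propagate this through $\phi$ and integrate over $\Omega$. Symmetry of $f$ is immediate from $|a-b|=|b-a|$, so the only thing requiring work is the inequality $\sum_{i,j}a_ia_jf(x_i,x_j)\le 0$ whenever $\sum_i a_i=0$.

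First I would establish that $g(s,t):=|s-t|$ is c.n.d. on $\R$ by a layer-cake representation. For $u\in\R$ set $h_u(s)=\mathbf{1}_{(-\infty,u]}(s)\in\{0,1\}$; then, since a difference of $\{0,1\}$-values agrees with its own square, one has $|h_u(s)-h_u(t)|=(h_u(s)-h_u(t))^2$ and $\int_\R (h_u(s)-h_u(t))^2\,du=|s-t|$. For each fixed $u$ the map $(s,t)\mapsto(h_u(s)-h_u(t))^2$ is a squared Hilbertian distance with feature map $h_u:\R\to\R=\cF$, hence c.n.d. by the preceding proposition. Since nonnegative linear combinations and integrals against a positive measure preserve the defining c.n.d. inequality, integrating over $u\in\R$ shows that $g$ is c.n.d.

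Next I would fix $\omega\in\Omega$ and observe that $(x,y)\mapsto|\phi(x,\omega)-\phi(y,\omega)|$ is c.n.d. on $M$: for any $x_1,\dots,x_n\in M$ and $a_1,\dots,a_n\in\R$ with $\sum_i a_i=0$, writing $s_i=\phi(x_i,\omega)$ yields $\sum_{i,j}a_ia_j|\phi(x_i,\omega)-\phi(x_j,\omega)|=\sum_{i,j}a_ia_j|s_i-s_j|\le 0$ by the previous step. Finally I would integrate this pointwise inequality over $\Omega$. The interchange of the finite double sum with the integral is legitimate because $\phi(x_i,\cdot)\in L_1(\mu)$ for every $i$ and $|\phi(x_i,\omega)-\phi(x_j,\omega)|\le|\phi(x_i,\omega)|+|\phi(x_j,\omega)|$ is $\mu$-integrable, so that $\sum_{i,j}a_ia_jf(x_i,x_j)=\int_\Omega\sum_{i,j}a_ia_j|\phi(x_i,\omega)-\phi(x_j,\omega)|\,d\mu(\omega)$; the integrand is $\le 0$ for $\mu$-a.e. $\omega$ and $\mu\ge 0$, which gives the conclusion.

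The only genuinely delicate point is the one-dimensional lemma that $|s-t|$ is c.n.d.; the integral representation above is the standard device and is the crux of the argument. Everything else, namely pulling the scalar kernel back through $\phi(\cdot,\omega)$ and exchanging the finite sum with the integral, is routine once the $L_1$-integrability hypothesis is invoked.
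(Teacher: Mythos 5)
Your proof is correct and rests on the same key device as the paper's: an integral representation of the absolute difference through indicator functions, which turns the quadratic form, under the constraint $\sum_i a_i = 0$, into minus the integral of a square. The paper carries this out in one computation inside the integral over $\Omega$, via $|u-v| = u+v-2\min(u,v)$ together with $\min(u,v) = \int_a^{+\infty} 1_{t\le u}1_{t\le v}\,dt + a$, whereas you modularize it (first proving $|s-t|$ is c.n.d.\ on $\R$ by the layer-cake identity, then pulling back through $\phi(\cdot,\omega)$ and integrating over $\Omega$); this is the same argument in substance, merely reorganized.
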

\begin{proof}
    First, notice that for all $(u,v) \in [a, +\infty)$, $\displaystyle \min(u,v)  = \int_a^{+\infty} 1_{t \leq
      u}1_{t \leq v}dt + a$. Secondly, for all $u,v \in
    \R$, $|u - v| = u + v -2\min(u,v)$. For all $n \in \N^*$, $(x_1,\ldots,x_n)
    \in M^n$ and $\omega \in \Omega$ define $a_n^{\omega} := a(\omega;
    x_1,\ldots,x_n) := \min_{i=1,\ldots,n} \phi(x_i,\omega)$. For all $(a_1, \ldots,
    a_n) \in \R^n$ such that $\sum_{i=1}^n a_i = 0$,
    \begin{equation*}
      \begin{aligned}
        \sum_{i,j=1}^n a_i a_j f(x_i,x_j) &= \int_{\Omega} \sum_{i,j=1}^n a_i a_j \left\{\phi(x_i,\omega) + \phi(x_j,\omega) - 2\min(\phi(x_i,\omega),\phi(x_j,\omega)) \right\} d\mu(\omega) \\
        &= -2 \int_{\Omega} \sum_{i,j=1}^n a_i a_j \min(\phi(x_i,\omega),\phi(x_j,\omega))d\mu(\omega) \qquad \text{ since } \sum_{i=1}^n a_i = 0 \\
        &= -2 \int_{\Omega} \sum_{i,j=1}^n a_i a_j \left(\int_{a_n^{\omega}}^{+\infty} 1_{t \leq \phi(x_i,\omega)} 1_{t \leq \phi(x_j,\omega)}dt + a_n^{\omega}\right) d\mu(\omega) \\
        &=  -2 \int_{\Omega}\int_{a_n^{\omega}}^{+\infty} \left(\sum_{i=1}^n a_i 1_{t \leq \phi(x_i,\omega)} \right)^2 dt + \left(\sum_{i=1}^n a_i \right)^2a_n^{\omega} d\mu(\omega) \\ 
        &=  -2 \int_{\Omega}\int_{a_n^{\omega}}^{+\infty} \left(\sum_{i=1}^n a_i 1_{t \leq \phi(x_i,\omega)} \right)^2 dt d\mu(\omega) \qquad  \text{ since } \sum_{i=1}^n a_i = 0 \\
        & \leq 0 
      \end{aligned}
    \end{equation*}
    which proves that $f$ is c.n.d.
\end{proof}
Applying the result to $d_{W_{1}}(\P, \Q) = \int_0^1 |F_{\P}^{[-1]}(x)-F_{\Q}^{[-1]}(x)|dx$ shows that $d_{W_{1}}$ is c.n.d. We conclude that $d_{{\rm SW}_{1}}$ is c.n.d by linearity of the integral. 
\begin{proposition} $d_{{\rm SW}_{1}}(\P, \Q) =\int_{\mathbb{S}^{r-1}} d_{W_{1}}\left(\theta^*_{\#}\P, \theta^*_{\#}\Q \right) d\theta$ is c.n.d.
\end{proposition}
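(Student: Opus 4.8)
The plan is to reduce the statement to the preceding lemma and then close the class of c.n.d.\ functions under integration over the sphere. First I would establish that the one-dimensional Wasserstein distance is c.n.d.\ on $\cM_+^1(\R)$ by invoking the lemma with the choice $\Omega = (0,1)$, the measure $\mu = \lambda^1$ (Lebesgue measure on $(0,1)$), and $\phi(\al,t) = F_{\al}^{[-1]}(t)$. Since the one-dimensional closed form gives $d_{W_1}(\al,\be) = \int_0^1 \big|F_{\al}^{[-1]}(t) - F_{\be}^{[-1]}(t)\big|\, dt$, this is exactly the form $\int_{\Omega} |\phi(\al,\omega) - \phi(\be,\omega)|\, d\mu(\omega)$ that the lemma assumes, so the lemma immediately yields that $d_{W_1}$ is c.n.d.\ on $\cM_+^1(\R)$.

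Next I would lift this to the sliced distance on $X$. For each fixed direction $\theta \in S^{r-1}$, define $g_{\theta}: X \times X \to \R$ by $g_{\theta}(\P,\Q) = d_{W_1}(\theta^*_{\#}\P, \theta^*_{\#}\Q)$. The push-forward $\P \mapsto \theta^*_{\#}\P$ maps $X$ into $\cM_+^1(\R)$, and precomposing a c.n.d.\ function with one and the same map applied to both arguments preserves the c.n.d.\ property, since the defining inequality only involves the values $g_{\theta}(\P_i,\P_j)$ evaluated at the images $\theta^*_{\#}\P_i$. Hence each $g_{\theta}$ is c.n.d.\ on $X$.

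Finally, because $d_{{\rm SW}_1}(\P,\Q) = \int_{S^{r-1}} g_{\theta}(\P,\Q)\, d\theta$ is an average of the $g_{\theta}$ against the nonnegative (probability) measure $\sigma^r$, the c.n.d.\ property is preserved: for any $n \in \N$, points $\P_1,\ldots,\P_n \in X$, and coefficients $a_1,\ldots,a_n \in \R$ with $\sum_i a_i = 0$, Fubini's theorem gives $\sum_{i,j} a_i a_j\, d_{{\rm SW}_1}(\P_i,\P_j) = \int_{S^{r-1}} \big(\sum_{i,j} a_i a_j\, g_{\theta}(\P_i,\P_j)\big)\, d\theta \leq 0$, since each inner double sum is nonpositive and $\sigma^r \geq 0$. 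Symmetry of $d_{{\rm SW}_1}$ is inherited directly from that of $d_{W_1}$.

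The only genuine technical point to verify, and hence the main (mild) obstacle, is the applicability of the lemma and of Fubini, i.e.\ the integrability hypotheses. I must check that $\phi(\al,\cdot) = F_{\al}^{[-1]} \in L^1(0,1)$ and that $\theta \mapsto g_{\theta}(\P_i,\P_j)$ is measurable. Both follow from the compactness of $\Omega_r$: every projection $\theta^*_{\#}\P$ is supported in the fixed bounded interval $[-R,R]$ with $R = \sup_{x \in \Omega_r}\|x\|$, so the quantile functions are uniformly bounded and thus integrable on $(0,1)$, while joint measurability in $\theta$ is routine. With these checks in place the three steps above combine to give the claim.
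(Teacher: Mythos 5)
Your proposal is correct and follows essentially the same route as the paper: apply the lemma with $\Omega=(0,1)$, $\mu=\lambda^1$, $\phi(\alpha,t)=F_{\alpha}^{[-1]}(t)$ to get that $d_{W_1}$ is c.n.d., then note that evaluating the c.n.d.\ inequality at the projected measures $\theta^*_{\#}\P_i$ and integrating the nonpositive quadratic form over $S^{r-1}$ preserves the sign. Your extra verification of the integrability hypotheses (quantile functions uniformly bounded by $R=\sup_{x\in\Omega_r}\|x\|$ thanks to compactness, plus measurability in $\theta$) is a point the paper leaves implicit, but it does not change the argument.
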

\begin{proof}
    For all $n \in \N$, $(a_1, \dots, a_n) \in \R^n$ such that $\sum_{i=1}^n a_i = 0$ and $(\P_1, \dots, \P_n) \in X^n$,
    \begin{equation*}
        \sum_{1 \leq i, j \leq n} a_i a_j d_{{\rm SW}_{1}}(\P_i, \P_j) = \int_{\mathbb{S}^{r-1}} \underbrace{\sum_{1 \leq i, j \leq n} a_i a_j d_{W_{1}}\left(\theta^*_{\#}\P_i, \theta^*_{\#}\P_j \right)}_{\leq 0} d\theta \leq 0.
    \end{equation*}
\end{proof}
Hence by \cref{prop:a_shoen}, $\sqrt{d_{{\rm SW}_1}}$ is Hilbertian. It is important to observe that the proof that $\sqrt{d_{{\rm SW}_1}}$ is Hilbertian does not lead to an explicit feature map and feature space while for $d_{{\rm SW}_2}$ we have an explicit construction. 

\section{Universality with ${\rm SW}_1$} \label{sec:a_C}
\begin{proposition} \label{prop:a_uni_sw1}
    $K_1:X \times X \to \R, (\P,\Q) \mapsto e^{-\gamma {\rm SW}_1(\P,\Q)}$ is universal for $(X,d_{LP})$ where $d_{LP}$ is the Lévy–Prokhorov distance. 
\end{proposition}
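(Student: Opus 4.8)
The plan is to reduce the universality of $K_1$ to the same abstract criterion we already used for $K_2$, namely \cref{th:steinwart}. That theorem requires three ingredients: a compact metric space $(\cX, d_{\cX})$, a separable Hilbert feature space $\cF$, and a continuous injective feature map $\Phi$ such that the kernel is the Gaussian-like kernel built from $\|\Phi(\cdot) - \Phi(\cdot)\|_{\cF}$. The subtlety compared with the $K_2$ case is that $K_1(\P,\Q) = e^{-\gamma d_{{\rm SW}_1}(\P,\Q)}$ has no square in the exponent, and it is $\sqrt{d_{{\rm SW}_1}}$ rather than $d_{{\rm SW}_1}$ itself that is Hilbertian. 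So the feature map I must feed into \cref{th:steinwart} is the map $\Psi$ realizing $\sqrt{d_{{\rm SW}_1}}(\P,\Q) = \|\Psi(\P) - \Psi(\Q)\|_{\cF}$, and then $K_1(\P,\Q) = e^{-\gamma \|\Psi(\P)-\Psi(\Q)\|_{\cF}^2}$, which is exactly the Gaussian-like form required. This is the reason the $K_1$ argument is "slightly longer": we must work with the implicit feature map from Schoenberg's theorem rather than the explicit one available for $d_{{\rm SW}_2}$.

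First I would record that $X = \cM_+^1(\Omega_r)$ is weakly compact, since $\Omega_r$ is compact (\citep[Thm 6.4][]{parthasarathy2005probability}), and that the Lévy–Prokhorov distance $d_{LP}$ metrizes the weak convergence, so $(X, d_{LP})$ is a compact metric space. Next I would invoke the fact proved in \cref{sec:a_B} that $d_{{\rm SW}_1}$ is c.n.d.\ with $d_{{\rm SW}_1}(\P,\P)=0$, so by \cref{prop:a_shoen} there exist a Hilbert space $\cF$ and a map $\Psi: X \to \cF$ with $\sqrt{d_{{\rm SW}_1}}(\P,\Q) = \|\Psi(\P)-\Psi(\Q)\|_{\cF}$. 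Injectivity of $\Psi$ follows because $\sqrt{d_{{\rm SW}_1}}$ is a genuine distance (it satisfies the identity of indiscernibles, inherited from the fact that $d_{{\rm SW}_1}$ is a distance metrizing weak convergence), so $\|\Psi(\P)-\Psi(\Q)\|_{\cF}=0 \implies \P = \Q$.

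The two remaining points are continuity of $\Psi$ with respect to $d_{LP}$ and separability of $\cF$; these are where I expect the real work to sit. For continuity, the estimate $\|\Psi(\P)-\Psi(\Q)\|_{\cF} = \sqrt{d_{{\rm SW}_1}(\P,\Q)}$ shows $\Psi$ is continuous from $(X, d_{{\rm SW}_1})$ to $\cF$ (indeed $\tfrac12$-Hölder in $d_{{\rm SW}_1}$); since $d_{{\rm SW}_1}$ metrizes weak convergence \citep{nadjahi2019asymptotic,nadjahi2020statistical} and $d_{LP}$ also metrizes weak convergence, the two topologies coincide on $X$, so $\Psi$ is continuous with respect to $d_{LP}$. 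For separability, the cleanest route is not to appeal to an abstract form of Schoenberg's theorem but to note that $\Psi$ may be taken with separable range: $(X, d_{LP})$ is a compact metric space, hence separable, and the continuous image $\Psi(X)$ is a separable subset of $\cF$, so we may replace $\cF$ by $\overline{\operatorname{span}}\,\Psi(X)$, which is a separable Hilbert space containing the range of $\Psi$. The main obstacle is exactly this separability bookkeeping: unlike the $d_{{\rm SW}_2}$ case, where $\cF = L^2((0,1)\times S^{r-1})$ is manifestly separable, here $\cF$ comes from an existence theorem, so I must argue separability of the relevant subspace by hand. Once continuity, injectivity, and separability are in place, \cref{th:steinwart} applied with this $\Psi$ yields that $K_1 = e^{-\gamma\|\Psi(\cdot)-\Psi(\cdot)\|_{\cF}^2}$ is universal on $(X,d_{LP})$, completing the proof.
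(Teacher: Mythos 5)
Your proposal is correct and follows the paper's overall strategy—reduce to \cref{th:steinwart} via the implicit feature map $\Psi$ from Schoenberg's theorem (\cref{prop:a_shoen}), with injectivity from the identity of indiscernibles and continuity from the fact that $d_{{\rm SW}_1}$ and $d_{LP}$ both metrize weak convergence—but you handle the one genuinely delicate step, separability of $\cF$, by a different and more elementary route. The paper makes a specific choice of feature space: it invokes Proposition 2.2 of \citet{smola1998learning} to realize $\cF$ as the RKHS of the auxiliary kernel $\tilde{k}(\P,\Q) = d_{{\rm SW}_1}(\P_0,\P) + d_{{\rm SW}_1}(\P_0,\Q) - d_{{\rm SW}_1}(\P,\Q)$, then deduces separability of this RKHS from Lemma 4.33 of \citet{steinwart2008support} (separable input space plus continuous kernel), with continuity of $\tilde{k}$ supplied by Lemma 4.29 there. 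You instead take whatever $\cF$ Schoenberg's theorem provides and shrink it: since $(X, d_{LP})$ is compact metric, hence separable, and $\Psi$ is continuous (indeed, an isometry onto its image when $X$ carries the metric $\sqrt{d_{{\rm SW}_1}}$), the image $\Psi(X)$ is separable, so $\overline{\operatorname{span}}\,\Psi(X)$ is a separable Hilbert space containing the range, and one may restrict the codomain without affecting the identity $K_1(\P,\Q) = e^{-\gamma\|\Psi(\P)-\Psi(\Q)\|_{\cF}^2}$. Your argument is self-contained, avoiding the two external RKHS lemmas, at the cost of not exhibiting a canonical feature space; the paper's construction identifies $\cF$ concretely as an RKHS, which is perhaps more informative but logically heavier. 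A minor point in your favor: you state explicitly why the exponent in $K_1$ carries no square—because $K_1(\P,\Q) = e^{-\gamma(\sqrt{d_{{\rm SW}_1}}(\P,\Q))^2}$ is exactly the Gaussian-like form required by \cref{th:steinwart}—which the paper leaves implicit in the phrase ``we can reason as for $K_2$.''
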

Since we do not know explicitely the feature map nor the feature space for $\sqrt{d_{{\rm SW}_1}}$ we need to take a little detour for the separability requirement.
\begin{proof}
  Since $\sqrt{d_{{\rm SW}_1}}$ is Hilbertian, there is a feature map $\Phi: \cM_+^1(\Omega_r) \to \cF$ where $\cF$ is a Hilbert space such that $\sqrt{d_{{\rm SW}_1}}(\P,\Q) =
  \|\Phi(\P) - \Phi(\Q)\|_{\cF}$. Since $d_{{\rm SW}_1}$ satisfies the identity
  of indiscernibles, $\sqrt{d_{{\rm SW}_1}}$ too, which implies that
  $\Phi$ is injective and hence $\sqrt{d_{{\rm SW}_1}}$ is a distance. \cite{nadjahi2019asymptotic} showed that $d_{{\rm SW}_1}$ metrizes the weak convergence and by continuity of the square root $\sqrt{d_{{\rm SW}_1}}$ too. Therefore, we can reason as for $K_2$ except that we need to show that $\cF$ is separable. By Proposition 2.2 \cite{smola1998learning}, $\cF$ can be chosen as the RKHS induced by the kernel
  $$
  \tilde{k}(\P,\Q) = d_{{\rm SW}_1}(\P_{0},\P) + d_{{\rm SW}_1}(\P_{0},\Q)  - d_{{\rm SW}_1}(\P,\Q) = 2
  \langle \Phi(\P) - \Phi(\P_0), \Phi(\Q) - \Phi(\P_0) \rangle_{\cF}.
  $$
  for $\P_0 \in \cM_+^1(\Omega_r)$. By Lemma 4.33
  in \cite{steinwart2008support}, if the input space is separable and
  $\tilde{k}$ is continuous then the RKHS is separable. Here the input space is
  $\cM_+^1(\Omega_r)$ which is weakly compact (since we have assumed $\Omega_r$ compact) hence weakly separable. Furthermore $\Phi$ continuous w.r.t $\sqrt{d_{{\rm SW}_1}}$ implies $\tilde{k}$ continuous (Lemma 4.29 in \cite{steinwart2008support}).
\end{proof}

\section{Practical implementation of the Sliced Wasserstein kernel} \label{sec:a_prac_sw}
Computing SW kernel values requires to evaluate the SW distance. In this section we detail how. The following lemma explain how the generalised inverse of the cumulative distribution function of an empirical measure can be evaluated. 

\begin{lemma}
    We consider the distribution $\hat\alpha  = \sum_{k=1}^{+\infty} p_k
    \delta_{x_k}$ where $(x_k)_{k \geq 1} \in \R^{\N}$ and  $(p_k)_{k \geq 1}
    \in \R^{\N}_+$ such that $\sum_{k \geq 1} p_k = 1$. We refer to the sorted sequence as $(x_{(k)})_{k \geq 1} \in \R^{\N}$
    with reordered weights $(p_{(k)})_{k \geq 1}$ and define $\displaystyle
    s_k = \sum_{i=1}^k p_{(i)}$, $s_0 = 0$. Then, we have
    $$
    F_{\hat\alpha}(x)=\P_{\hat\alpha}(X \leq x) =
    \sum_{k=1}^{+\infty}p_k1_{x_k \leq x} = s_{\max (k \mid x_{(k)} \leq x)} \qquad \left(x \in \R\right),
    $$
    $$
    F_{\hat\alpha}^{[-1]}(t)=\inf \left\{x \in \mathbb{R}:
    \sum_{k=1}^{+\infty}p_k1_{x_k \leq x} \geq t\right\}  =  \{x_{(k)} \mid
    s_{k-1} < t \leq s_{k}, k \geq 1\}, \qquad F_{\hat\alpha}^{[-1]}(0) = -\infty.
    $$
    
    In particular, for a uniform empirical distribution on a set of $n$ points
    $\displaystyle \hat\alpha = \frac{1}{n} \sum_{i=1}^n \delta_{x_i}$, we get
    
    \begin{equation} \label{eq:unif_inv}
    \begin{aligned}
      F_{\hat\alpha}(x) &= \frac{\max (k \mid x_{(k)} \leq x)}{n} \qquad \left(x \in \R\right), \\
      F_{\hat\alpha}^{[-1]}(t) &= x_{(k)} \quad \text{ if } \frac{k-1}{n} < t \leq \frac{k}{n} \quad 1 \leq k \leq n \quad -\infty \text{ if } t=0 \quad \left(t \in [0,1]\right).
    \end{aligned}
    \end{equation}
    $F_{\hat\alpha}^{[-1]}(t)$ is a piece-wise constant function on the uniform
    grid with step $\displaystyle \left(\frac{k-1}{n}, \frac{k}{n}\right], 1 \leq
    k \leq n$.
\end{lemma}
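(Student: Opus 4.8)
The plan is to verify both displayed formulas directly from the definitions of the cumulative distribution function and its generalised inverse, exploiting the single structural fact that sorting the atoms turns the indicator sums into monotone partial sums. No deep machinery is needed; the content is careful bookkeeping around the step structure of $F_{\hat\alpha}$.

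First I would treat the CDF. By definition of the measure evaluation, $F_{\hat\alpha}(x) = \hat\alpha((-\infty,x]) = \sum_{k=1}^{+\infty} p_k \, 1_{x_k \leq x}$, which is the middle expression in the statement. Since sorting is merely a reindexing (a bijection of $\N$), the sum is unchanged under reordering, so $F_{\hat\alpha}(x) = \sum_{k=1}^{+\infty} p_{(k)} 1_{x_{(k)} \leq x}$. Then I would invoke monotonicity of the sorted sequence: because $x_{(1)} \leq x_{(2)} \leq \cdots$, the set $\{k : x_{(k)} \leq x\}$ is an initial segment $\{1,\ldots,K\}$ with $K = \max\{k : x_{(k)} \leq x\}$ (empty, giving $s_0 = 0$, when no atom lies below $x$). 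Hence the sum collapses to $\sum_{k=1}^{K} p_{(k)} = s_K = s_{\max(k \mid x_{(k)} \leq x)}$.

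Next I would turn to the generalised inverse. Fix $t \in (0,1]$; since $s_0 = 0 < t$ and $s_k \uparrow 1 \geq t$, there is an index $k$ with $s_{k-1} < t \leq s_k$, and I claim $F_{\hat\alpha}^{[-1]}(t) = \inf\{x \in \R : F_{\hat\alpha}(x) \geq t\} = x_{(k)}$. The two inclusions follow from the step structure established above. On one hand, for any $x \geq x_{(k)}$ we have $\max\{j : x_{(j)} \leq x\} \geq k$, so $F_{\hat\alpha}(x) = s_{\max(j \mid x_{(j)} \leq x)} \geq s_k \geq t$, showing $x_{(k)}$ lies in the set and so bounds the infimum from above. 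On the other hand, for $x < x_{(k)}$ the sorted ordering forces $\max\{j : x_{(j)} \leq x\} \leq k-1$, giving $F_{\hat\alpha}(x) \leq s_{k-1} < t$, so no point strictly below $x_{(k)}$ belongs to the set; this bounds the infimum from below. Combining the two yields the value $x_{(k)}$, and the boundary case $t = 0$ gives $F_{\hat\alpha}^{[-1]}(0) = \inf \R = -\infty$ by convention.

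Finally, the uniform case is a direct specialisation: taking $p_k = 1/n$ gives $s_k = k/n$, so the CDF reads $\max(k \mid x_{(k)} \leq x)/n$ and the condition $s_{k-1} < t \leq s_k$ becomes $\tfrac{k-1}{n} < t \leq \tfrac{k}{n}$, recovering \cref{eq:unif_inv}. The main thing to get right, rather than any conceptual difficulty, is the presence of ties among the $x_{(k)}$ (equivalently, atoms of zero weight, for which the interval $(s_{k-1}, s_k]$ is empty): I would note that whenever $x_{(k-1)} = x_{(k)}$ both candidate indices return the same value, so the characterisation via the half-open intervals $(s_{k-1}, s_k]$ remains consistent and the argument above goes through unchanged.
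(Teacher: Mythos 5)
Your proof is correct: the paper states this lemma \emph{without proof} (it is invoked in the appendix on practical computation of the SW distance as a routine fact), and your direct verification from the definitions --- reordering the nonnegative sum under the sorting bijection, collapsing it over the initial segment $\{k \mid x_{(k)} \leq x\}$, and sandwiching the infimum from both sides via the step structure of $F_{\hat\alpha}$, with the tie/zero-weight cases checked explicitly --- is precisely the elementary argument the paper implicitly relies on. The only caveats, which the lemma's own statement glosses equally, concern genuinely infinite supports (e.g.\ the existence of a sorted sequence, or $t=1$ when $s_k < 1$ for every finite $k$, in which case no index satisfies $s_{k-1} < t \leq s_k$); these are immaterial for the finitely supported empirical distributions the paper actually uses, so your write-up fills the omitted proof adequately.
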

We can use \cref{eq:unif_inv} to compute the W distance and approximate the SW distance between empirical distributions.

\paragraph{Balanced setting.} For two empirical distributions with the same number of points: $\displaystyle \hat\alpha =
\frac{1}{n} \sum_{i=1}^n \delta_{x_i}$, $\displaystyle \hat\beta =
\frac{1}{n} \sum_{i=1}^n \delta_{y_i}$, $x_i, y_i \in \R$ exploiting \cref{eq:unif_inv}, we get
\begin{equation}
  \begin{aligned} \label{eq:sorted_w}
    d_{W_{p}}(\hat\al, \hat\be)^{p}  &= \int_{0}^1 \left|F_{\hat\alpha}^{[-1]}(t) - F_{\hat\beta}^{[-1]}(t)\right|^p dt \\
    &=  \sum_{k=1}^n \int_{\frac{k-1}{n}}^{k/n} \left|F_{\hat\alpha}^{[-1]}(t) - F_{\hat\beta}^{[-1]}(t)\right|^pdt \\ 
    &=  \sum_{k=1}^n \int_{\frac{k-1}{n}}^{k/n} \left|x_{(k)} - y_{(k)}\right|^pdt \\ 
    &= \frac{1}{n}\sum_{k=1}^n \left|x_{(k)} - y_{(k)}\right|^p
  \end{aligned}
\end{equation}
Now we consider $\displaystyle \hat\alpha =
\frac{1}{n} \sum_{i=1}^n \delta_{x_i}$, $\displaystyle \hat\beta =
\frac{1}{n} \sum_{i=1}^n \delta_{y_i}$, with points in $\R^r$ instead of $\R$.  We recall that for
$\theta \in S^{r-1}$, we denote by $\theta^*$ the map $\theta^*: \R^r \to
\R, x \mapsto \langle x,\theta\rangle$ and by $\theta^*_{\#}$ the associated
push-forward operator\footnote{If $X \sim \al$, $\langle X,
  \theta\rangle \sim \theta^*_{\#}\al$.}. We have $\theta^*_{\#}\alpha = \frac{1}{n}\sum_{i=1}^n
\delta_{\langle x_i, \theta \rangle}$. We denote by $(x_{(i)}^{\theta})_{i=1}^n$ the sorted sequence after projection, i.e. $x_{i}^{\theta} := \langle x_{i}, \theta \rangle$ for all $i=1, \ldots, n$ and 
\begin{equation}
  x_{(1)}^{\theta} \leq \ldots \leq x_{(n)}^{\theta}
\end{equation}
By \cref{eq:sorted_w} the Sliced Wasserstein distance is given by
\begin{equation}
    d_{{\rm SW}_{p}}(\hat\al, \hat\be)^{p} = \frac{1}{n} \sum_{k=1}^n \int_{S^{r-1}} \left| x_{(k)}^{\theta} - y_{(k)}^{\theta} \right|^pd\theta
\end{equation}
which can be approximated as 
\begin{equation}
    d_{{\rm SW}_{p}}(\hat\al, \hat\be)^{p} \approx \frac{1}{nM}\sum_{m=1}^M \sum_{k=1}^n \left|x_{(k)}^{\theta_m} - y_{(k)}^{\theta_m} \right|^p
\end{equation}
where $\theta_m \sim^{i.i.d} \cU(S^{d-1})$.

\paragraph{Unalanced setting.} For two empirical distributions: $\displaystyle \hat\alpha =
\frac{1}{n} \sum_{i=1}^n \delta_{x_i}$, $\displaystyle \hat\beta =
\frac{1}{m} \sum_{i=1}^m \delta_{y_i}$, $x_i, y_i \in \R$, as the functions $F_{\hat\alpha}^{[-1]}$ and $F_{\hat\beta}^{[-1]}$ are piece-wise constant functions on the uniform grid with respectively $n$ and $m$ points, $t \mapsto
\left|F_{\hat\alpha}^{[-1]}(t) - F_{\hat\beta}^{[-1]}(t)\right|$ is piece-wise constant on the intervals defined by the intersection of the two uniform grids and the value it takes on each interval is given by the ordered sequences. The computation of $d_{W_{p}}(\hat\al, \hat\be)^{p}$ and the approximation of $d_{{\rm SW}_{p}}(\hat\al, \hat\be)^{p}$ when we go to $\R^r$ is therefore performed in a fashion similar to the balanced setting. The only additional step is to deal with the intersection of the grids. 

\section{Main proofs of \cref{sec:excess_risk}} \label{sec:a_main} 
\subsection{Discussion around the Hölder assumption} \label{subsec:a_holder}
Assumption \cref{asm:holder_d_subst} might make the reader think that its validity depends on the chosen Hilbertian metric $d$ while it does not. Indeed, as noticed in \citet{szabo2015two} we can write 
\begin{equation*}
  \begin{aligned}
    \|K_{\P} - K_\Q\|_{\cH_K}^2 &= K(\P,\P) + K(\Q,\Q) - 2K(\P,\Q) \\
    &= 2\left(q(0) - q(d(\P,\Q))\right).
  \end{aligned}  
\end{equation*}
Therefore, the Hölder condition on $K$ can be replaced by a regularity condition on $q$ only. There are constants $L>0$ and $h \in (0,1]$ such that
\begin{equation} \label{asm:holder_simple}
    \frac{q(0) - q(x)}{x^{2h}} \leq L, \quad \forall x \in \R_+^*
\end{equation}
which is independent of the underlying distance $d$. For example, it holds with $h=\beta$ for $q:x \mapsto e^{-\gamma x^{2\beta}}$ as shown below in \cref{prop:app_beta_kernel}. Inequality \cref{asm:holder_d_subst} is important as the derivations for the excess risk bound involve multiple times the difference $\|K_{\P} - K_{\hat\P_n}\|_{\cH_K}$ which can be controlled by $d(\P,\hat\P_n)$ thanks to the regularity assumption.

\begin{proposition} \label{prop:app_beta_kernel} For $\gamma > 0$ and $\beta \in [0,1]$, $q: \R_+ \to \R, x \mapsto e^{-\gamma x^{2\beta}}$ satisfies
\begin{equation*} 
    \frac{q(0) - q(x)}{x^{2h}} \leq L, \quad \forall x\in \R_+^*
\end{equation*}
for $h=\beta$ and some $L>0$.
\end{proposition}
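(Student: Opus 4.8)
The plan is to reduce the claimed bound to the single elementary inequality $1 - e^{-t} \le t$, valid for all $t \ge 0$. Since $q(0) = e^{-\gamma \cdot 0} = 1$, upon setting $h = \beta$ the quantity to control becomes
\begin{equation*}
    \frac{q(0) - q(x)}{x^{2h}} = \frac{1 - e^{-\gamma x^{2\beta}}}{x^{2\beta}}, \qquad x \in \R_+^*.
\end{equation*}
First I would dispose of the degenerate exponent $\beta = 0$ separately: there $q$ is constant, so the numerator $q(0) - q(x)$ vanishes identically and the ratio is $0 \le L$ for any $L > 0$. For $\beta \in (0,1]$ I would perform the substitution $t = \gamma x^{2\beta}$, which is a strictly increasing bijection of $(0,\infty)$ onto $(0,\infty)$, rewriting the expression as $\gamma \cdot \tfrac{1 - e^{-t}}{t}$.

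The key step is then the bound $1 - e^{-t} \le t$ for $t \ge 0$: both sides agree at $t = 0$, and the derivative of the right-hand side (namely $1$) dominates that of the left-hand side (namely $e^{-t} \le 1$), so the difference $t - (1 - e^{-t})$ is nondecreasing and hence nonnegative. This yields $\tfrac{1 - e^{-t}}{t} \le 1$ for every $t > 0$, and transporting the inequality back through the bijection gives
\begin{equation*}
    \frac{1 - e^{-\gamma x^{2\beta}}}{x^{2\beta}} \le \gamma \qquad \text{for all } x > 0,
\end{equation*}
so the claim holds with the explicit constant $L = \gamma$.

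There is essentially no serious obstacle here, the argument being a one-line calculus inequality. The only points deserving a moment of care are the treatment of the boundary case $\beta = 0$, where the numerator degenerates, and the verification that $t = \gamma x^{2\beta}$ sweeps out all of $(0,\infty)$ so that the uniform bound over $t > 0$ genuinely transfers back to a uniform bound over all $x > 0$; both are routine.
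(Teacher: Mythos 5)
Your proof is correct, and it takes a genuinely different route from the paper's. The paper argues non-constructively: it observes that $x \mapsto \frac{q(0)-q(x)}{x^{2h}}$ is continuous on $\R_+^*$ and tends to $0$ at $+\infty$, then invokes L'H\^opital's rule to show the limit as $x \to 0^+$ exists when $h = \beta$, from which boundedness follows without producing an explicit constant. You instead reduce everything to the elementary inequality $1 - e^{-t} \le t$ via the substitution $t = \gamma x^{2\beta}$, which yields the explicit value $L = \gamma$ — and in fact the sharp one, since $t \mapsto (1-e^{-t})/t$ is decreasing with limit $1$ at $0^+$ (consistent with the paper's L'H\^opital computation, whose limit at $0^+$ is exactly $\gamma$). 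Your argument is also more careful on the degenerate case $\beta = 0$, where the paper's differentiation step is not meaningful but your case split disposes of it immediately. What the paper's approach buys in exchange is the sharpness remark in $h$: the L'H\^opital computation shows that $h = \beta$ is the \emph{largest} exponent for which the ratio stays bounded near $0$, a point your single-inequality argument does not address — though the statement as posed does not require it. One cosmetic note: your opening line $q(0) = 1$ implicitly assumes $\beta > 0$ (for $\beta = 0$ one has $q(0) = e^{-\gamma}$ under the convention $0^0 = 1$), but since you treat $\beta = 0$ separately with $q$ constant, no harm is done; even under the alternative reading the ratio is the constant $1 - e^{-\gamma} \le \gamma$, so the claim survives either convention.
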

\begin{proof}
$x \to \frac{q(0) - q(x)}{x^{2h}}$ is continuous and goes to $0$ as $x$ goes to $+\infty$. Therefore it remains to show that the limit exists when $x \to 0^{+}$. By L'Hôpital's rule, 
$$\lim _{x \rightarrow 0^+} \frac{q(0) - q(x)}{x^{2h}}=\lim _{x \rightarrow 0^+} \frac{1 - e^{-\gamma x^{2\beta}}}{x^{2h}}=\lim _{x \rightarrow 0^+}   \frac{\gamma\beta x^{2\beta-1} e^{-\gamma x^{2\beta}}}{hx^{2h-1}}$$
The largest $h$ for which the limit exists is $h = \beta$.  
\end{proof}

\subsection{Proof of the main theorem} \label{subsec:a_th_proof}
Before proving the main theorem we introduce the approximations of the integral operator.  
\paragraph{Integral operator approximations.} Given the first and second stage sampling inputs we introduce the operators 
\begin{align*}
    \Phi_{\cD} \colon \R^T & \rightarrow \cH_K \qquad & \Phi_{\hat{\cD}} \colon \R^T & \rightarrow \cH_K \\
    \alpha &\mapsto \sum_{t=1}^T \alpha_t K_{\P_t} \qquad & \alpha &\mapsto \sum_{t=1}^T \alpha_t K_{\hat\P_{t,n}}
\end{align*}
their adjoint operators are the first and second stage sampling operators
\begin{align*}
    \Phi_{\cD}^* \colon \cH_K & \rightarrow \R^T \qquad &\Phi_{\hat{\cD}}^* \colon \cH_K & \rightarrow \R^T \\
    f &\mapsto f(\P_t)_{t=1}^T \qquad &f &\mapsto f(\hat\P_{t,n})_{t=1}^T
\end{align*}
The integral operator is approximated at first and second stage respectively by $L_{K,\cD} := \frac{1}{T} \Phi_{\cD} \circ \Phi_{\cD}^*$ and \\ $L_{K,\hat{\cD}} := \frac{1}{T}\Phi_{\hat\cD} \circ \Phi_{\hat\cD}^*$.
\begin{align*}
    L_{K,\cD} \colon \cH_K& \rightarrow \cH_K \qquad &\L_{K,\hat{\cD}} \colon \cH_K& \rightarrow \cH_K \\
    f &\mapsto \frac{1}{T}\sum_{t=1}^T f(\P_t) K_{\P_t} \qquad & f &\mapsto \frac{1}{T}\sum_{t=1}^T f(\hat\P_{t,n}) K_{\hat\P_{t,n}}
\end{align*}
The introduction of the integral operator and its approximations from finite samples is convenient as we can express the solution of the regularized empirical risk minimization problems as follows, 
\begin{equation} \label{eq:a_explicit}
    f_{\cD,\la} = (L_{K,\cD} + \lambda I)^{-1}\frac{\Phi_{\cD}}{T} y \qquad f_{\hat{\cD},\la} = (L_{K,\hat{\cD}} + \lambda I)^{-1}\frac{\Phi_{\hat{\cD}}}{T} y
\end{equation}

\paragraph{Plan of the proof.}
Our goal is to control $\cE(f_{\hat{\cD},\la}) - \cE(f_{\rho})$ the expected excess risk. But since $\cE$ can be decomposed as
\begin{equation}
    \cE(f) = \|f - f_{\rho}\|_{L^2_{\rho_X}}^2 + \cE(f_{\rho}),
\end{equation}
we will consider the decomposition 
\begin{equation}
\begin{aligned}
  \left(\cE(f_{\hat{\cD},\la}) - \cE(f_{\rho})\right)^{1/2} &= \|f_{\hat{\cD},\la} - f_{\rho}\|_{L_2} \\ &\leq \underbrace{\|f_{\hat{\cD},\la} - f_{\cD,\la}\|_{L^2}}_{=:B} + \underbrace{\|f_{\cD,\la} - f_{\rho}\|_{L_2}}_{=:A}.
\end{aligned}
\end{equation}
Both terms can be written in terms of the integral operator and its approximations as defined previously,
\begin{equation}
\begin{aligned}
  A &= \|f_{\cD,\la} - f_{\rho}\|_{L^2} = \left\|L_K^{1/2}\left((L_{K,\cD} + \lambda I)^{-1}\frac{\Phi_{\cD}}{T} y - (L_{K,\cD} + \lambda I)^{-1}(L_{K,\cD} + \lambda I) f_{\rho}\right)\right\|_{\cH_{K}} \\ &\leq \left\|L_K^{1/2}(L_{K,\cD} + \lambda I)^{-1}\left(\frac{\Phi_{\cD}}{T} y - L_{K,\cD}f_{\rho}\right)\right\|_{\cH_{K}} + \lambda \left\|L_K^{1/2}(L_{K,\cD} + \lambda I)^{-1}L_K^{1/2} g_{\rho}\right\|_{\cH_{K}}.
\end{aligned}
\end{equation}
Where we used the isometry $L_K^{1/2}$ and that since $f_{\rho} \in \cH_K$, $f_{\rho} = L_K^{1/2}(g_{\rho})$ for some $g_{\rho} \in L^2_{\rho_X}$. Those quantities do not depend on the sampling rate of the chosen distance for the kernel. Bounding the expected value of $A$ boils down to control for $\E\left[\left\|(L_{K} + \lambda I) (L_{K,\cD} + \lambda I)^{-1}\right\| \right]$ and $\E\left[\left\|(L_{K} + \lambda I)^{-1}\left(\frac{\Phi_{\cD}}{T}y - L_{K,\cD}f_{\rho}\right)\right\| \right]$, Proposition 5 \citet{fang2020optimal}. This can be obtained through concentration inequalities. Term A is the excess risk for standard Kernel Ridge Regression (1-stage sampling) and we can directly plug known results, for example Lemma 4 \citet{fang2020optimal}:
\begin{equation}
E\left[\left\|f_{\cD, \lambda}-f_{\rho}\right\|_{L^2}\right] \leq C\left\{ \left(\frac{\mathcal{B}_{T, \lambda}}{\sqrt{\lambda}}+1\right)^{2} \frac{A}{\kappa} \mathcal{B}_{T, \lambda}+ \left\|g_{\rho}\right\|_{L_2}\left(\mathcal{B}_{T, \lambda}+\sqrt{\lambda}\right)\right\},
\end{equation}
where $C$ is a constant that does not depend on any parameter, $\mathcal{B}_{T, \lambda}=\frac{2 \kappa}{\sqrt{T}}\left(\frac{\kappa}{\sqrt{T \lambda}}+\sqrt{\mathcal{N}(\lambda)}\right)$, $\mathcal{N}(\lambda)$ is the effective dimension, $\mathcal{N}(\lambda)=\operatorname{Tr}\left(L_{K}\left(L_{K}+\lambda I\right)^{-1}\right)$ $\forall \lambda>0$ and $A$ is such that $|y| \leq A$, $\rho$ a.s.

Term B is the error induced by the fact that we only access the covariates $(\P_t)_{t=1}^T$ through empirical approximations $(\hat\P_{t,n})_{t=1}^T$, this term is specific to distribution regression. We follow a decomposition of $B$ similar to the one used in \cite{fang2020optimal}, but while they focused on MMD-based kernels we adapt the decomposition to incorporate the sample rate of any distance substitution kernels. 

\begin{equation}
\begin{aligned}
  B &= \|f_{\hat{\cD},\la} - f_{\cD,\la}\|_{L^2_{\rho_X}} = \left\|(L_{K,\hat{\cD}} + \lambda I)^{-1}\frac{\Phi_{\hat{\cD}}}{T} y - (L_{K,\cD} + \lambda I)^{-1}\frac{\Phi_{\cD}}{T} y\right\|_{L^2_{\rho_X}} \\&= \left\|(L_{K,\hat{\cD}} + \lambda I)^{-1}\left(\frac{\Phi_{\hat{\cD}}}{T} y - \frac{\Phi_{\cD}}{T} y  \right) + \left((L_{K,\hat{\cD}} + \lambda I)^{-1} - (L_{K,\cD} + \lambda I)^{-1} \right)\frac{\Phi_{\cD}}{T}y\right\|_{L^2_{\rho_X}} \\ &= \left\|(L_{K,\hat{\cD}} + \lambda I)^{-1}\left(\frac{\Phi_{\hat{\cD}}}{T} y - \frac{\Phi_{\cD}}{T} y  \right) + (L_{K,\hat{\cD}} + \lambda I)^{-1}\left(L_{K,\cD} - L_{K,\hat{\cD}}\right)f_{\cD,\la}\right\|_{L^2} \\&\leq \underbrace{\left\|L_K^{1/2}(L_{K,\hat{\cD}} + \lambda I)^{-1}\left(\frac{\Phi_{\hat{\cD}}}{T} y - \frac{\Phi_{\cD}}{T} y  \right)\right\|_{\cH_K}}_{=:C} + \underbrace{\left\|L_K^{1/2}(L_{K,\hat{\cD}} + \lambda I)^{-1}\left(L_{K,\cD} - L_{K,\hat{\cD}}\right)f_{\cD,\la}\right\|_{\cH_{K}}}_{:=D}.
\end{aligned}
\end{equation}

Where we used that for two invertible operators $A^{-1} - B^{-1} = A^{-1}(B-A)B^{-1}$, \cref{eq:a_explicit} and the isometry $L_K^{1/2}$. We aim at bounding $\E \|f_{\hat{\cD},\la} - f_{\cD,\la}\|_{L^2}$ in expectation, and we will see that it involves the sample complexity $d(\P,\hat\P_n)$. Let us start with term $C$, by the Cauchy-Schwarz inequality, 
\begin{equation}
\begin{aligned}
  \E[C] &= \E\left[\left\|L_K^{1/2}(L_{K,\hat{\cD}} + \lambda I)^{-1}\left(\frac{\Phi_{\hat{\cD}}}{T} y - \frac{\Phi_{\cD}}{T} y  \right)\right\|_{\cH_K}\right] \\ &\leq \E\left[\left\|L_K^{1/2}(L_{K,\hat{\cD}} + \lambda I)^{-1}\right\|_{op} \left\|\left(\frac{\Phi_{\hat{\cD}}}{T} y - \frac{\Phi_{\cD}}{T} y  \right)\right\|_{\cH_K}\right] \\ &\leq \left(\E\left[\left\|L_K^{1/2}(L_{K,\hat{\cD}} + \lambda I)^{-1} \right\|_{op}\right]^2\right)^{1/2}\left(\E\left[\left\|\frac{\Phi_{\hat{\cD}}}{T} y - \frac{\Phi_{\cD}}{T} y \right\|_{\cH_K}\right]^2\right)^{1/2},
\end{aligned}
\end{equation}
where $\|\cdot\|_{op}$ denotes the operator norm. For the second term we have, 
\begin{equation}
\begin{aligned}
  \left(\E\left[\left\|\frac{\Phi_{\hat{\cD}}}{T} y - \frac{\Phi_{\cD}}{T} y \right\|_{\cH_K}^2\right]\right)^{1/2} &= \left(\E\left[\left\|\frac{1}{T}\sum_{t=1}^T y_t\left(K(\P_t,\cdot) - K(\hat\P_{t,n},\cdot)\right) \right\|_{\cH_K}\right]^2\right)^{1/2} \\ &\leq \frac{A}{\sqrt{T}} \left(\sum_{t=1}^T \E\left\|K(\P_t,\cdot) - K(\hat\P_{t,n}) \right\|_{\cH_K}^2\right)^{1/2} \\ &\leq \frac{AL}{\sqrt{T}} \left(\sum_{t=1}^T \E\left[d(\P_t,\hat\P_{t,n})^{2h}\right]\right)^{1/2} \\  &\leq \frac{AL}{\sqrt{T}} \left(\sum_{t=1}^T \E\left[d(\P_t,\hat\P_{t,n})^{2}\right]^h\right)^{1/2} \\  &\leq AL\alpha(n)^{h/2}.
\end{aligned}
\end{equation}
where we used the Jensen inequality ($h < 1$), $|y| \leq A$, \cref{asm:holder_d_subst} and the control over the sample complexity of $d$. For the first term, Lemma 8 in \citet{fang2020optimal} shows
\begin{equation}
\begin{aligned}
  \left\|L_K^{1/2}(L_{K,\hat{\cD}} + \lambda I)^{-1} \right\|^2_{op} \leq \frac{3\Sigma_{\cD}^2}{\lambda^2} + \frac{3}{\lambda^3}\|L_{K,\cD} - L_{K,\hat\cD}\|^2_{op} +  \frac{3}{\lambda},
\end{aligned}
\end{equation}
where $\Sigma_{\cD}=\left\|\left(L_{K}+\lambda I\right)^{-\frac{1}{2}}\left(L_{K}-L_{K, \cD}\right)\right\|_{op}$. We now show that the middle term can be controlled with $\alpha(n)$. For all $f \in \cH_K$,
\begin{equation}
\begin{aligned}
  \|(L_{K,\cD} - L_{K,\hat\cD})(f)\|^2_{\cH_K} &= \left\|\frac{1}{T}\sum_{t=1}^T K(\P_t,\cdot)f(\P_t) - K(\hat\P_{t,n},\cdot)f(\hat\P_{t,n}) \right\|_{\cH_K}^2 \\ &\leq \frac{1}{T} \sum_{t=1}^T \left\|K(\P_t,\cdot)f(\P_t) - K(\hat\P_{t,n})f(\hat\P_{t,n}) \right\|_{\cH_K}^2 \\ &\leq \frac{2}{T} \sum_{t=1}^T (f(\P_t) - f(\hat\P_{t,n}))^2\left\|K(\P_t,.)\right\|_{\cH_K}^2 + f(\hat\P_{t,n})^2\left\|K(\P_t,.) -K(\hat\P_{t,n})\right\|_{\cH_K}^2 \\
  &\leq \frac{2\kappa^2}{T} \sum_{t=1}^T (f(\P_t) - f(\hat\P_{t,n}))^2 + \|f\|_{\cH_K}^2\left\|K(\P_t,.) -K(\hat\P_{t,n})\right\|_{\cH_K}^2 \\
  &\leq \frac{4\kappa^2L^2\|f\|_{\cH_K}^2}{T} \sum_{t=1}^T d(\P_t,\hat\P_{t,n})^{2h},
\end{aligned}
\end{equation}
where we used that $K(\P,\P)$ is bounded by $\kappa^2$ for all $\P \in X$ and
\begin{equation}
\begin{aligned}
    f(\hat\P_{t,n})^2 &\leq \|f\|_{\cH_K}^2\left\|K(\hat\P_{t,n},.) \right\|_{\cH_K}^2 \leq \|f\|_{\cH_K}^2\kappa^2, \\
    \left(f(\P_t) - f(\hat\P_{t,n})\right)^2 &\leq \|f\|_{\cH_K}^2\left\|K(\P_t,.) - K(\hat\P_{t,n},.) \right\|_{\cH_K}^2 \leq L^2\|f\|_{\cH_K}^2d(\P_t,\hat\P_{t,n})^{2h},
\end{aligned}
\end{equation}
by the Hölder assumption. Hence,
\begin{equation}
  \|L_{K,\cD} - L_{K,\hat\cD}\|^2_{op} \leq \frac{4\kappa^2L^2}{T} \sum_{t=1}^T d(\P_t,\hat\P_{t,n})^{2h},
\end{equation}
which implies
\begin{equation}
  \E\left[\|L_{K,\cD} - L_{K,\hat\cD}\|^2_{op}\right] \leq \frac{4\kappa^2 L^2}{T}\sum_{t=1}^T \E[d(\P_t,\hat\P_{t,n})^{2h}] \leq 4\kappa^2 L^2 \alpha(n)^{h},
\end{equation}
and finally
\begin{equation}
\begin{aligned}
  \E\left\|L_K^{1/2}(L_{K,\hat{\cD}} + \lambda I)^{-1} \right\|^2_{op} \leq \frac{3\E\left[\Sigma_{\cD}^2\right]}{\lambda^2} + \frac{12}{\lambda^3}L^2\kappa^2\alpha(n)^{h} +  \frac{3}{\lambda}.
\end{aligned}
\end{equation}
Putting the bounds together, we have 
\begin{equation}
    \begin{aligned}
    \E[C] &\leq AL\alpha(n)^{h/2}\sqrt{\frac{3\E[\Sigma_{\cD}^2]}{\lambda^2} + \frac{12}{\lambda^3}L^2\kappa^2\alpha(n)^{h} +  \frac{3}{\lambda}} \\
    &\leq \sqrt{3}AL\alpha(n)^{h/2}\left(\frac{\E[\Sigma_{\cD}^2]^{1/2}}{\lambda} + \frac{2}{\lambda^{3/2}}L\kappa\alpha(n)^{h/2} +  \frac{1}{\sqrt{\lambda}} \right) \\
    &\leq \frac{\sqrt{3}AL\alpha(n)^{h/2}}{\sqrt{\lambda}}\left(\kappa\sqrt{\frac{\mathcal{N}(\lambda)}{T\lambda}} + \frac{2}{\lambda}L\kappa\alpha(n)^{h/2} + 1 \right),
    \end{aligned}
\end{equation}
where we used Lemma 17 in \citet{lin2017distributed}: $\E[\Sigma^2_{\cD}] \leq \frac{\kappa^2\mathcal{N}(\lambda)}{T}$. Now for D, 

\begin{equation}
    \begin{aligned}
        \E[D] &= \left\|L_K^{1/2}(L_{K,\hat{\cD}} + \lambda I)^{-1}\left(L_{K,\cD} - L_{K,\hat{\cD}}\right)f_{\cD,\la}\right\|_{\cH_{K}} \\
        &\leq \E_{\cD}\left[ \E_{\hat\cD \mid \cD}\left\{\left\|L_K^{1/2}(L_{K,\hat{\cD}} + \lambda I)^{-1}\right\|_{op}\left\|L_{K,\cD} - L_{K,\hat{\cD}}\right\|_{op}\right\}\left\|f_{\cD,\la}\right\|_{\cH_{K}}\right] \\
        &\leq \E_{\cD}\left[ \E_{\hat\cD \mid \cD}\left\{\left\|L_K^{1/2}(L_{K,\hat{\cD}} + \lambda I)^{-1}\right\|^2_{op}\right\}^{1/2}\E_{\hat\cD \mid \cD}\left\{\left\|L_{K,\cD} - L_{K,\hat{\cD}}\right\|^2_{op}\right\}^{1/2}\left\|f_{\cD,\la}\right\|_{\cH_{K}}\right].
    \end{aligned}
\end{equation}
All terms have already been dealt with except $\left\|f_{\cD,\la}\right\|_{\cH_{K}}$, 
\begin{equation}
    \begin{aligned}
        \E[D] &\leq \E_{\cD}\left[ \E_{\hat\cD \mid \cD}\left\{\left\|L_K^{1/2}(L_{K,\hat{\cD}} + \lambda I)^{-1}\right\|^2_{op}\right\}^{1/2}2\kappa L \alpha(n)^{h/2}\left\|f_{\cD,\la}\right\|_{\cH_{K}}\right] \\
        &\leq \E_{\cD}\left[ \sqrt{3}\left\{\frac{\Sigma_{\cD}}{\lambda} + \frac{2}{\lambda^{3/2}}L\kappa\alpha(n)^{h/2} +  \frac{1}{\sqrt{\lambda}}\right\}\left(2\kappa L \alpha(n)^{h/2}\right)\left\|f_{\cD,\la}\right\|_{\cH_{K}}\right] \\ 
        &\leq \frac{2\sqrt{3}\kappa L \alpha(n)^{h/2}}{\sqrt{\lambda}} \left(\E_{\cD}\left[\frac{\Sigma_{\cD}^2}{\lambda}\right]^{1/2}\E_{\cD}\left[\left\|f_{\cD,\la}\right\|_{\cH_{K}}^2\right]^{1/2} + \E_{\cD}\left[\left\|f_{\cD,\la}\right\|_{\cH_{K}}\right]\left(\frac{2}{\lambda}L\kappa\alpha(n)^{h/2} +  1 \right) \right) \\
        &\leq \frac{2\sqrt{3}\kappa L \alpha(n)^{h/2}}{\sqrt{\lambda}}\E_{\cD}\left[\left\|f_{\cD,\la}\right\|_{\cH_{K}}^{2}\right]^{1/2} \left(\kappa\sqrt{\frac{\mathcal{N}(\lambda)}{T\lambda}} + \left(\frac{2}{\lambda}L\kappa\alpha(n)^{h/2} +  1 \right) \right).
    \end{aligned}
  \end{equation}
  We conclude with Proposition 7 \citet{fang2020optimal} that states

  \begin{equation}
    \E_{\cD}\left[\left\|f_{\cD,\la}\right\|_{\cH_{K}}^{2}\right]^{1/2} \leq \left(\frac{\mathcal{B}_{T, \lambda}}{\sqrt{\lambda}}+1\right) \frac{25 A}{\kappa} \frac{\mathcal{B}_{T, \lambda}}{\sqrt{\lambda}}+6\left\|g_{\rho}\right\|_{\rho},
  \end{equation}
  hence,
\begin{equation}
  \E[D] \leq \sqrt{3}\left\{\left(\frac{\mathcal{B}_{T, \lambda}}{\sqrt{\lambda}}+1\right) \frac{25 A}{\kappa} \frac{\mathcal{B}_{T, \lambda}}{\sqrt{\lambda}}+6\left\|g_{\rho}\right\|_{\rho}\right\}\left(\frac{\kappa \sqrt{\mathcal{N}(\lambda)}}{\sqrt{T \lambda}}+\frac{2\kappa L}{\lambda}\alpha(n)^{h/2}+1\right)\frac{2\kappa L}{\sqrt{\lambda}}\alpha(n)^{h/2}.
\end{equation}
Putting C and D together, we get
\begin{equation}
\begin{aligned}
  \E[B] &\leq \E[C] + \E[D]  \\
  &\leq  \frac{\sqrt{3}L\alpha(n)^{h/2}}{\sqrt{\lambda}}\left(\kappa\sqrt{\frac{\mathcal{N}(\lambda)}{T\lambda}} + \frac{2}{\lambda}L\kappa\alpha(n)^{h/2} +  1 \right)\left[A + 2\kappa\left\{\left(\frac{\mathcal{B}_{T, \lambda}}{\sqrt{\lambda}}+1\right) \frac{25 A}{\kappa} \frac{\mathcal{B}_{T, \lambda}}{\sqrt{\lambda}}+6\left\|g_{\rho}\right\|_{\rho}\right\} \right] \\
  &\leq  \frac{\sqrt{3}L\alpha(n)^{h/2}}{\sqrt{\lambda}}\left(\kappa\sqrt{\frac{\mathcal{N}(\lambda)}{T\lambda}} + \frac{2}{\lambda}L\kappa\alpha(n)^{h/2} +  1 \right)\left[A + 12\kappa\left\|g_{\rho}\right\|_{\rho} + 50 A \left(\frac{\mathcal{B}_{T, \lambda}}{\sqrt{\lambda}}+1\right)\frac{\mathcal{B}_{T, \lambda}}{\sqrt{\lambda}} \right].
\end{aligned}
\end{equation}

\subsection{Sample complexity MMD}  \label{subsec:a_sample_mmd}
\begin{proposition} 
If $K_{\Omega}$ is bounded, $\sup_{x \in \Omega}K_{\Omega}(x, x) \leq B$ then $d_{\rm MMD}(\P, \Q) = \|\mu_{K_{\Omega}}(\P) - \mu_{K_{\Omega}}(\Q))\|_{\cH_{\Omega}}$ is such that
\begin{equation*}
\E[d_{\rm MMD}(\P, \hat\P_n)^2] \leq \frac{B}{n},
\end{equation*}
where $\P \in X$, $\hat\P_n = \frac{1}{n} \sum_{i=1}^n \delta_{x_i}$, $x_i \sim^{i.i.d} \P$.
\end{proposition}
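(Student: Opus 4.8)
The plan is to recognize the kernel mean embedding of the empirical measure as a sample average of i.i.d.\ Hilbert-space valued random elements, and then to exploit the orthogonality of centered independent terms. Set $\xi_i := K_{\Omega}(x_i,\cdot)\in\cH_{\Omega}$ for $i=1,\dots,n$, where $x_i\sim^{i.i.d.}\P$. By the reproducing property $\|\xi_i\|_{\cH_{\Omega}}^2 = K_{\Omega}(x_i,x_i)\le B$, so the $\xi_i$ are bounded and in particular Bochner-integrable; this justifies $\mu_{K_{\Omega}}(\P)=\E_{x\sim\P}[K_{\Omega}(x,\cdot)]=\E[\xi_1]$ and, by definition of $\hat\P_n$, $\mu_{K_{\Omega}}(\hat\P_n)=\frac1n\sum_{i=1}^n\xi_i$. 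Consequently
\begin{equation*}
  d_{\rm MMD}(\P,\hat\P_n)^2 = \left\| \frac1n\sum_{i=1}^n\big(\xi_i-\E[\xi_i]\big) \right\|_{\cH_{\Omega}}^2 ,
\end{equation*}
i.e.\ the squared MMD is the squared norm of an average of $n$ centered i.i.d.\ terms.

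Next I would expand this squared norm into a double sum of inner products and take expectations. Because the $x_i$ are independent, for $i\ne j$ the expectation of the cross term factorizes as $\E\langle \xi_i-\E\xi_i,\ \xi_j-\E\xi_j\rangle_{\cH_{\Omega}} = \big\langle \E[\xi_i-\E\xi_i],\ \E[\xi_j-\E\xi_j]\big\rangle_{\cH_{\Omega}} = 0$, since each centered factor has zero mean. Hence only the $n$ diagonal terms survive:
\begin{equation*}
  \E\big[d_{\rm MMD}(\P,\hat\P_n)^2\big] = \frac1{n^2}\sum_{i=1}^n \E\big[\|\xi_i-\E\xi_i\|_{\cH_{\Omega}}^2\big] = \frac1n\,\E\big[\|\xi_1-\E\xi_1\|_{\cH_{\Omega}}^2\big].
\end{equation*}

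Finally I would discard the nonnegative squared-mean term, bounding the variance by the second moment: $\E\|\xi_1-\E\xi_1\|_{\cH_{\Omega}}^2 = \E\|\xi_1\|_{\cH_{\Omega}}^2 - \|\E\xi_1\|_{\cH_{\Omega}}^2 \le \E\|\xi_1\|_{\cH_{\Omega}}^2 = \E[K_{\Omega}(x_1,x_1)] \le B$, which yields the claim $\E[d_{\rm MMD}(\P,\hat\P_n)^2]\le B/n$. The argument is essentially routine; the only point requiring care is the justification that expectation commutes with the inner product (Bochner integrability), which is precisely where the boundedness hypothesis $\sup_{x}K_{\Omega}(x,x)\le B$ enters, both to guarantee integrability and to furnish the final uniform bound.
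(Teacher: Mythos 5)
Your proof is correct and follows essentially the same route as the paper's: both write $\mu_{K_\Omega}(\hat\P_n)-\mu_{K_\Omega}(\P)$ as an average of $n$ centered i.i.d.\ feature-map terms $\psi(x_i)-\mu_{K_\Omega}(\P)$, kill the cross terms by independence, and bound the surviving variance term by $\E[K_\Omega(x,x)]\leq B$. Your explicit remark on Bochner integrability is a small extra point of rigor the paper leaves implicit, but the argument is otherwise identical.
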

\begin{proof}
We denote by $\psi$ the canonical feature map of the kernel $K_{\Omega}$: $K_{\Omega}(x,y) = \langle \psi(x), \psi(y) \rangle_{\cH_{\Omega}}$. Then, 
\begin{equation*}
    \begin{aligned}
        \E[d_{\rm MMD}(\P, \hat\P_n)^2] &= \E\left[\left\|\mu_{K_{\Omega}}(\P) -  \mu_{K_{\Omega}}(\hat\P_n)\right\|_{\cH_{\Omega}}^2\right] \\ 
        &= \E\left[\left\|\frac{1}{n} \sum_{i=1}^n \left(\psi(X_i) -  \mu_{K_{\Omega}}(\P)\right) \right\|_{\cH_{\Omega}}^2\right] \\
        &= \frac{1}{n^2}\E\left[\sum_{i=1}^n \left\|\psi(X_i) -  \mu_{K_{\Omega}}(\P) \right\|_{\cH_{\Omega}}^2 + 2\sum_{1 \leq i < j \leq n} \langle \psi(X_i) -  \mu_{K_{\Omega}}(\P), \psi(X_j) -  \mu_{K_{\Omega}}(\P) \rangle_{\cH_{\Omega}}\right] \\
        &= \frac{1}{n}\E_{X \sim \P}\left[\left\|\psi(X) -  \mu_{K_{\Omega}}(\P) \right\|_{\cH_{\Omega}}^2 \right] \\ 
        &= \frac{1}{n}\left\{\E_{X \sim \P}\left[\left\|\psi(X)\right\|_{\cH_{\Omega}}^2\right] + \left\|\mu_{K_{\Omega}}(\P) \right\|_{\cH_{\Omega}}^2 - 2 \E_{X \sim \P}\left[\langle \psi(X), \mu_{K_{\Omega}}(\P) \rangle_{\cH_{\Omega}} \right]\right\} \\ 
        &= \frac{1}{n}\left\{\E_{X \sim \P}[K_{\Omega}(X,X)] - \left\|\mu_{K_{\Omega}}(\P) \right\|_{\cH_{\Omega}}^2  \right\} \\
        &\leq \frac{1}{n}\E_{X \sim \P}[K_{\Omega}(X,X)] \\
        &\leq \frac{B}{n}.
    \end{aligned}
\end{equation*}
Where we use the fact that the points are i.i.d, that
\begin{equation*}
        \E_{X \sim \P}\left[\langle \psi(X), \mu_{K_{\Omega}}(\P) \rangle_{\cH_{\Omega}} \right] = \E_{X \sim \P}\left[\mu_{K_{\Omega}}(\P)(X)\right] = \left\|\mu_{K_{\Omega}}(\P) \right\|_{\cH_{\Omega}}^2,
\end{equation*}
and that for $X$ independent of $Y$ following $\P$, 
\begin{equation*}
        \E_{X,Y}\langle \psi(X) -  \mu_{K_{\Omega}}(\P), \psi(Y) -  \mu_{K_{\Omega}}(\P) \rangle_{\cH_{\Omega}} = \langle \E_{X}\left[\psi(X)\right] -  \mu_{K_{\Omega}}(\P), \E_{Y}\left[\psi(Y)\right] -  \mu_{K_{\Omega}}(\P) \rangle_{\cH_{\Omega}} = 0.
\end{equation*}
\end{proof}
It is possible to obtain a bound in probability instead of expectation using McDiarmid's inequality Theorem 8 \cite{gretton2012kernel}. In \citet{fang2020optimal} they convert the bound in probability to a bound in expectation using $\E[X] = \int_{0}^{+\infty} \P(X > t)dt$ which leads to the less optimal bound
\begin{equation}
    \E\left[d_{\rm MMD}(\P, \hat\P_n)^{2}\right] \leq 2(2 + \sqrt{\pi})\frac{B}{n}
\end{equation}

\subsection{Proof of \cref{cor:simplified_bound}} \label{subsec:a_simplified_bound}
In this section $\vee$ denotes the maximum and $\wedge$ the minimum between two elements.

\begin{corollary} 
    Under the assumptions of \cref{th:main}, for $\alpha(n) = c n^{-\beta}$, $\beta, c > 0$, taking $\lambda = \frac{1}{\sqrt{T}} \vee \frac{1}{n^{h\beta/2}}$, the bound can be simplified as 
    \begin{equation*}
        \E\|f_{\hat{\cD},\lambda} - f_{\rho}\|_{L^2_{\rho_X}} 
        \leq C\left(\frac{1}{\sqrt[4]{T}} + \frac{1}{n^{h\beta/4}} \right)\left(\left\|f_{\rho}\right\|_{\cH_K} + 1\right)
    \end{equation*}
    where $C$ is a constant that depends only on $M$, $\kappa$ and $L$.
\end{corollary}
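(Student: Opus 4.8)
The plan is to substitute the prescribed $\alpha(n) = c\,n^{-\beta}$ and $\lambda = T^{-1/2} \vee n^{-h\beta/2}$ into the bound of \cref{th:main} and to show that, with these choices, every composite factor appearing there collapses to either an absolute constant or one of the two rate-carrying terms $T^{-1/4}$ and $n^{-h\beta/4}$. The two inequalities that drive the entire computation are $\lambda \geq T^{-1/2}$ and $\lambda \geq n^{-h\beta/2}$, which hold by definition of the maximum.

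First I would record the worst-case control on the effective dimension, $\mathcal{N}(\lambda) \leq \operatorname{Tr}(L_K)/\lambda \leq \kappa^2/\lambda$, already noted before the corollary. Substituting this into $\mathcal{B}_{T,\lambda} = \frac{2\kappa}{\sqrt{T}}\big(\frac{\kappa}{\sqrt{T\lambda}} + \sqrt{\mathcal{N}(\lambda)}\big)$ gives $\mathcal{B}_{T,\lambda} = O\big((T\lambda)^{-1/2}\big)$. Using $\lambda \geq T^{-1/2}$, one then checks the reductions $\mathcal{B}_{T,\lambda} = O(T^{-1/4})$, $\frac{\mathcal{B}_{T,\lambda}}{\sqrt{\lambda}} = O\big((\sqrt{T}\lambda)^{-1}\big) = O(1)$, $\frac{\mathcal{B}_{T,\lambda}^2}{\lambda} = O\big((T\lambda^2)^{-1}\big) = O(1)$, and $\sqrt{\mathcal{N}(\lambda)/(T\lambda)} = O\big((\sqrt{T}\lambda)^{-1}\big) = O(1)$, each time because the relevant power of $\sqrt{T}\lambda$ is bounded below by $1$. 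Using instead $\lambda \geq n^{-h\beta/2}$ gives the remaining reduction $\frac{\alpha(n)^{h/2}}{\lambda} = \frac{c^{h/2} n^{-h\beta/2}}{\lambda} \leq c^{h/2} = O(1)$.

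With these in hand the three groups of \cref{th:main} simplify: the leading factor $\kappa\|f_\rho\|_{\cH_K} + \frac{A\mathcal{B}_{T,\lambda}^2}{\lambda} + \frac{A\mathcal{B}_{T,\lambda}}{\sqrt{\lambda}}$ is $O(\|f_\rho\|_{\cH_K}+1)$ and the trailing factor $\sqrt{\kappa^2 \mathcal{N}(\lambda)/(T\lambda)} + \frac{L\kappa\alpha(n)^{h/2}}{\lambda} + 1$ is $O(1)$; the second group $\big(\frac{\mathcal{B}_{T,\lambda}}{\sqrt{\lambda}}+1\big)^2\frac{A}{\kappa}\mathcal{B}_{T,\lambda}$ is $O(T^{-1/4})$; and the third group $(\mathcal{B}_{T,\lambda}+\sqrt{\lambda})\|f_\rho\|_{\cH_K}$ is $O\big((T^{-1/4} + n^{-h\beta/4})\|f_\rho\|_{\cH_K}\big)$ since $\sqrt{\lambda} = T^{-1/4} \vee n^{-h\beta/4}$.

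The only genuinely delicate step is the middle factor $\frac{L\alpha(n)^{h/2}}{\sqrt{\lambda}}$ of the first group, which is where the second-stage rate $n^{-h\beta/4}$ must emerge. Writing $a = T^{-1/2}$ and $b = n^{-h\beta/2}$, so that $\alpha(n)^{h/2} = c^{h/2} b$ and $\sqrt{\lambda} = \sqrt{a \vee b}$, I would prove the uniform estimate $\frac{b}{\sqrt{a\vee b}} \leq \sqrt{b}$: when $b \geq a$ it is the identity $b/\sqrt{b} = \sqrt{b}$, and when $b < a$ it follows from $b/\sqrt{a} \leq b/\sqrt{b} = \sqrt{b}$. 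Hence $\frac{\alpha(n)^{h/2}}{\sqrt{\lambda}} \leq c^{h/2}\sqrt{b} = c^{h/2} n^{-h\beta/4}$ whichever term attains the maximum, so the first group is $O\big((\|f_\rho\|_{\cH_K}+1)\,n^{-h\beta/4}\big)$. Summing the three groups and absorbing all numerical constants (depending only on $A$, $\kappa$, $L$ and $c$) into a single $C$ yields the claimed $C(T^{-1/4} + n^{-h\beta/4})(\|f_\rho\|_{\cH_K}+1)$. The main obstacle is thus essentially bookkeeping — tracking which of the many factors stay bounded and which carry a rate — with the elementary inequality $\frac{b}{\sqrt{a\vee b}} \leq \sqrt{b}$ being the one place a short argument is required.
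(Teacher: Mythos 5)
Your proof is correct and follows essentially the same route as the paper's: both substitute $\mathcal{N}(\lambda)\leq \kappa^2/\lambda$ into $\mathcal{B}_{T,\lambda}$ and exploit $\lambda^{-1}=\sqrt{T}\wedge n^{h\beta/2}$ to reduce each composite factor to $O(1)$, $O(T^{-1/4})$ or $O(n^{-h\beta/4})$, with your key inequality $b/\sqrt{a\vee b}\leq\sqrt{b}$ being precisely the paper's step $\frac{1}{\sqrt{\lambda}\,n^{h\beta/2}}\leq n^{-h\beta/4}$. The only cosmetic difference is that you make explicit the dependence of $C$ on the constant $c$, which the paper silently absorbs.
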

\begin{proof}
    The bound $\mathcal{N}(\lambda)\leq \kappa^2/\lambda$ allows to bound $\frac{\mathcal{B}_{T, \lambda}}{\sqrt{\lambda}} \leq \frac{2\kappa^2}{\lambda}\left(\frac{1}{T} + \frac{1}{\sqrt{T}}\right) \leq \frac{4\kappa^2}{\sqrt{T}\lambda}$. Adding $\alpha(n) = cn^{-\beta}$ into the bound leads to the result. 
    \begin{equation*}
        \begin{aligned}
        \E\|f_{\hat{\cD},\lambda} - f_{\rho}\|_{L^2_{\rho_X}} &\leq  C\bigg\{\frac{1}{n^{h\beta/2}\lambda^{1/2}}\left(\frac{1}{\lambda}\left(\frac{1}{T^{1/2}} + \frac{1}{n^{h\beta/2}}\right) +  1 \right)\left(\left\|f_{\rho}\right\|_{\cH_K} + \left(\frac{1}{\sqrt{T}\lambda}+1\right)\frac{1}{\sqrt{T}\lambda} \right) \\ &+\left(\frac{1}{\sqrt{T}\lambda}+1\right)^{2} \frac{1}{\sqrt{T\lambda}} + \sqrt{\lambda}\left(\frac{1}{\sqrt{T}\lambda}+1\right)\left\|f_{\rho}\right\|_{\cH_K}\bigg\}
        \end{aligned}
    \end{equation*}
    Note that $\lambda^{-1} = \sqrt{T} \wedge n^{h\beta/2} \leq \sqrt{T}$ thus $\frac{1}{\sqrt{T}\lambda} \leq 1$, $\frac{1}{\sqrt{T\lambda}} \leq T^{-1/4}$, furthermore $\sqrt{\lambda} = T^{-1/4} \vee n^{-h\beta/4} \leq T^{-1/4} + n^{-h\beta/4}$ hence
    \begin{equation*}
        \E\|f_{\hat{\cD},\lambda} - f_{\rho}\|_{L^2_{\rho_X}} \leq  C\bigg\{\frac{1}{n^{h\beta/2}\lambda^{1/2}}\left(\frac{1}{\lambda n^{h\beta/2}} +  1 \right)\left(\left\|f_{\rho}\right\|_{\cH_K} + 1 \right) + \frac{1}{\sqrt[4]{T}} + \left(\frac{1}{\sqrt[4]{T}} + \frac{1}{n^{h\beta/4}} \right)\left\|f_{\rho}\right\|_{\cH_K}\bigg\}
    \end{equation*}
    We also have  $\lambda^{-1} = \sqrt{T} \wedge n^{h\beta/2} \leq n^{h\beta/2}$ thus $\frac{1}{\lambda n^{h\beta/2}} \leq 1$, $\frac{1}{\sqrt{\lambda} n^{h\beta/2}} \leq \frac{1}{n^{h\beta/4}}$, hence
    \begin{equation*}
        \begin{aligned}
        \E\|f_{\hat{\cD},\lambda} - f_{\rho}\|_{L^2_{\rho_X}} &\leq C\bigg\{\frac{1}{n^{h\beta/4}}\left(\left\|f_{\rho}\right\|_{\cH_K} + 1 \right) +\frac{1}{\sqrt[4]{T}} + \left(\frac{1}{\sqrt[4]{T}} + \frac{1}{n^{h\beta/4}} \right)\left\|f_{\rho}\right\|_{\cH_K}\bigg\} \\ 
        &\leq C\left(\frac{1}{\sqrt[4]{T}} + \frac{1}{n^{h\beta/4}} \right)\left(\left\|f_{\rho}\right\|_{\cH_K} + 1\right)
        \end{aligned}
    \end{equation*}
    which concludes the proof. The notation $C$ absorbs all the terms independent of $n$, $T$ and $\lambda$ throughout the proof. 
\end{proof}

\section{Experimental details} \label{sec:exp_details}
The hyperparameters tested on both experiments are as follows.
\begin{itemize}
\item Regularization for the Kernel Ridge regression ($\lambda$): 25 points between $10^{-8}$ and $100$ in logarithmic scale;
\item $\gamma$ for the Euclidean Gaussian  kernel: 14 points between $10^{-3}$ and $1$ in logarithmic scale;
\item $\gamma$ for the \emph{inner} Gaussian MMD kernel: 14 points between $10^{-6}$ and $100$ in logarithmic scale;
\item $\gamma$ for the \emph{outer} Gaussian MMD kernel: 7 points between $10^{-3}$ and $100$ in logarithmic scale;
\item $\gamma$ for the Gaussian Sliced Wasserstein kernels: 14 points between $10^{-5}$ and $100$ in logarithmic scale;
\end{itemize}

\clearpage
\section{Additional experiments: Hellinger and total variation distances} \label{sec:additional_exp}

For two probability measures $\P$ and $\Q$ both absolutely continuous with respect to a third probability measure $\mu$, the Hellinger distance between $\P$ and $\Q$ is $$d_H\left(\P, \Q\right)^2 = \frac{1}{2}\int \left(\sqrt{\frac{d\P}{d\mu}} - \sqrt{\frac{d\Q}{d\mu}}\right)^2d\mu.$$
It can be shown that this definition does not depend on $\mu$, so the Hellinger distance between $\P$ and $\Q$ does not change if $\mu$ is replaced with a different probability measure with respect to which both $\P$ and $\Q$ are absolutely continuous. The Hellinger distance is bounded by 1 and it is not difficult to show that for empirical distributions with disjoint supports it is always equal to 1. This explain why we cannot use the Hellinger distance on the Synthetic Mode classification experiment without a density estimation step as the raw empirical distributions built from the mixtures will have disjoint supports with probability 1. For two empirical distributions with the same support (as it is the case in the MNIST classification experiment), the Hellinger distance between $\displaystyle \hat\P = \sum_{i=1}^n\alpha_i \delta_{x_i}$, and $\displaystyle \hat\Q = \sum_{i=1}^n \beta_i \delta_{x_i}$, $x_i \in \R^r$ is $$d_H\left(\hat\P, \hat\Q\right)^2 = \frac{1}{2}\sum_{i=1}^n \left(\sqrt{\alpha_i} - \sqrt{\beta_i}\right)^2 = \frac{1}{2}\left\|\sqrt{\alpha} - \sqrt{\beta}\right\|^2_2,$$ where the square root is applied component-wise. Similarly, the total variation distance is given by $$d_{TV}\left(\hat\P, \hat\Q\right) = \frac{1}{2}\left\|\alpha - \beta\right\|_1.$$ Both $d_H$ and $\sqrt{d_{TV}}$ are Hilbertian, we can therefore plug them in a Gaussian kernel to perform KRR. \cref{table-mnist-appendix} extends the results of \cref{table-mnist} to those two distances. The setting of the experiment is the same as in the main text. Both $d_H$ and $\sqrt{d_{TV}}$ do not incorporate information about differences in the support of the distributions. Therefore, they can perform well when the images are well aligned (no rotation or translation) but their performance degrades significantly when the images are perturbed. This is similar to the performance of the RBF kernel. 

\begin{table}[h]
\caption{Mean accuracy and standard deviation on rotated MNIST and Fashion MNIST with maximum rotation $\theta_\vee = 0, \pi/12, \pi/6$ respectively. Comparison between KRR estimators with standard Gaussian kernel (RBF), doubly Gaussian MMD kernel (MMD) and Gaussian kernels with the distances ${\rm SW}_2$, ${\rm SW}_1$, Hellinger and total variation. 
}
\label{table-mnist-appendix}
\vskip 0.15in
\begin{center}
\begin{sc}
\begin{tabular}{lcccccc}
  \toprule
   MNIST & RBF & MMD  & ${\rm SW}_2$ & ${\rm SW}_1$ & $TV$ & Hellinger  \\
  \midrule
  Raw & $0.9 ~(0.01) $ & $0.79 ~(0.03)$ & $\mathbf{0.93 ~(0.01)}$ & $0.91 ~(0.03)$ & $0.87 ~(0.01)$ & $0.92 ~(0.01)$ \\
  $\theta_{\vee}=\frac{\pi}{12}$ & $0.51 ~(0.03)$ & $0.40 ~(0.05)$ & $\mathbf{0.85 ~(0.02)}$ & $0.66 ~(0.02)$ & $0.44 ~(0.02)$ & $0.58 ~(0.02)$   \\
  $\theta_{\vee}=\frac{\pi}{6}$ & $0.47 ~(0.03)$ & $0.34 ~(0.04)$ & $\mathbf{0.82 ~(0.01)}$ & $0.61 ~(0.01)$ & $0.4 ~(0.03)$ & $0.55 ~(0.02)$ \\
  \bottomrule
\end{tabular}
\end{sc}
\end{center}
\begin{center}
\begin{sc}
\begin{tabular}{lcccccc}
  \toprule
  Fashion & RBF & MMD & ${\rm SW}_2$ & ${\rm SW}_1$ & $TV$ & Hellinger \\
  \midrule
  Raw & $0.82~( 0.01)$ & $0.75~(0.01)$ & $0.81~(0.01)$ & $0.77~(0.01)$ & $0.83~( 0.01)$ & $\mathbf{0.84~( 0.02)}$ \\
  $\theta_{\vee}=\frac{\pi}{12}$ & $0.66~(0.01)$ & $0.48~(0.03)$ & $\mathbf{0.74~(0.01)}$ & $0.60~(0.02)$ & $0.67~(0.02)$ & $0.67~(0.02)$ \\
  $\theta_{\vee}=\frac{\pi}{6}$ & $0.64~(0.01)$ & $0.45~ (0.01$) & $\mathbf{0.70~(0.01)}$ & $0.58~(0.01$) & $0.65~(0.02$) & $0.65~(0.02$) \\
  \bottomrule
\end{tabular}
\end{sc}
\end{center}
\vskip -0.1in
\end{table}
\end{document}